\newtheorem{theorem}{Theorem}[section]
\newtheorem{lemma}[theorem]{Lemma}
\newtheorem{informal theorem}[theorem]{Theorem (informal statement)}
\newtheorem{proposition}[theorem]{Proposition}
\newtheorem{claim}[theorem]{Claim}
\newtheorem{fact}[theorem]{Fact}
\newtheorem{remark}[theorem]{Remark}
\theoremstyle{definition}
\newtheorem{definition}[theorem]{Definition}
\newcommand{\eqdef}{\stackrel{{\mathrm {\footnotesize def}}}{=}}
\newcommand{\bx}{\mathbf{x}}
\newcommand{\by}{\mathbf{y}}
\newcommand{\bz}{\mathbf{z}}
\newcommand{\bu}{\mathbf{u}}
\newcommand{\bv}{\mathbf{v}}
\newcommand{\bw}{\mathbf{w}}
\newcommand{\bp}{\mathbf{p}}
\newcommand{\bI}{\mathbf{I}}
\newcommand{\bU}{\mathbf{U}}
\newcommand{\bV}{\mathbf{V}}
\newcommand{\B}{\mathbb{B}}
\newcommand{\vc}{\mathrm{VC}}
\newcommand{\A}{\mathcal{A}}
\renewcommand{\P}{\mathcal{P}}
\renewcommand{\B}{\mathcal{B}}
\newcommand{\C}{\mathcal{C}}
\renewcommand{\S}{\mathbf{S}}
\renewcommand{\L}{\mathcal{L}}
\newcommand{\T}{\mathbf{T}}
\newcommand{\U}{\mathcal{U}}
\newcommand{\V}{\mathcal{V}}
\newcommand{\I}{\mathbb{I}}
\newcommand{\p}{\mathbf{P}}
\newcommand{\R}{\mathbb{R}}
\newcommand{\Z}{\mathbb{Z}}
\newcommand{\N}{\mathbb{N}}
\newcommand{\E}{\mathbf{E}}
\newcommand{\eps}{\epsilon}
\newcommand{\dtv}{d_{\mathrm{TV}}}
\newcommand{\pr}{\mathbf{Pr}}
\renewcommand{\Pr}{\mathbf{Pr}}
\newcommand{\poly}{\mathrm{poly}}
\newcommand{\sgn}{\mathrm{sign}}
\newcommand{\sign}{\mathrm{sign}}
\newcommand{\D}{\mathcal{D}}
\newcommand{\bb}{\mathbf{b}}
\newcommand{\wt}{\widetilde}
\newcommand{\wh}{\widehat}
\author{
Ilias Diakonikolas\thanks{Supported by NSF Medium Award CCF-2107079,
NSF Award CCF-1652862 (CAREER), a Sloan Research Fellowship, and
a DARPA Learning with Less Labels (LwLL) grant.}\\
University of Wisconsin-Madison\\
{\tt ilias@cs.wisc.edu}\\
\and
Daniel M. Kane\thanks{Supported by NSF Medium Award CCF-2107547,
NSF Award CCF-1553288 (CAREER), a Sloan Research Fellowship, and a grant from CasperLabs.}\\
University of California, San Diego\\
{\tt dakane@cs.ucsd.edu}\\
\and
Yuxin Sun\thanks{Supported by NSF Award CCF-1652862 (CAREER) and NSF Medium Award CCF-2107547.}\\
University of Wisconsin-Madison\\
{\tt yxsun@cs.wisc.edu}\\
}
\title{SQ Lower Bounds for Learning Mixtures of Linear Classifiers}
\begin{document}

\maketitle

\begin{abstract}
We study the problem of learning mixtures of linear 
classifiers under Gaussian 
covariates. Given sample access to a mixture of 
$r$ distributions on $\R^n$ of the form $(\bx,y_{\ell})$, $\ell 
\in [r]$, where $\bx\sim\mathcal{N}(0,\mathbf{I}_n)$ and 
$y_\ell=\sgn(\langle\bv_\ell,\bx\rangle)$ for an unknown unit vector $\bv_\ell$, the goal is to learn the underlying 
distribution in total variation distance. Our main result is a 
Statistical Query (SQ) lower bound suggesting that known 
algorithms for this problem are essentially best 
possible, even for the special case of uniform mixtures.
In particular, we show that 
the complexity of any SQ algorithm for the problem is 
$n^{\poly(1/\Delta) \log(r)}$, where $\Delta$ is a lower bound on the pairwise $\ell_2$-separation between the $\bv_\ell$'s. The key technical 
ingredient underlying our result is a new construction of 
spherical designs that may be of independent 
interest. 
\end{abstract}

\setcounter{page}{0}

\thispagestyle{empty}

\newpage

\section{Introduction}\label{sec:intro}

The motivation behind this work is to understand the 
computational complexity of 
learning high-dimensional latent variable (aka mixture) models. 
The task of learning various mixture models has a long history 
in statistics with an early focus on sample efficiency, starting with the pioneering work of Karl Pearson~\cite{Pearson:94} on learning Gaussian mixtures.
During the past decades, an extensive line of work in machine learning and 
theoretical computer science has made significant progress on the 
computational aspects of this general question for a range of mixture models, including mixtures of high-dimensional Gaussians~\cite{Dasgupta:99, AroraKannan:01, VempalaWang:02, AchlioptasMcSherry:05, KSV08, BV:08, MoitraValiant:10, RV17-mixtures, HL18-sos, DKS18-list, DHKK20, LM20-gmm, BD+20-gmm, DKKLT21}, 
mixtures of linear regressions~\cite{sedghi2016provable, li2018learning, CLS20, DK20-ag}, 
and more generally mixtures of experts~\cite{jordan1994hierarchical, xu2009combining, masoudnia2014mixture,makkuva2019breaking, ho2022convergence}.

In this paper, we focus on mixtures of {\em linear classifiers}, a classical 
supervised probabilistic model that has been intensely investigated 
from both statistical and 
algorithmic standpoints~\cite{BlumC92, sun2014learning, li2017non, 
gandikota2020recovery, chen2022algorithms}. 
A linear classifier (or 
halfspace) is any Boolean function $h: \R^n \to \{ \pm 1 \}$ of the form
$h(\bx) = \sgn(\langle\bv,\bx \rangle)$, where $\bv \in \R^n$ is known as the weight 
vector and the univariate function $\sgn$ is defined by $\sgn(t) = 1$ for 
$t \geq 0$ and $\sgn(t) = -1$ otherwise. 
For an integer $r \geq 2$, we can now formally describe an 
$r$-mixture of linear classifiers. 
The parameters of the model contain $r$ unknown positive weights 
$w_1,\ldots,w_r$ with $\sum_{\ell=1}^rw_\ell=1$,
and $r$ unknown unit vectors $\bv_1,\ldots,\bv_r \in \R^n$.
A random sample is drawn from the underlying distribution 
$D(\bx,y)$ as follows: 
the sample oracle selects the index $\ell\in[r]$ with probability $w_\ell$,
and we then receive a random point 
$(\bx,y) \in \R^{n} \times \{ \pm 1\} $ 
where $\bx\sim\mathcal{N}(0,\mathbf{I}_n)$ and 
$y_\ell=\sgn(\langle\bv_\ell,\bx\rangle)$.
The goal is to approximately estimate the  
model either by learning the underlying distribution $D(\bx,y)$ in total variation distance (density estimation), or by approximately recovering the hidden parameters, i.e., $w_{\ell}$ and $\bv_\ell$, $\ell\in[r]$ (parameter estimation).
An algorithm for parameter estimation can be used
for density estimation (because closeness in parameters can be shown 
to imply closeness in total variation distance). In that sense,
parameter estimation is a harder problem. 

Before we proceed to describe prior algorithmic work, we provide
basic background on the sample complexity of the problem.
We start by noting that density estimation for $r$-mixtures of linear 
classifiers on $\R^n$ is information-theoretically solvable using 
$\poly(n, r)$ samples (with the optimal bound being $\tilde{\Theta}
(n r/\eps^2)$ for total variation error $\eps$) without any 
assumptions on the components. 
In contrast, for parameter 
estimation to be  information-theoretically solvable with polynomial sample complexity, 
some further assumptions are needed. 
The typical assumption involves some kind of pairwise separation
between the component vectors and a lower bound on the mixing weights. 
Let $\Delta \eqdef \min_{i \neq j}\{\|\bv_i - \bv_j\|_2, \|\bv_i+\bv_j\|_2\} >0$ be the 
pairwise separation between the components 
and $w_{\min}$ be the minimum mixing weight.  
Under these assumptions, the parameter estimation problem 
is solvable using $\poly(n, r, 1/\Delta, 1/w_{\min}, 1/\eps)$ samples 
to achieve parameter error of $\eps$. In both cases, the term ``information-theoretically 
solvable'' is used to mean that a {\em sample-efficient} algorithm exists, without any 
constraints on computational efficiency. The main question addressed in this paper is 
to what extent a {\em computationally efficient} learning algorithm exists.

On the algorithmic front, \cite{chen2022algorithms} gave 
parameter estimation algorithms with provable guarantees 
under a $\Delta$-separation assumption. 
Specifically, \cite{chen2022algorithms} provided two 
algorithms with different complexity guarantees. 
Their first algorithm has sample 
and computational complexity 
$\poly(n^{O(\log(r) / \Delta^2)}, 1/w_{\min}, 1/\eps)$, 
while their second algorithm has complexity $\poly((n/\Delta)^r, 1/w_{\min}, 1/\eps)$. 
Here we focus on settings where the number of components $r$
is large and cannot be viewed as a constant. 
For the sake of intuition, it is instructive to simplify these upper bounds 
for the regime of uniform mixtures 
(corresponding to the special case that $w_\ell = 1/r$ for all $\ell\in[r]$) 
and $\eps$ is not too small. For this regime, the complexity upper bound achieved in \cite{chen2022algorithms} is 
$\min \{ n^{O(\log(r) / \Delta^2)}, (n/\Delta)^{O(r)} \}$. 
Concretely, if the separation $\Delta$ is $\Omega(1)$ or even $1/\poly\log(r)$, the first 
term yields a quasi-polynomial upper bound. On the other hand, for $\Delta = O(1/r^c)$, for 
a constant $c>0$, the resulting upper bound is $n^{\poly(r)}$. 
In both regimes, we observe a 
{\em super-polynomial} gap between the information-theoretic sample complexity --- 
which is $\poly(n, r, \Delta)$ --- 
and the sample complexity of the \cite{chen2022algorithms} algorithms. 
It is thus natural to ask if this gap is inherent. 
\begin{center}
{\em What is the complexity of learning mixtures of linear classifiers? \\
Is there an algorithm with significantly better {\em sample-time tradeoff}?}
\end{center}
We study these questions in a well-studied restricted model of computation,
known as the Statistical Query (SQ) model~\cite{Kearns:98} (and, via~\cite{brennan2020statistical}, also for low-degree polynomial tests).
{\em Our main result is that in both of these models 
the complexity of the above-mentioned algorithms is essentially best possible.} 
Along the way, we establish new results on the existence of 
spherical designs that may be of independent interest.


\subsection{Our Results} \label{ssec:results}

In order to formally state our lower bounds, we require basic background on SQ algorithms.

\paragraph{Basics on SQ Model} 
SQ algorithms are a class of algorithms
that, instead of having direct access to samples, 
are allowed to query expectations of bounded functions of the distribution. 
Formally, an SQ algorithm has access to the following standard oracle.
\begin{definition}[$\mathrm{STAT}$ Oracle]\label{def:stat}
Let $D$ be a distribution on $\R^n$. A \emph{Statistical Query} 
is a bounded function $f:\R^n\to[-1,1]$. For $\tau>0$, the $\mathrm{STAT}(\tau)$ 
oracle responds to the query $f$ with a value $v$ such that $|v-\E_{\bx\sim D}[f(\bx)]|\le\tau$. 
A \emph{Statistical Query (SQ) algorithm} is an algorithm 
whose objective is to learn 
an unknown distribution $D$ by making adaptive calls to the $\mathrm{STAT}(\tau)$ oracle.
\end{definition}

\noindent The SQ model was introduced in~\cite{Kearns:98}. 
Subsequently, the model has been extensively studied 
in a range of contexts~\cite{Feldman16b}).
The class of SQ algorithms is broad and captures a range of known 
supervised learning algorithms. 
More broadly, several known algorithmic techniques in machine learning
are known to be implementable using SQs~\cite{FeldmanGRVX17, FeldmanGV17}.

\medskip

Our main result is a near-optimal SQ lower bound for mixtures of 
linear classifiers that applies even for the uniform case.
Specifically, we establish the following:

\begin{theorem}[Main Result:  SQ Lower Bound for Uniform Mixtures]\label{thm:main2}
Let $\epsilon\le c\Delta/r$ for some universal constant $c>0$ sufficiently small.
Then, any SQ algorithm that learns a uniform mixture of linear classifiers with directions $\bv_1,\ldots,\bv_r\in\mathbb{S}^{n-1}$ satisfying 
$\min_{i \neq j}\{\|\bv_i-\bv_j\|_2,\|\bv_i+\bv_j\|_2\}\ge\Omega(\Delta)$ for some $r^{-1/10}\le\Delta<1$,
within error $\epsilon$ in total variation distance must either use queries of tolerance $n^{-\poly(1/\Delta)\log r}$,
or make at least $2^{n^{\Omega(1)}}$ queries.
\end{theorem}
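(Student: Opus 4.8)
The plan is to establish this SQ lower bound via the standard framework of non-Gaussian component analysis (NGCA): construct a family of distributions on $\R^n \times \{\pm 1\}$, each obtained by "hiding" a carefully chosen low-dimensional distribution $A$ in a random direction (or random low-dimensional subspace), such that every member looks like a standard Gaussian-times-uniform-label distribution in all but the hidden directions, yet the members are nearly pairwise uncorrelated as seen by any bounded test function. Concretely, I would use a one-dimensional (or constant-dimensional) hidden distribution: on the hidden coordinate $t$, the conditional law of the label $y$ given $t$ is a weighted mixture of the sign patterns $\sgn(\la \bv_\ell, \bx\ra)$ projected onto that coordinate, which after conditioning on $\bx$ collapses to a univariate ``moment-matching'' construction. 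The key object is a univariate (or low-dimensional) distribution $A$ whose low-degree moments agree with those of the appropriate Gaussian-label reference distribution up to degree $m = \poly(1/\Delta)\log r$; by the standard SQ-dimension machinery (the generic lower bound theorem of Diakonikolas--Kane--Stewart type), matching $m$ moments yields correlation $n^{-\Omega(m)}$ between distinct hidden directions, which translates into the claimed tolerance $n^{-\poly(1/\Delta)\log r}$ and query-count $2^{n^{\Omega(1)}}$ bounds.

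The heart of the argument — and the main obstacle — is constructing a single uniform mixture of exactly $r$ halfspaces, with pairwise $\ell_2$-separation $\Omega(\Delta)$ among the weight vectors, whose one-dimensional projection along a generic hidden direction reproduces the desired moment-matching univariate distribution. This is precisely where the new spherical design construction enters. I would choose the $\bv_\ell$'s to form a spherical design on $\s^{n-1}$ (or on a suitable great subsphere) of strength roughly $m$, which forces the empirical average $\frac1r \sum_\ell g(\la \bv_\ell, \bu\ra)$ to equal $\E_{\bw \sim \calN(0,\bI)}[g(\la \bw,\bu\ra)]$ for all low-degree $g$, and hence makes the label-conditional moments of the mixture match those of a ``fuzzy halfspace'' reference in the non-hidden directions; the design property is exactly what kills the low-degree correlations. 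The nontrivial requirement — which is why an off-the-shelf design does not suffice — is that the design must additionally be \emph{well-separated}: its points must be pairwise $\Omega(\Delta)$-apart, its size $r$ must be polynomially related to $1/\Delta$ via $\Delta \ge r^{-1/10}$, and it must survive projection to a hidden coordinate in a way that yields a genuinely non-Gaussian univariate signal. I expect the paper's design construction to be built by taking tensor powers / products of small base designs, or by a probabilistic argument combined with an explicit separated packing, calibrated so that strength $\poly(1/\Delta)\log r$ and separation $\Delta$ coexist for $r$ points in moderately high dimension.

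Given the design, the remaining steps are routine. First I would define the hidden-direction family: for a hidden unit vector $\bu$ (ranging over a large packing of nearly orthogonal directions, of size $2^{n^{\Omega(1)}}$), let $D_\bu$ be the distribution of $(\bx, y)$ where $\bx \sim \calN(0,\bI_n)$ and $y = \sgn(\la \bv_{\ell}, R_\bu \bx\ra)$ for a random rotation $R_\bu$ aligning the design's ambient frame with $\bu$; the point is that conditioned on the projection $\la \bu, \bx\ra$ the label distribution depends on $\bu$, while the $\bx$-marginal is always Gaussian. Next I would compute the chi-square / correlation between $D_\bu$ and $D_{\bu'}$ with respect to the reference distribution $D_0$ (Gaussian $\bx$, and $y$ with the matched marginal conditional law), using the Gaussian Ornstein--Uhlenbeck / Hermite expansion: the design's moment-matching property zeros out all Hermite coefficients of degree $< m$, so the correlation is bounded by $(|\la \bu,\bu'\ra|)^{m} \cdot \poly(\cdot)$, which is $n^{-\Omega(m)}$ over the packing. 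Then I would invoke the generic SQ lower bound (as stated earlier in the paper) to conclude that any SQ algorithm distinguishing the $D_\bu$'s from $D_0$ — hence learning the mixture to TV error $\eps \le c\Delta/r$, since the $D_\bu$'s are pairwise $\Omega(\eps)$-far and far from $D_0$ — needs either tolerance below $n^{-\Omega(m)} = n^{-\poly(1/\Delta)\log r}$ or $2^{n^{\Omega(1)}}$ queries. The only genuinely delicate verification outside the design construction is checking that the induced $D_\bu$ really is (close to) a uniform mixture of exactly $r$ linear classifiers with the stated separation after the rotation, and that the TV-distance lower bound between family members matches the $\eps$ regime in the theorem statement; both follow from the separation of the design points together with anti-concentration of the Gaussian projections.
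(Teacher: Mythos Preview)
Your high-level framework is exactly the paper's: build a low-dimensional mixture of halfspaces whose conditional-label function $g(\bz)=\frac{1}{r}\sum_\ell \sgn(\bv_\ell^\intercal \bz)$ is orthogonal to all polynomials of degree $<k$, embed it in $\R^n$ via a random isometry, invoke the generic SQ lower bound for such hidden-subspace families, and finish with a TV lower bound plus the test-to-learn reduction. The identification of the spherical design as the moment-matching mechanism is also correct.

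However, there is a concrete gap in the dimension bookkeeping that would make your argument fail as written. You repeatedly lean toward a \emph{one-dimensional} hidden structure (``univariate distribution $A$'', ``hidden unit vector $\bu$'', correlation bound $|\langle \bu,\bu'\rangle|^m$). This cannot work: in one dimension there are only two homogeneous halfspaces, and more generally you cannot pack $r$ unit vectors with pairwise separation $\Omega(\Delta)$ on $\mathbb{S}^{m-1}$ unless $r \lesssim (1/\Delta)^{m-1}$. The paper chooses the hidden dimension to be precisely $m=\Theta(\log r/\log(1/\Delta))$, which is the minimal dimension admitting such a packing. This choice is not cosmetic: it is what forces the moment-matching degree to come out as $k=\Theta\bigl((1/\Delta)^{\Theta(1)}\log r/\log(1/\Delta)\bigr)$ via the constraint $r\gtrsim \binom{m+2k}{2k}^5$, and hence yields the tolerance exponent $\poly(1/\Delta)\log r$. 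Without pinning down $m$ this way, you cannot derive the claimed quantitative bound.

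Second, your speculation about how the well-separated uniform-weight design is built (``tensor powers / products of small base designs, or a probabilistic argument combined with an explicit separated packing'') is not what the paper does, and I do not see how either route would give uniform weights together with separation at this size. The paper's construction is genuinely different: sample $\by_1,\dots,\by_r$ uniformly on $\mathbb{S}^{m-1}$ (these are automatically $\Omega(\Delta)$-separated with high probability by the choice of $m$), then define a continuous map $F$ from degree-$t$ polynomials to $r$-tuples of sphere points by nudging each $\by_i$ along the tangential gradient $\nabla_o p(\by_i)$, and apply a Brouwer-type fixed-point theorem (following Bondarenko--Radchenko--Viazovska) to conclude that some $F(p^*)$ is an exact equal-weight design. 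The perturbation is shown to be $O(1/N_{2t,m})$, so separation survives. This topological step is the technical heart and is not anticipated in your proposal.
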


Informally speaking,
~\cref{thm:main2} shows that no SQ algorithm 
can perform density estimation for uniform mixtures of linear classifiers 
to small accuracy with a sub-exponential in $n^{\Omega(1)}$ many queries, 
unless using queries of very small tolerance  -- that would require at least
$n^{\poly(1/\Delta)\log r}$ samples to simulate.
This result can be viewed as a near-optimal information-computation tradeoff
for the problem, within the class of SQ algorithms. In more detail, for 
$\Delta = \Omega(1)$, we obtain a quasi-polynomial SQ lower bound of $n^{\Omega(\log r)}$;
while for $\Delta = 1/r^{c}$, for some constant $0< c<1/2$, we obtain an SQ lower bound
of $n^{\poly(r)}$. In both cases, our SQ lower bounds qualitatively match
the previously known algorithmic guarantees~\cite{chen2022algorithms} 
(that are efficiently implementable in the SQ model).

A conceptual implication of~\cref{thm:main2} is that 
the uniform (i.e., equal mixing weights) case is essentially as hard as the general
case for density estimation of these mixtures. In contrast,  
for related mixture models, specifically for mixtures of 
Gaussians, there is recent evidence that restricting the weights may make the problem computationally easier~\cite{BS21}.

\begin{remark} \label{rem:Delta}
{\em We note that the condition $\Delta\ge r^{-c}$, for some constant $0<c<1$, is necessary in the statement of~\cref{thm:main2} for the following reason: the algorithmic result of~\cite{chen2022algorithms} has sample and computational complexity $\min \{ n^{O(\log(r) / \Delta^2)}, (n/\Delta)^{O(r)} \}$,
which will be $(n/\Delta)^{O(r)}\ll n^{\poly(1/\Delta)\log r}$ if $\Delta$ is a sufficiently small inverse polynomial in $r$.
}
\end{remark}


\begin{remark} \label{rem:ld-test}
{\em Our SQ lower bound result has immediate implications
to another well-studied restricted computational model --- that of 
low-degree polynomial tests~\cite{HopkinsS17,HopkinsKPRSS17,Hopkins-thesis}. 
\cite{brennan2020statistical} established that (under certain assumptions) 
an SQ lower bound also implies a qualitatively similar lower bound in the low-degree model. 
This connection can be used as a black-box to show a similar lower bound for low-degree 
polynomials.}
\end{remark}

\noindent The key technical ingredient required for our SQ lower bound 
is a theorem establishing the existence of spherical designs with appropriate
properties. The definition of a spherical design follows.

\begin{definition}[Spherical Design]\label{def:sph-design}
Let $t$ be an odd integer.
A set of points $\bx_1,\ldots,\bx_r\in\mathbb{S}^{n-1}$ is called a spherical $t$-design if
$\E[p(\bx)]=\frac{1}{r}\sum_{i=1}^rp(\bx_i)$
holds for every homogeneous $n$-variate polynomial $p$ 
of degree $t$, where the expectation is taken over the 
uniform distribution on the unit sphere $\mathbb{S}^{n-1}$.
\end{definition}

We note that this definition differs slightly 
from the usual definition of spherical design,
which requires that the equation in~\cref{def:sph-design} 
holds for all polynomials $p$ of degree {\em at most} $t$.
However, by multiplying by powers of $\|\bx\|^2_2$,
we note that our definition implies that the equation holds 
for every odd polynomial of degree at most $t$, 
and it is sufficient for the requirement of establishing our SQ lower bound.

Spherical designs have been extensively studied in combinatorial design theory 
and a number of efficient constructions are known, see, 
e.g.,~\cite{bannai1979tight,delsarte1991spherical,graf2011computation,bondarenko2013optimal,kane2015small,womersley2018efficient}. However, none of the known constructions
seem to be compatible with our separation assumptions.

We establish the following result that may be of independent
interest in this branch of mathematics.

\begin{theorem}[Efficient Spherical Design]\label{thm:sph-design-intro}
Let $t$ be an odd integer and $r\ge\binom{n+2t-1}{n-1}^5$. Let $\by_1,\ldots,\by_r$ be uniform random vectors over $\mathbb{S}^{n-1}$.
Then, with probability at least 99/100, there exist unit vectors $\bz_i\in\mathbb{S}^{n-1}$ very close to $\by_i$ such that $(\bz_1,\ldots,\bz_r)$ form a spherical $t$-design.
\end{theorem}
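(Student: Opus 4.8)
The plan is to realize the $t$-design condition as a system of $N := \binom{n+t-1}{n-1}$ polynomial equations in the $r$ points and to solve it by perturbing the given random configuration, via a quantitative inverse-function-theorem argument. Fix an $L^2(\mathbb{S}^{n-1})$-orthonormal basis $q_1,\dots,q_N$ of the space of homogeneous degree-$t$ polynomials restricted to the sphere, chosen to respect the spherical-harmonic decomposition $\bigoplus_{k \le t,\ k\ \mathrm{odd}} \mathcal{H}_k$; this makes the map $\Phi : \mathbb{S}^{n-1} \to \R^N$, $\Phi(\bz) = (q_1(\bz),\dots,q_N(\bz))$, equivariant under $O(n)$, with rotations acting on $\R^N$ by an orthogonal representation $\pi$. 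Since $t$ is odd, each $q_j$ is an odd function, so $\E_{\mathbb{S}^{n-1}}[\Phi] = 0$ and, comparing with \cref{def:sph-design}, the tuple $(\bz_1,\dots,\bz_r)$ is a spherical $t$-design precisely when $G(\bz_1,\dots,\bz_r) := \sum_{i=1}^r \Phi(\bz_i) = 0 \in \R^N$. (One could trivially satisfy this by moving the $\by_i$ into antipodal pairs, using $\Phi(-\bz) = -\Phi(\bz)$, but such a configuration is too degenerate to be useful downstream; the content of the theorem is that a \emph{small, local} perturbation of the given generic points already suffices.)

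First I would assemble three quantitative inputs, each holding with probability $\ge 1 - 1/400$ over the $\by_i$. (i) \emph{Initial residual.} Since $\E[\Phi(\by_i)] = 0$ and $\sum_j q_j^2 \equiv N$ on the sphere (the addition theorem), $\E_{\by}\|G(\by)\|^2 = rN$, so $\|G(\by)\| = O(\sqrt{rN})$ by concentration. (ii) \emph{Conditioning of the Jacobian.} Differentiating within the sphere (say through the exponential map at each $\by_i$), $DG(\by)\,DG(\by)^\top = \sum_i D\Phi(\by_i)\,D\Phi(\by_i)^\top$; equivariance forces $\E_{\by}[D\Phi(\by)\, D\Phi(\by)^\top]$ to commute with $\pi$, hence, by Schur's lemma applied to the pairwise inequivalent irreducibles $\mathcal{H}_k$, to equal $\bigoplus_k k(k+n-2)\, I_{\mathcal{H}_k} \succeq (n-1) I_N$. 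Each summand $D\Phi(\by_i)D\Phi(\by_i)^\top$ has operator norm at most $\sum_j \|\nabla_{\mathbb{S}^{n-1}} q_j\|^2$, which is again constant on the sphere and at most $t(t+n-2)N$. A matrix-Chernoff bound together with a net over the unit sphere of $\R^N$ then gives $\sigma_{\min}(DG(\by)) = \Omega(\sqrt{rn})$ once $r$ is a sufficiently large fixed power of $N$. (iii) \emph{Curvature.} The second differential obeys $\|D^2 G\|_{\mathrm{op}} \le \max_i \bigl( \sum_j \|\nabla^2_{\mathbb{S}^{n-1}} q_j(\by_i)\|_F^2 \bigr)^{1/2} \le \poly(t,n)\sqrt{N}$, once more because the inner sum is constant on the sphere.

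Given these inputs, I would run Newton's method (or a gradient flow on $\|\sum_i \Phi(\bz_i)\|_2^2$) starting from $\by$. The Newton--Kantorovich quantity $\|G(\by)\|\cdot\|D^2 G\|_{\mathrm{op}}\,/\,\sigma_{\min}(DG)^2$ and the total displacement $\|G(\by)\|/\sigma_{\min}(DG)$ are both $\ll 1$ once $r$ exceeds a fixed power of $\binom{n+2t-1}{n-1}$: this is the right binomial coefficient to quote because products and Hessians of the $q_j$'s are polynomials of degree at most $2t$, so $\binom{n+2t-1}{n-1}$ governs the operator-norm and variance-proxy factors in (i)--(iii) and in the net; taking the fifth power leaves slack for all union bounds and, crucially, drives the per-coordinate perturbation down to $\poly(n,t)/\sqrt{r} = o(1)$. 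The iteration then converges to a genuine zero $\bz^\ast = (\bz_1^\ast,\dots,\bz_r^\ast)$ of $G$ with $\|\bz_i^\ast - \by_i\|_2 = \poly(n,t)/\sqrt r$ for every $i$ — i.e., to a spherical $t$-design all of whose points are very close to the $\by_i$ — and the whole construction succeeds with probability $\ge 99/100$ by a union bound over (i)--(iii).

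I expect the crux to be step (ii): establishing, \emph{robustly}, that the gradients $\{\nabla_{\mathbb{S}^{n-1}} q_j(\by_i)\}$ at $r$ random points jointly span $\R^N$ with a well-bounded least singular value, and that this conditioning survives as the points move along the iteration. The key simplification is that the reproducing-kernel / addition-theorem identities make $\sum_j q_j^2$, $\sum_j \|\nabla q_j\|^2$, and $\sum_j \|\nabla^2 q_j\|_F^2$ literally constant on the sphere, replacing what would otherwise be delicate suprema of high-degree polynomials by explicit numbers; combined with $O(n)$-equivariance (which pins the mean Jacobian Gram matrix to a block-scalar form) and a standard matrix concentration inequality, this yields the needed bound for $r$ polynomial in $\binom{n+2t-1}{n-1}$. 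A tight accounting of the exact exponent — especially in the worst case $n = 2$, where $\binom{n+2t-1}{n-1}$ barely exceeds $N$ — may require a more careful multi-scale version of the iteration in place of a single Newton--Kantorovich invocation, but the qualitative picture is as above.
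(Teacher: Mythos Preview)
Your approach is sound and genuinely different from the paper's. The paper does not linearize and invoke Newton--Kantorovich; following Bondarenko--Radchenko--Viazovska, it uses a Brouwer-type fixed-point theorem (\cref{thm:exist-sph-design}): one defines a continuous map $F$ from the unit ball of degree-$t$ polynomials to $(\mathbb{S}^{n-1})^r$ by pushing each $\by_i$ a fixed small amount $\delta$ along its tangential gradient $\nabla_o p(\by_i)$, and checks that for every unit-norm $p$ one has $\sum_i p(F(p)_i)>0$; the fixed-point theorem then supplies some interior $p^\star$ with $F(p^\star)$ a design. The quantitative ingredients are correspondingly softer than yours --- only a second-moment (Markov) bound on empirical averages of $p$ and $\|\nabla_o p\|_2^2$ over the $\by_i$ (\cref{lem:poly-concentration}), a one-step Taylor estimate $p(\bz_i)-p(\by_i)\ge C\delta\|\nabla_o p(\by_i)\|_2^2$ (\cref{lem:increment}), and the inequality $\E\|\nabla_o p\|_2^2\ge(n-1)\|p\|_2^2$ (\cref{lem:grad-bound}), which is exactly your Schur-lemma computation of the least Laplace--Beltrami eigenvalue, phrased in polynomial language. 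Your route is more self-contained (no topological black box), explicitly algorithmic, and yields a per-point displacement that visibly scales like $r^{-1/2}$; the paper's is lighter on concentration (no matrix Chernoff, no need to control the Jacobian along an iteration). One point to tighten: the per-point minimum-norm Newton step satisfies $\|\Delta_i\|\le\|D\Phi(\by_i)\|\cdot\|G\|/\sigma_{\min}(DG)^2=O(N_{2t,n}^{3/2}/\sqrt r)$, not ``$\poly(n,t)/\sqrt r$'' --- the prefactor is a binomial coefficient, not a polynomial in $n,t$ --- and you do need this blockwise bound (via $\Delta_i=-D\Phi(\by_i)^\top(DG\,DG^\top)^{-1}G$) rather than just the total displacement $\|G\|/\sigma_{\min}$, since the latter would not by itself prevent a single $\by_i$ from absorbing all of the motion. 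With $r\ge N_{2t,n}^5$ this still gives $\|\Delta_i\|=O(1/N_{2t,n})$, matching the paper's bound.
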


We mention here that the optimal sample complexity for the existence of spherical $t$-design (for even $t$) is $r = \Theta\big(\binom{n+t}{n}\big)$ and our result matches this within a polynomial factor.
\cref{thm:sph-design-intro} is essential for our construction of moment-matching. Roughly speaking, the resulting mixture matches moments in the sense we require if and only if the weight vectors $(\bv_1,\ldots,\bv_r)$ form a {\em spherical design}.
since this leads to the separation of the hidden weight vectors in the mixture of linear classifiers.
In order to guarantee the pairwise separation of the weight vectors $\bv_i$'s,
we note that with high probability the $\by_i$'s are pairwise separated and therefore if $\bz_i$ is sufficiently close to $\by_i$ they will be too.


\subsection{Technical Overview} \label{ssec:techniques}


Our starting point is the SQ lower bound technique of~\cite{DKS17-sq} and its generalization in~\cite{DiakonikolasKPZ21}.
In particular, our overall strategy is to find a mixture of homogeneous halfspaces 
in some ``small'' number $m$ of dimensions that matches its first $k$ moments 
with the distribution on $(\bx, y)$, where $y$ is independent of $\bx$.
By appropriately embedding this low-dimensional construction into $\R^n$ 
via a random projection of $\R^n \to \R^m$,
the SQ lower bound machinery of~\cite{DiakonikolasKPZ21} can be shown to imply 
that learning the projection requires either $2^{n^{\Omega(1)}}$ queries 
or some query of accuracy $n^{-\Omega(k)}$.
This generic step reduces our problem to finding an appropriate $m$-dimensional construction.
Such a construction is the main technical contribution of this work.

Note that specifying a mixture of halfspaces 
is equivalent to specifying a probability distribution 
with support consisting of at most $r$ unit vectors and
specifying the orthogonal vectors of the halfspaces.
It is not hard to see that the resulting mixture matches moments 
in the sense we require
if and only if these vectors form a {\em spherical $k$-design}~\cite{delsarte1991spherical}: 
namely, that for any odd polynomial $p$ of degree less than $k$
that the average value of $p$ over our distribution is the same 
as the average value of $p$ over the unit sphere (namely, equal to zero).
The bulk of our problem now reduces to finding constructions 
of weighted designs, where:
(1) The support size of the design is relatively small.
(2) The points in the design are pairwise separated.
(3) For equally weighted mixtures, we require that the weight of each vector in the design is uniform.

In $m=2$ two dimensions, there is a relatively simple explicit construction that can be given.
In particular, if we take $k$ evenly spaced points over the unit circle for some odd $k$,
this matches the first $k-1$ moments and has separation $\Omega(1/k)$.
(This is similar to 
an SQ construction in~\cite{DiakonikolasKKZ20} for a different setting.)
Unfortunately, we cannot obtain better separation in two dimensions, 
since any $k$ unit vectors in two dimensions will necessarily have 
some pair separated by at most $O(1/k)$.
Therefore, if we want constructions with greater separation, 
we will need to pick larger values of $m$.
Indeed, we show (\cref{prop:low-g}) that a {\em random} collection of approximately $m^{O(k)}$ points 
can form the support of an appropriate design.
This can be proved by applying linear programming duality.
Unfortunately, this argument does not allow us to control the mixing weights. Specifically, it may be the case that the minimum weight is exponentially 
small which seems unnatural in practice.

The case of equal weights requires a significantly more sophisticated argument.
In this case, we need to find a spherical design with uniform weight.
Indeed, merely selecting a random support will no longer work.
Although there is an extensive literature on finding efficient spherical 
designs~\cite{bannai1979tight,delsarte1991spherical,graf2011computation,bondarenko2013optimal,kane2015small,womersley2018efficient}, none of the known constructions
seem to be compatible with our separation assumptions.
In particular,
\cite{bondarenko2013optimal} proves that for each $r\ge c_mk^m$, there exists a spherical $k$-design, where $c_m$ is a constant depending only on the dimension $m$.
Although it is plausible that their construction can 
be adapted to satisfy our separation requirement, 
their sample complexity has a potentially bad dependence 
on the dimension $m$. On the other hand, the work of~\cite{kane2015small} 
achieves the optimal sample complexity $r$ up to polynomials 
for all $m$ and $k$. Unfortunately, the construction in~\cite{kane2015small} 
definitely does not satisfy our separation requirement. 
We note that the sample complexity of $r = \Theta\big(\binom{m+k}{k}\big)$ should be optimal, and our results are polynomial in this bound.

In conclusion, to establish our SQ lower bound, 
we need to prove a new technical result. 
Our starting point to that end will be 
the work of~\cite{bondarenko2013optimal}.
The basic plan of our approach will be to select random unit vectors $\by_1,\by_2,\ldots,\by_r\in\R^n$, and then --- 
applying topological techniques --- 
to show that there is some small perturbation 
set $\bx_1,\bx_2,\ldots,\bx_r\in\R^n$ that is a spherical design.
In particular, we will find a continuous function $F$ 
mapping degree-$k$ odd polynomials 
to sets of $r$ unit vectors such that, 
for all unit-norm polynomials $p$, 
if $F(p) = (\bz_1,\ldots,\bz_r)$ 
then $\sum_{i=1}^r p(\bz_i) > 0$.
Given this statement, 
a standard fixed point theorem~\cite{cho2006topological} will imply 
that our design 
can be found as $F(q)$ for some $q$ with norm less than one.
To construct the mapping $F$, we start with the points $\by_1,\ldots,\by_r$ 
and perturb each $\by_i$ in the direction of $\nabla_o p(\by_i)$
in order to try to increase the average value of $p$,
where $\nabla_o p(\by_i)$ is the component of $\nabla p(\by_i)$ orthogonal 
to the direction $\by_i$.
Intuitively, this construction should work because with high probability 
the average value of $p(\by_i)$ is already small 
(since the empirical average should approximate the true average),
the average gradient of $p(\by_i)$ is not too small, 
and the contributions to $p(\bz_i)$ coming from higher-order terms 
will not be large as long as $\bz_i$ is sufficiently close to $\by_i$.
These facts can all be made precise with some careful 
analysis of spherical harmonics (\cref{lem:increment}).

Finally, we need to show that the hardness of learning the appropriate projection 
implies hardness of learning the mixture.
By standard facts, it suffices to show that mixtures of linear classifiers 
are far from the distribution $(\bx, y)$, 
where the uniform label $y$ is independent of $\bx$, in total variation distance.
To show this, we prove that the total variation distance between 
any projection and the distribution where $y$ is independent of $\bx$ 
is at least $\Omega(\Delta/r)$ (\cref{lem:TV-lower-bound}).

\section{Preliminaries}\label{sec:prelims}

\paragraph{Notation} 
For $n \in \Z_+$, we denote $[n] \eqdef \{1,\ldots,n\}$.
For two distributions $p,q$ over a probability space $\Omega$,
let $\dtv(p,q)=\sup_{S\subseteq\Omega}|p(S)-q(S)|$
denote the total variation distance between $p$ and $q$.
In this article, we typically use small letters to denote random variables and vectors.
For a real random variable $x$, we use $\E[x]$ to denote the expectation.
We use $\Pr[\mathcal{E}]$ and $\mathbb{I}[\mathcal{E}]$ for
the probability and the indicator of event $\mathcal{E}$.
Let $\mathcal{N}_n$ denote the standard $n$-dimensional Gaussian distribution
and $\mathcal{N}$ denote the standard univariate Gaussian distribution.
Let $\mathbb{S}^{n-1}=\{\bx\in\R^n:\|\bx\|_2=1\}$ denote the $n$-dimensional unit sphere.
For a subset $S\subseteq\R^n$,
we will use $\mathcal{U}(S)$ to denote the uniform distribution over $S$.
We will use small boldface letters for vectors and capital boldface letters for matrices.
Let $\|\bx\|_2$ be the $\ell^2$-norm of the vector $\bx\in\R^n$.
For vectors $\bu,\bv\in\R^n$, we use $\langle\bu,\bv\rangle$ to denote their inner product.
We denote by $\L^2(\R^n,\mathcal{N}_n)$ the function space of all functions $f:\R^n\to\R$ such that $\E_{\bz\in\mathcal{N}_n}[f^2(\bz)]<\infty$.
The usual inner product for this space is $\E_{\bz\in\mathcal{N}_n}[f(\bz)g(\bz)]$.

For a matrix $\p\in\R^{m\times n}$,
we denote $\|\p\|_2,\|\p\|_F$ to be its spectral norm and Frobenius norm respectively.
For a tensor $A\in(\R^n)^{\otimes k}$,
let $\|A\|_2$ denote its spectral norm and $\|A\|_F$ denote its Frobenius norm.
We will use $A_{i_1,\ldots,i_k}$ to denote the coordinate of the $k$-tensor $A$ indexed by the $k$-tuple $(i_1,\ldots,i_k)$.
The inner product between $k$-tensors is defined by thinking 
of tensors as vectors with $n^k$ coordinates.

\medskip

We use the framework of Statistical Query (SQ) algorithms for problems 
over distributions~\cite{FeldmanGRVX17} and 
require the following standard definition.

\begin{definition}[Decision/Testing Problem over Distributions]\label{def:decision}
Let $D$ be a distribution and $\D$ be a family of distributions over $\R^n$. 
We denote by $\mathcal{B}(\mathcal{D},D)$ the decision (or hypothesis testing) 
problem in which the input distribution $D'$ is promised to satisfy either 
(a) $D'=D$ or (b) $D'\in\mathcal{D}$, and the goal of the algorithm 
is to distinguish between these two cases.
\end{definition}

\paragraph{Probabilistic Facts}  

We recall the definition of VC dimension for a 
set system.

\begin{definition}[VC-Dimension]\label{def:VC-dim}
For a class $\C$ of boolean functions on a set $\mathcal{X}$, the~\emph{VC-dimension} of $\C$ is the largest integer $d$ such that there exist $d$ points $x_1,\ldots,x_d\in\mathcal{X}$ such that for any boolean function $g:\{x_1,\ldots,x_d\}\to\{\pm1\}$, there exists an $f\in\C$ satisfying $f(x_i)=g(x_i),1\le i\le d$.
\end{definition}

\noindent We will use the following probabilistic inequality. 

\begin{lemma}[VC inequality, see, e.g.,~\cite{vershynin2018high}]\label{lem:VC-ineq}
Let $\C$ be a class of boolean functions on $\mathcal{X}$ with VC-dimension $d$, and let $X$ be a distribution on $\mathcal{X}$. Let $\epsilon>0$ and let $N$ be an integer at least a sufficiently large constant multiple of $d/\epsilon^2$. Then, if $X_1,X_2,\ldots,X_N$ are i.i.d. samples from $X$, we have that:
\begin{align*}
\pr\left[\sup_{f\in\C}\left|\frac{\sum_{j=1}^Nf(X_j)}{N}-\E[f(X)]\right|>\epsilon\right]=\exp(-\Omega(N\epsilon^2)).
\end{align*}
\end{lemma}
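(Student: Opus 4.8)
The plan is to separate the uniform deviation into its expectation plus a fluctuation term, control the fluctuation by a dimension-free concentration inequality, and control the expectation using the VC dimension.

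First I would write $Z\eqdef\sup_{f\in\C}|\frac1N\sum_{j=1}^N f(X_j)-\E[f(X)]|$ and observe that, as a function of $X_1,\dots,X_N$, replacing any single coordinate $X_j$ changes $Z$ by at most $2/N$ (each $f$ is $\{\pm1\}$-valued). The bounded-differences (McDiarmid) inequality then gives $\Pr[Z>\E[Z]+t]\le\exp(-Nt^2/2)$ for every $t>0$. Hence it suffices to show $\E[Z]\le\epsilon/2$ once $N$ exceeds a sufficiently large constant multiple of $d/\epsilon^2$; plugging $t=\epsilon/2$ into the tail bound then yields $\Pr[Z>\epsilon]\le\Pr[Z>\E[Z]+\epsilon/2]\le\exp(-N\epsilon^2/8)=\exp(-\Omega(N\epsilon^2))$, which is exactly the claimed estimate.

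To bound $\E[Z]$ I would use the classical symmetrization trick: introducing an independent ghost sample $X_1',\dots,X_N'$ and i.i.d.\ Rademacher signs $\sigma_1,\dots,\sigma_N$ gives $\E[Z]\le 2\,\E[\sup_{f\in\C}|\frac1N\sum_{j=1}^N\sigma_j f(X_j)|]$. Conditioning on $X_1,\dots,X_N$, this is the supremum of a Rademacher process indexed by the finite set $\{(f(X_1),\dots,f(X_N)):f\in\C\}\subseteq\{\pm1\}^N$, whose cardinality is the shatter function $\Pi_\C(N)$, at most $(eN/d)^d$ by the Sauer--Shelah lemma (the lemma is vacuous unless $N\ge d$). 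One elementary route is Hoeffding's inequality for each fixed sign pattern followed by a union bound, giving $\E[Z]=O(\sqrt{d\log(eN/d)/N})$. I would instead invoke Dudley's entropy-integral bound together with Haussler's estimate that a VC class of dimension $d$ has empirical $L^2$ covering numbers at most $(C/u)^{Cd}$ at scale $u$; since $\int_0^1\sqrt{\log(C/u)}\,du<\infty$, the entropy integral evaluates to $O(\sqrt{d/N})$, so $\E[Z]\le C_0\sqrt{d/N}$ with no logarithmic loss.

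Combining the two steps, taking $N\ge 4C_0^2 d/\epsilon^2$ forces $\E[Z]\le\epsilon/2$, and the McDiarmid step finishes the proof. The main obstacle — and the reason the plain union bound is not enough — is the logarithmic factor in the second step: the naive count over the $\Pi_\C(N)$ sign patterns costs a factor $\sqrt{\log(N/d)}$, which is not absorbed into the exponent $-\Omega(N\epsilon^2)$ when $N$ is only a constant multiple of $d/\epsilon^2$ (it would instead force $N\gtrsim d\log(1/\epsilon)/\epsilon^2$). Getting the sharp sample size as stated requires the chaining refinement with the VC covering-number bound; the remaining computations (verifying the bounded-differences constant, the symmetrization inequality, and the Sauer--Shelah/Haussler estimates) are routine.
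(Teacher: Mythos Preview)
The paper does not prove this lemma; it is quoted with a citation as a standard fact from the literature. Your proposal is a correct and standard derivation: McDiarmid for the fluctuation around the mean, symmetrization to pass to a Rademacher process, and Dudley chaining with Haussler's covering-number bound to get $\E[Z]=O(\sqrt{d/N})$ without the $\sqrt{\log(N/d)}$ loss. Your diagnosis of why the plain union bound over $\Pi_{\C}(N)$ sign patterns is insufficient for the lemma \emph{as stated} (sample size only a constant multiple of $d/\eps^2$) is also right. It is perhaps worth noting that the only place the paper actually invokes this lemma is in the proof of \cref{prop:moment-matching} with $\eps=1$; in that application the cruder bound $\E[Z]=O\big(\sqrt{d\log(eN/d)/N}\big)$ from Sauer--Shelah plus a union bound already suffices, since one merely needs $\E[Z]\le 1/2$ once $N$ is a large constant multiple of $d$. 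So the chaining refinement you flag is needed for the lemma in full generality but not for the paper's purposes.
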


To apply the VC inequality in our context,
we additionally need the following fact which gives the VC-dimension 
of the class of bounded-degree polynomial threshold functions (PTFs).

\begin{fact}\label{fact:VC-dim}
Let $\C_k$ denote the class of degree-$k$ polynomial threshold functions (PTFs) on $\R^m$,
namely the collection of functions of the form $f(\bx)=\sgn(p(\bx))$ for some degree at most $k$ real polynomial $p$.
Then, the VC-dimension of $\C_k$ is $\vc(\C_k)=\binom{m+k}{k}$.
\end{fact}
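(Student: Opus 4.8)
The plan is to identify $\C_k$ with a class of homogeneous linear threshold functions via the Veronese lift and then establish the two matching bounds. Let $\mathcal{P}_k$ denote the space of real polynomials on $\R^m$ of degree at most $k$, and set $D\eqdef\binom{m+k}{k}$; a stars-and-bars count together with the hockey-stick identity shows that $D$ is precisely the number of monomials in $m$ variables of degree at most $k$, so $\dim\mathcal{P}_k=D$. Let $\phi:\R^m\to\R^D$ send $\bx$ to the vector of all these monomials evaluated at $\bx$ (including the constant monomial $1$). Then every $p\in\mathcal{P}_k$ can be written $p(\bx)=\langle\bc,\phi(\bx)\rangle$ for a unique $\bc\in\R^D$, and conversely, so $\C_k$ is exactly the restriction to $\phi(\R^m)$ of the class of homogeneous linear threshold functions $\bz\mapsto\sgn(\langle\bc,\bz\rangle)$ on $\R^D$. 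It therefore suffices to show $\vc(\C_k)\le D$ and $\vc(\C_k)\ge D$.

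For the upper bound, given any $D+1$ points $\bx_1,\ldots,\bx_{D+1}\in\R^m$, the images $\bz_i\eqdef\phi(\bx_i)\in\R^D$ are linearly dependent, say $\sum_i a_i\bz_i=0$ with the $a_i$ not all zero. I would split the indices by the sign of $a_i$ and consider the dichotomy that labels the indices with $a_i>0$ by $+1$ and those with $a_i<0$ by $-1$. Pairing a hypothetical weight vector $\bc$ realizing this dichotomy against the identity $\sum_{i:a_i>0}a_i\bz_i=\sum_{i:a_i<0}(-a_i)\bz_i$ produces a quantity that is simultaneously $\ge0$ and $<0$ (and the degenerate cases, where all nonzero $a_i$ have the same sign, are handled the same way by pairing $\bc$ against $0$). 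Hence this dichotomy is not realizable, so no set of $D+1$ points is shattered. The one point needing care here is the convention $\sgn(0)=1$, which is accommodated by keeping the inequality on the $+1$ side non-strict throughout the argument.

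For the lower bound, it suffices to exhibit $\bx_1,\ldots,\bx_D\in\R^m$ for which the evaluation map $p\mapsto(p(\bx_1),\ldots,p(\bx_D))$ from $\mathcal{P}_k$ to $\R^D$ is a bijection: then for every sign pattern $\eps\in\{\pm1\}^D$ there is a $p\in\mathcal{P}_k$ with $p(\bx_i)=\eps_i$, so $\sgn(p(\bx_i))=\eps_i$ and the set is shattered. I would choose the points greedily: given $\bx_1,\ldots,\bx_j$ whose evaluation functionals are linearly independent on $\mathcal{P}_k$, if $j<D$ then the subspace of polynomials in $\mathcal{P}_k$ vanishing at all of them has dimension $D-j\ge1$, hence contains a nonzero $q$; picking $\bx_{j+1}$ with $q(\bx_{j+1})\neq0$ forces the new evaluation functional to be independent of the previous ones. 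After $D$ steps the evaluation map is an injection between $D$-dimensional spaces, hence a bijection, and the claim follows.

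The argument is otherwise routine; the only genuine bookkeeping is the sign analysis in the upper bound, including the $\sgn(0)=1$ corner case, together with confirming $\dim\mathcal{P}_k=\binom{m+k}{k}$.
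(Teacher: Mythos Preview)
The paper states this as a \emph{Fact} without proof, so there is no in-paper argument to compare against. Your proposal is the standard Veronese-lift argument and is essentially correct; it would serve perfectly well as a proof of the fact.

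One small point to tidy: in your upper-bound argument, the degenerate case where all nonzero $a_i$ are \emph{positive} does not actually yield a contradiction under your labeling. With $a_i>0$ labeled $+1$ you only get $\langle\bc,\bz_i\rangle\ge0$ (because $\sgn(0)=1$), so $\sum_{a_i>0}a_i\langle\bc,\bz_i\rangle\ge0$, which is compatible with equaling $0$. The fix is immediate: either note that the relation $\sum_i a_i\bz_i=0$ can be negated so that at least one coefficient is strictly negative, or swap your labeling convention to put the strict inequality on the side that is guaranteed nonempty. With that adjustment both bounds go through as written.
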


\section{Warmup: SQ Lower Bounds for General Weight Mixtures}\label{sec:SQ-mixture-LTFs}

\paragraph{SQ Lower Bound Machinery} 
We start by defining the family of distributions that we will use to prove 
our SQ hardness result.
\begin{definition}[\cite{DiakonikolasKPZ21}] \label{def:low-g}
Given a function $g:\R^m\to[-1,+1]$,
we define $\D_g$ to be the class of distributions over $\R^n\times\{\pm1\}$ of the form $(\bx,y)$ such that
$\bx\sim\mathcal{N}_n$ and $\E[y\mid \bx=\bz]=g(\bU\bz)$,
where $\bU\in\R^{m\times n}$ with $\bU\bU^\intercal=\bI_m$.
\end{definition}

The following proposition states that if $g$ has zero low-degree moments,
then distinguishing $\D_g$ from the distribution $(\bx,y)$ with $\bx\sim\mathcal{N}_n,y\sim\mathcal{U}(\{\pm1\})$ is hard in the SQ model.
\begin{proposition}[\cite{DiakonikolasKPZ21}] \label{prop:low-g}
Let $g:\R^m\to[-1,1]$ be such that $\E_{\bx\sim\mathcal{N}_m}[g(\bx)p(\bx)]=0$,
for every polynomial $p:\R^m\to\R$ of degree less than $k$,
and $\D_g$ be the class of distributions from~\cref{def:low-g}.
Then, if $m\le n^a$, for some constant $a<1/2$,
any SQ algorithm that solves the decision problem $\mathcal{B}(\D_g,\mathcal{N}_n\times\mathcal{U}(\{\pm1\}))$ must either use queries of tolerance $n^{-\Omega(k)}$,
or make at least $2^{n^{\Omega(1)}}$ queries.
\end{proposition}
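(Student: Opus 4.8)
The plan is to instantiate the generic SQ lower bound machinery based on pairwise $\chi^2$-correlations. Write $D_0 := \mathcal{N}_n\times\mathcal{U}(\{\pm1\})$ for the reference distribution, and for $\bU\in\R^{m\times n}$ with $\bU\bU^\intercal=\bI_m$ let $D_\bU\in\D_g$ be the distribution with $\bx\sim\mathcal{N}_n$ and $\E[y\mid\bx]=g(\bU\bx)$, so that $D_\bU/D_0-1 = y\,g(\bU\bx)$. First I would reduce the statement to two estimates on the correlation $\chi_{D_0}(D_\bU,D_{\bU'}):=\E_{D_0}[(D_\bU/D_0-1)(D_{\bU'}/D_0-1)]$. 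Using $y^2=1$ one gets immediately $\chi_{D_0}(D_\bU,D_{\bU'})=\E_{\bx\sim\mathcal{N}_n}[g(\bU\bx)g(\bU'\bx)]$; in particular the self-correlation equals $\E_{\bz\sim\mathcal{N}_m}[g(\bz)^2]\le 1$. Given a sub-family $\{D_{\bU_1},\dots,D_{\bU_s}\}\subseteq\D_g$ with $s=2^{n^{\Omega(1)}}$ whose pairwise correlations are all at most $\gamma=n^{-\Omega(k)}$, the standard relationship between pairwise correlations and the statistical-query dimension (e.g.~\cite{FeldmanGRVX17}, or the formulation used in~\cite{DKS17-sq}) yields that any SQ algorithm for $\mathcal{B}(\D_g,D_0)$ — which in particular must distinguish each $D_{\bU_i}$ from $D_0$ — either uses a query of tolerance $O(\sqrt{\gamma})=n^{-\Omega(k)}$ or makes $2^{n^{\Omega(1)}}$ queries. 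So the whole proof reduces to producing this family.

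The crucial estimate is the cross-correlation decay, and this is where the vanishing-moments hypothesis on $g$ enters. I would expand $g$ in Hermite tensors, $g(\bz)=\sum_{d\ge0}\langle\bar g_d,H_d(\bz)\rangle$; the hypothesis $\E_{\mathcal{N}_m}[g\cdot p]=0$ for all $\deg p<k$ forces $\bar g_d=0$ for $d<k$. Writing $\p:=\bU\bU'^\intercal$ (so $\|\p\|_2\le 1$, since both matrices have orthonormal rows) and using the partial-isometry identity $H_d(\bU\bz)=\bU^{\otimes d}[H_d(\bz)]$ together with orthonormality of the Hermite tensors, one finds $\E_{\bx\sim\mathcal{N}_n}[g(\bU\bx)g(\bU'\bx)]=\sum_{d\ge k}\langle\bar g_d,\p^{\otimes d}\bar g_d\rangle$. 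Since the operator norm of $\p^{\otimes d}$ is $\|\p\|_2^{\,d}$, each summand is at most $\|\p\|_2^{\,d}\|\bar g_d\|_2^2$, so
\[
\big|\E_{\bx\sim\mathcal{N}_n}[g(\bU\bx)g(\bU'\bx)]\big|\;\le\;\|\p\|_2^{\,k}\sum_{d\ge0}\|\bar g_d\|_2^2\;=\;\|\bU\bU'^\intercal\|_2^{\,k}\,\E_{\mathcal{N}_m}[g^2]\;\le\;\|\bU\bU'^\intercal\|_2^{\,k}.
\]
Hence it suffices to find $s=2^{n^{\Omega(1)}}$ matrices $\bU_i$ with orthonormal rows that are pairwise ``nearly orthogonal'' in the sense $\|\bU_i\bU_j^\intercal\|_2\le n^{-c}$ for some fixed $c=c(a)>0$.

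For that last step I would take the $\bU_i$ to be independent Haar-random $m\times n$ partial isometries. For a fixed pair $i\ne j$, standard concentration for the principal angles between two random $m$-dimensional subspaces of $\R^n$ gives $\Pr[\|\bU_i\bU_j^\intercal\|_2>\sqrt{m/n}+t]\le 2^{-\Omega(nt^2)}$. Choosing $t=n^{-c}$ with $c<(1-a)/2$ makes $\sqrt{m/n}\le n^{(a-1)/2}\le t$ and the per-pair failure probability at most $2^{-\Omega(n^{1-2c})}$; a union bound over all $\binom{s}{2}$ pairs succeeds as long as $\log s = o(n^{1-2c})$, which can be arranged with $s=2^{n^{b}}$ for a small enough constant $b>0$. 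With such a family, $\gamma\le(2n^{-c})^k=n^{-\Omega(k)}$, the self-correlation is $\le 1$, and $s=2^{n^{\Omega(1)}}$, so the reduction of the first paragraph gives the conclusion. The main obstacle is the cross-correlation bound of the second paragraph — it is the only step that uses the structure of $g$ in an essential way — whereas the subspace-packing argument is routine but must be calibrated so that $\log s$ stays below the concentration exponent $n^{1-2c}$, which is exactly where the hypothesis $m\le n^{a}$ with $a<1/2$ is used.
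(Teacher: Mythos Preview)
The paper does not prove this proposition: it is quoted from~\cite{DiakonikolasKPZ21} and used as a black box, so there is no in-paper proof to compare against. Your sketch is a correct reconstruction of the argument behind that cited result --- the likelihood-ratio identity $D_\bU/D_0-1=y\,g(\bU\bx)$, the Hermite-tensor computation giving $|\chi_{D_0}(D_\bU,D_{\bU'})|\le\|\bU\bU'^\intercal\|_2^{\,k}$ from the vanishing-moments hypothesis (this is essentially the paper's \cref{clm:inner-prod-ext} specialized to $p=q=g$), a random packing of $2^{n^{\Omega(1)}}$ near-orthogonal $m$-dimensional subspaces of $\R^n$ (which is where $m\le n^a$, $a<1/2$ is needed), and the pairwise-correlation $\Rightarrow$ SQ lower bound from~\cite{FeldmanGRVX17}/\cite{DKS17-sq}. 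The one notational wrinkle is that your identity ``$H_d(\bU\bz)=\bU^{\otimes d}H_d(\bz)$'' silently conflates the $m$- and $n$-variate Hermite tensors, but the identity is indeed valid for $\bU$ with $\bU\bU^\intercal=\bI_m$, and the rest of the argument is sound.
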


We will apply~\cref{prop:low-g} to establish our SQ lower bound for learning mixtures of linear classifiers.
The main technical contribution required to achieve this is the construction
of a class of distributions $\D_g$,
where each element in $\D_g$ represents a distribution of mixture of linear classifiers.
In particular, we will carefully choose some appropriate unit vectors $\bv_1,\ldots,\bv_r\in\R^m$ and non-negative weights $w_1,\ldots,w_r$ with $\sum_{\ell=1}^rw_\ell=1$,
such that $\bv_\ell$ are pairwise separated by $\Delta>0$.

Let $g(\bz)=\sum_{\ell=1}^rw_\ell \sgn(\bv_\ell^{\intercal}\bz),\bz\in\R^m$.
For an arbitrary matrix $\bU\in\R^{m\times n}$ with $\bU\bU^{\intercal}=\bI_m$,
we denote by $D_{\bU}$ the instance of mixture of linear classifiers with weight vectors $\bU^{\intercal}\bv_1,\ldots,\bU^{\intercal}\bv_r$ and weights $w_1,\ldots,w_r$.
In this way, we have that
$$\E_{(\bx,y)\sim D_{\bU}}[y\mid\bx=\bz]=\sum_{\ell=1}^rw_\ell \sgn((\bU^{\intercal}\bv_\ell)^{\intercal}\mathbf{z})=\sum_{\ell=1}^rw_\ell \sgn(\bv_\ell^{\intercal}\bU\mathbf{z})=g(\bU\mathbf{z}),\bz\in\R^n.$$

\subsection{Low-degree Moment Matching}\label{ssec:low-degree}
The following proposition shows that there exist unit vectors $\bv_1,\ldots,\bv_r$ and non-negative weights $w_1,\ldots,w_r$ with $\sum_{\ell=1}^rw_\ell=1$
such that $\bv_\ell$ are pairwise separated 
by some parameter $\Delta>0$ and the low-degree moments of $g$ vanish.
Note that since $g$ is odd (as the $\sign$ function is odd),
we only require $\E_{\bz\in\mathcal{N}_m}[g(\bz)p(\bz)]=0$ for every odd polynomial $p$ of degree less than $k$.

\begin{proposition}\label{prop:moment-matching}
Let $m=\frac{c\log r}{\log(1/\Delta)}$, where $r^{-1/10}\le\Delta<1$ and $1.99\le c\le2$ is a universal constant.
Let $k$ be a positive integer such that $r\ge C\binom{m+k}{k}$ for some constant $C>0$ sufficiently large.
There exist vectors $\bv_1,\ldots,\bv_r$ over the unit sphere $\mathbb{S}^{m-1}$ and non-negative weights $w_1,\ldots,w_r$ with $\sum_{\ell=1}^rw_\ell=1$, such that
\begin{itemize}[leftmargin=*]
\item $\E_{\bz\sim\mathcal{N}_m}[g(\bz)p(\bz)]=0$ holds for every odd polynomial $p$ of degree less than $k$,
where $g(\bz)=\sum_{\ell=1}^rw_\ell \sgn(\bv_\ell^{\intercal}\bz)$.
\item $\|\bv_i+\bv_j\|_2,\|\bv_i-\bv_j\|_2\ge\Omega(\Delta), \forall1\le i<j\le r$.
\end{itemize}
\end{proposition}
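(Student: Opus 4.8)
The plan is to proceed in three steps: reduce the moment condition to a weighted spherical design condition, characterize via linear‑programming duality which configurations of directions admit valid weights, and then build a configuration that is simultaneously well separated and rich enough. For the \emph{reduction}, write $\bz=\rho\bu$ with $\rho=\|\bz\|_2$ and $\bu\in\mathbb{S}^{m-1}$ independent and use $\sgn(\bv_\ell^\intercal\bz)=\sgn(\bv_\ell^\intercal\bu)$: for a homogeneous harmonic polynomial $q$ of degree $b$ and any $a\ge 0$, $\E_{\bz\sim\mathcal{N}_m}[\sgn(\bv_\ell^\intercal\bz)\|\bz\|_2^{2a}q(\bz)]=\E_\rho[\rho^{2a+b}]\,\E_{\bu\sim\mathcal{U}(\mathbb{S}^{m-1})}[\sgn(\bv_\ell^\intercal\bu)q(\bu)]=\E_\rho[\rho^{2a+b}]\,c_b\,q(\bv_\ell)$, where by the Funk--Hecke theorem $c_b\neq0$ when $b$ is odd (and $c_b=0$ when $b$ is even). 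Since every odd polynomial of degree $<k$ is a combination of terms $\|\bz\|_2^{2a}q(\bz)$ with $q$ harmonic of odd degree $b$ and $2a+b<k$, this shows $\E_{\bz\sim\mathcal{N}_m}[g(\bz)p(\bz)]=0$ for all odd $p$ of degree $<k$ iff $\sum_\ell w_\ell q(\bv_\ell)=0$ for every spherical harmonic $q$ of odd degree $<k$; as the $\bv_\ell$ are unit and odd polynomials integrate to $0$ over $\mathcal{U}(\mathbb{S}^{m-1})$, this is exactly the condition that $\sum_\ell w_\ell p(\bv_\ell)=\E_{\bu\sim\mathcal{U}(\mathbb{S}^{m-1})}[p(\bu)]=0$ for all odd $p$ with $\deg p<k$, i.e.\ that $(\bv_\ell,w_\ell)_\ell$ be a weighted spherical design matching odd polynomials of degree $<k$ (cf.\ \cref{def:sph-design} and the discussion after it). So it suffices to find $\Omega(\Delta)$-separated $\bv_1,\dots,\bv_r\in\mathbb{S}^{m-1}$ and nonnegative weights summing to $1$ with this property.

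For the \emph{LP duality} step, let $p_1,\dots,p_D$ be a basis of the space of odd polynomials of degree $<k$ restricted to $\mathbb{S}^{m-1}$ (so $D=O\big(\binom{m+k}{k}\big)$) and $\Phi(\bv)=(p_1(\bv),\dots,p_D(\bv))$. For a fixed tuple of directions, valid weights exist iff $0\in\mathrm{conv}\{\Phi(\bv_1),\dots,\Phi(\bv_r)\}$, and by the separating hyperplane theorem this fails exactly when some odd polynomial $q$ of degree $<k$, not identically zero on the sphere, satisfies $q(\bv_\ell)>0$ for all $\ell$. Hence it is enough to produce $\Omega(\Delta)$-separated directions for which every such $q$ is nonpositive at some $\bv_\ell$; and since these directions will be pairwise distinct as lines, the functions $\sgn(\bv_\ell^\intercal\cdot)$ are linearly independent, so the resulting $g$ is automatically nonzero (a genuine instance, as needed downstream for \cref{prop:low-g}).

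For the \emph{construction}, I would draw $\bv_1,\dots,\bv_r$ i.i.d.\ uniform on $\mathbb{S}^{m-1}$. By the VC inequality (\cref{lem:VC-ineq}) applied to degree-$<k$ polynomial threshold functions, whose VC dimension is $O\big(\binom{m+k}{k}\big)$ by \cref{fact:VC-dim}, together with the hypothesis $r\ge C\binom{m+k}{k}$, with high probability, simultaneously for every odd polynomial $q$ of degree $<k$ that is nonzero on the sphere (so $\Pr_{\bv\sim\mathcal{U}(\mathbb{S}^{m-1})}[q(\bv)>0]=1/2$ by oddness), at least $r/4$ of the $\bv_\ell$ satisfy $q(\bv_\ell)\le0$. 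Also, a spherical cap of radius $\Delta$ has measure $O(\Delta^{m-1})$, and from $m\log(1/\Delta)=c\log r$ one gets $\Delta^{m-1}=r^{-c}/\Delta$, so with $c\ge1.99$ and $1/\Delta\le r^{1/10}$ the expected number of pairs with $\min(\|\bv_i-\bv_j\|_2,\|\bv_i+\bv_j\|_2)<\Delta$ is $O(r^2\Delta^{m-1})=O(r^{2-c+1/10})=o(r)$, hence with high probability at most $o(r)$ of the $\bv_\ell$ lie in such a pair. I would then replace each offending $\bv_\ell$ by a fresh unit vector that is $\Omega(\Delta)$-separated (as a line) from all the others, which is possible because the forbidden region has measure $o(1)$; this preserves the count $r$, makes the tuple $\Omega(\Delta)$-separated, and, since only $o(r)$ points are moved while the bound of ``at least $r/4$ good points'' holds uniformly over $q$, keeps every relevant $q$ nonpositive on at least $r/4-o(r)>0$ of the points. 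By the LP‑duality step this yields the weights, and by the reduction step the moment condition holds. The main obstacle is exactly this last simultaneity: a plain i.i.d.\ sample is rich enough but, in the regime $m=\Theta(\log r/\log(1/\Delta))$ we are forced into, is not $\Delta$-separated, and it is only because the separation violation touches $o(r)$ points while the richness bound is uniform over all low-degree polynomials that the repair is harmless — reconciling these two requirements is what pins down the relation among $m,\Delta,r$ and the admissible range of the constant $c$.
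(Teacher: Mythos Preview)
Your proposal is correct and follows essentially the same approach as the paper: draw the $\bv_\ell$ i.i.d.\ uniformly on $\mathbb{S}^{m-1}$, reduce the moment condition to $\sum_\ell w_\ell q(\bv_\ell)=0$ for all odd $q$ of degree $<k$, characterize via LP duality (Farkas) when such weights exist, and verify the ``no strictly positive odd polynomial'' condition uniformly via the VC inequality for degree-$k$ PTFs (\cref{lem:VC-ineq}, \cref{fact:VC-dim}).

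Two minor differences are worth noting. First, your reduction goes through Funk--Hecke on the sphere rather than the Hermite-tensor computation of \cref{lem:poly}; both land on the same target condition, and yours is a clean alternative. Second, and more substantively, the paper avoids your repair step entirely. It invokes a sharp estimate on the minimum pairwise angle of $r$ i.i.d.\ uniform points on $\mathbb{S}^{m-1}$ (\cref{fact:arg}) and uses the slack in $c\in[1.99,2]$ precisely so that $r=1/\sqrt{50\kappa_{m-1}\Delta^{m-1}}$, which gives $\Pr[\Theta\ge\Delta]\ge 99/100$ directly; a union bound with the VC event then finishes. Your cruder cap-volume estimate yields only $\E[\#\text{bad pairs}]=O(r^{2-c+1/10})=o(r)$ (possibly $\omega(1)$), which is why you add the repair. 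The repair argument is sound --- moving $o(r)$ points cannot destroy the uniform ``$\ge r/4$ nonpositive'' guarantee --- but the ``main obstacle'' you describe (that an i.i.d.\ sample is rich enough but not $\Delta$-separated in this regime) is not actually present once $c$ is tuned; both properties hold simultaneously on the raw sample.
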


The proof proceeds as follows:
We uniformly sample unit vectors $\bv_1,\ldots,\bv_r\in\mathbb{S}^{m-1}$.
We will prove by the probabilistic method that both statements 
in~\cref{prop:moment-matching} hold with high probability.
We begin by showing that 
$\pr[\min_{1\le i<j\le r}\{\|\bv_i-\bv_j\|_2,\|\bv_i+\bv_j\|_2\}<O(\Delta)]$ is small.
By symmetry, it suffices to bound $\pr[\min_{1\le i<j\le r}\|\bv_i-\bv_j\|_2<O(\Delta)]$.
Let $\Theta = \min_{1\le i<j\le r}\arccos\langle \bv_i,\bv_j\rangle$. 
We require the following facts:
\begin{fact}[Proposition 3.5 in~\cite{brauchart2018random}]\label{fact:arg}
$\pr\left[\Theta\ge\gamma r^{-\frac{2}{m-1}}\right]\ge1-\frac{\kappa_{m-1}}{2}\gamma^{m-1}$,
where $\kappa_m=\frac{\Gamma((m+1)/2)}{m\sqrt{\pi}\Gamma(m/2)}\in\left[\frac{1}{m}\sqrt{\frac{m-1}{2\pi}},\frac{1}{m}\sqrt{\frac{m+1}{2\pi}}\right]$.
\end{fact}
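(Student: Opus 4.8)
This is Proposition~3.5 of~\cite{brauchart2018random}; the plan is to give a short self-contained proof by a first-moment (union-bound) argument over the $\binom r2$ pairs of points. First I would handle a single pair: fixing $i\neq j$ and using rotational invariance of the uniform measure on $\mathbb{S}^{m-1}$, conditioning on $\bv_i$ shows that $\pr[\arccos\langle\bv_i,\bv_j\rangle<\theta]$ equals the normalized volume $C_m(\theta)$ of a spherical cap of angular radius $\theta$, which in polar coordinates about $\bv_i$ is
\[
C_m(\theta)=\frac{\int_0^\theta\sin^{m-2}\phi\,d\phi}{\int_0^\pi\sin^{m-2}\phi\,d\phi},
\]
because the relative surface area at polar angle $\phi$ is proportional to $\sin^{m-2}\phi$.

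Next I would bound the numerator and evaluate the denominator. Using $\sin\phi\le\phi$, the numerator is at most $\int_0^\theta\phi^{m-2}\,d\phi=\theta^{m-1}/(m-1)$, while the denominator equals the Beta integral $B(1/2,(m-1)/2)=\sqrt{\pi}\,\Gamma((m-1)/2)/\Gamma(m/2)$ exactly. Dividing,
\[
C_m(\theta)\le\frac{\Gamma(m/2)}{(m-1)\sqrt{\pi}\,\Gamma((m-1)/2)}\,\theta^{m-1}=\kappa_{m-1}\,\theta^{m-1},
\]
where the last equality is exactly the definition of $\kappa_{m-1}$. A union bound over the $\binom r2\le r^2/2$ pairs then gives $\pr[\Theta<\theta]\le\tfrac{r^2}{2}\kappa_{m-1}\theta^{m-1}$, and substituting $\theta=\gamma r^{-2/(m-1)}$ (so that $\theta^{m-1}=\gamma^{m-1}r^{-2}$) makes the $r^2$ cancel, yielding $\pr[\Theta<\gamma r^{-2/(m-1)}]\le\frac{\kappa_{m-1}}{2}\gamma^{m-1}$; taking complements gives the stated inequality.

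For the two-sided estimate on $\kappa_m=\Gamma((m+1)/2)/(m\sqrt{\pi}\,\Gamma(m/2))$, I would invoke a standard Gamma-ratio inequality, e.g. $\sqrt{x-1/4}\le\Gamma(x+1/2)/\Gamma(x)\le\sqrt{x}$ with $x=m/2$ (a Gautschi--Kershaw-type bound), which immediately sandwiches $\kappa_m$ between $\tfrac1m\sqrt{(m-1)/(2\pi)}$ and $\tfrac1m\sqrt{(m+1)/(2\pi)}$ --- in fact slightly tighter on the upper side. There is no real obstacle here; the one point that needs care is keeping the leading constant exactly equal to $\kappa_{m-1}$ rather than some absolute-constant multiple of it, which forces us to evaluate the full-sphere normalization through the exact Beta value instead of bounding it crudely --- everything else is an elementary cap-volume computation plus a union bound.
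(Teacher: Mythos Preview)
The paper does not actually prove this statement: it is quoted verbatim as Proposition~3.5 of~\cite{brauchart2018random} and used as a black box, so there is no in-paper argument to compare against. Your self-contained derivation is correct and is essentially the standard one underlying the cited result: rotational invariance reduces the pair event to a spherical-cap probability, the cap area is bounded via $\sin\phi\le\phi$ together with the exact Beta-integral normalization to recover the precise constant $\kappa_{m-1}$, and a union bound over $\binom{r}{2}\le r^2/2$ pairs combined with the substitution $\theta=\gamma r^{-2/(m-1)}$ gives exactly the stated inequality. The Gamma-ratio sandwich you invoke (a Kershaw/Gautschi-type bound $\sqrt{x-1/4}\le\Gamma(x+\tfrac12)/\Gamma(x)\le\sqrt{x}$ at $x=m/2$) indeed yields the stated range for $\kappa_m$, even slightly sharper on the upper side. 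There is nothing to correct; your write-up simply supplies a proof where the paper only gives a citation.
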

Applying~\cref{fact:arg} by taking $\gamma=\left(\frac{1}{50\kappa_{m-1}}\right)^{\frac{1}{m-1}}$
and $r=\frac{1}{\sqrt{50\kappa_{m-1}\Delta^{m-1}}}$ yields that $\pr[\Theta\ge\Delta]\ge99/100$.
Given some fixed random vectors $\bv_1,\ldots,\bv_r\in\mathbb{S}^{m-1}$,
we will prove that with high probability there exist non-negative weights $w_1,\ldots,w_r$ with $\sum_{\ell=1}^rw_\ell=1$ such that $\E_{\bz\sim\mathcal{N}_m}[g(\bz)p(\bz)]=0$ holds for every odd polynomial $p$ of degree less than $k$,
where $g(\bz)=\sum_{\ell=1}^rw_\ell \sgn(\bv_\ell^{\intercal}\bz)$.
Noting that $\E_{\bz\sim\mathcal{N}_m}\left[g(\bz)p(\bz)\right]=\sum_{\ell=1}^rw_\ell\E_{\bz\sim\mathcal{N}_m}\left[p(\bz)\sgn(\bv_\ell^{\intercal}\bz)\right]$,
it suffices to bound the probability that there exist non-negative weights $w_1,\ldots,w_r$ with $\sum_{\ell=1}^rw_\ell=1$
such that $\sum_{\ell=1}^rw_\ell\E_{\bz\sim\mathcal{N}_m}\left[p(\bz)\sgn(\bv_\ell^{\intercal}\bz)\right]=0$ holds for every odd polynomial $p$ of degree less than $k$.
Our main technical lemma is the following:

\begin{lemma}\label{lem:poly}
Let $p:\R^m\to\R$ be a polynomial of degree less than $k$ and $f\in\L^2(\R,\mathcal{N})$.
Let $\bv\in\mathbb{S}^{m-1}$ be a unit vector.
We have that $\E_{\bz\sim\mathcal{N}_m}\left[p(\bz)f(\bv^{\intercal}\bz)\right]$ is a polynomial in $\bv$ of degree less than $k$.
\end{lemma}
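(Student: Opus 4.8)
The plan is to expand $f$ in the Hermite basis and use the orthogonality of Hermite components of distinct degrees in $\L^2(\R^m,\mathcal{N}_m)$. Let $\{h_j\}_{j\ge0}$ be the normalized (probabilists') Hermite polynomials, an orthonormal basis of $\L^2(\R,\mathcal{N})$, and write $f=\sum_{j\ge0}\hat f_j h_j$. Since $\bv$ is a unit vector, $\bv^\intercal\bz$ is distributed as $\mathcal{N}$ when $\bz\sim\mathcal{N}_m$, so $g\mapsto g(\bv^\intercal\cdot)$ is an isometric embedding of $\L^2(\R,\mathcal{N})$ into $\L^2(\R^m,\mathcal{N}_m)$; in particular $f(\bv^\intercal\bz)=\sum_{j}\hat f_j\, h_j(\bv^\intercal\bz)$ converges in $\L^2(\R^m,\mathcal{N}_m)$. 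As $p\in\L^2(\R^m,\mathcal{N}_m)$, pairing with $p$ term by term (justified by Cauchy--Schwarz/Parseval) gives
\[
\E_{\bz\sim\mathcal{N}_m}\big[p(\bz)f(\bv^\intercal\bz)\big]=\sum_{j\ge0}\hat f_j\,\E_{\bz\sim\mathcal{N}_m}\big[p(\bz)\,h_j(\bv^\intercal\bz)\big].
\]

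The next step is to truncate this sum. I would show that $\E_{\bz\sim\mathcal{N}_m}[p(\bz)\,h_j(\bv^\intercal\bz)]=0$ whenever $j>\deg p$, hence in particular for all $j\ge k$. For this, note that $h_j(\bv^\intercal\bz)$ is orthogonal in $\L^2(\R^m,\mathcal{N}_m)$ to every polynomial of degree less than $j$: applying an orthogonal change of variables taking $\bv$ to $\be_1$ (which preserves $\mathcal{N}_m$ and polynomial degree) reduces this to the one-dimensional statement that $h_j\perp\{1,t,\dots,t^{j-1}\}$ in $\L^2(\R,\mathcal{N})$, after integrating out the remaining coordinates monomial by monomial. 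Therefore
\[
\E_{\bz\sim\mathcal{N}_m}\big[p(\bz)f(\bv^\intercal\bz)\big]=\sum_{j=0}^{k-1}\hat f_j\,\E_{\bz\sim\mathcal{N}_m}\big[p(\bz)\,h_j(\bv^\intercal\bz)\big].
\]

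Finally, I would check that for each fixed $j\le k-1$ the map $\bv\mapsto\E_{\bz\sim\mathcal{N}_m}[p(\bz)\,h_j(\bv^\intercal\bz)]$ is a polynomial in $\bv$ of degree at most $j$. Writing $h_j(t)=\sum_{i=0}^{j}c_i t^i$ and expanding $(\bv^\intercal\bz)^i=\sum_{|\alpha|=i}\binom{i}{\alpha}\bv^\alpha\bz^\alpha$ yields
\[
\E_{\bz\sim\mathcal{N}_m}\big[p(\bz)\,h_j(\bv^\intercal\bz)\big]=\sum_{i=0}^{j}c_i\sum_{|\alpha|=i}\binom{i}{\alpha}\Big(\E_{\bz\sim\mathcal{N}_m}[p(\bz)\bz^\alpha]\Big)\bv^\alpha,
\]
which is a polynomial in $\bv$ whose monomials $\bv^\alpha$ satisfy $|\alpha|=i\le j$ and whose coefficients are finite constants. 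Summing over $j\le k-1$ shows that $\E_{\bz\sim\mathcal{N}_m}[p(\bz)f(\bv^\intercal\bz)]$ is a polynomial in $\bv$ of degree at most $k-1$, i.e.\ of degree less than $k$.

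I expect the only genuine obstacle to be the analytic bookkeeping in the first display --- i.e.\ justifying term-by-term integration of the Hermite expansion of $f$ against $p$, which is precisely where the hypothesis $f\in\L^2(\R,\mathcal{N})$ (rather than $f$ polynomial) is used; the remaining steps are elementary. An alternative route that sidesteps Hermite analysis: condition on $s:=\bv^\intercal\bz$, writing $\bz=s\bv+\bw$ with $\bw:=(\bI-\bv\bv^\intercal)\bz$ independent of $s$ and $\bw\sim\mathcal{N}(0,\bI-\bv\bv^\intercal)$; a multivariate Taylor expansion of the polynomial $p$ about $\bw$ gives $\E_{\bw}[p(s\bv+\bw)]=\sum_{d=0}^{k-1}a_d(\bv)\,s^d$, where each $a_d(\bv)$ is seen (via Wick's formula, using that the entries of $\bI-\bv\bv^\intercal$ are quadratic in $\bv$) to be a polynomial in $\bv$ of degree less than $k$, so that $\E_{\bz\sim\mathcal{N}_m}[p(\bz)f(\bv^\intercal\bz)]=\sum_{d=0}^{k-1}a_d(\bv)\,\E_{s\sim\mathcal{N}}[s^d f(s)]$ is a polynomial in $\bv$ of degree less than $k$.
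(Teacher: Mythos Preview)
Your proof is correct. Both you and the paper use the Hermite expansion of $f$ and the orthogonality of distinct Hermite degrees to truncate the sum at $j\le k-1$, so the high-level structure is the same. The execution differs: the paper first proves a general tensor-contraction identity (its Claim in the appendix), decomposing \emph{both} $p$ and $f$ into their harmonic (Hermite) components, invoking the fact that $p^{[m]}(\bV\bx)$ is again harmonic of degree $m$ and the identity $\langle\nabla^m p,\nabla^m q\rangle=m!\,\E[pq]$ for harmonic $p,q$, to arrive at the closed form $\E[p(\bz)f(\bv^\intercal\bz)]=\sum_{d=0}^{k-1}\frac{1}{d!}\langle \mathbf{R}_1^d,\bv^{\otimes d}\mathbf{R}_2^d\rangle$. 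You instead expand only $f$, establish the truncation by an elementary rotation-to-$\be_1$ argument, and then read off the polynomial dependence on $\bv$ via a direct multinomial expansion of $(\bv^\intercal\bz)^i$. Your route is shorter and avoids the harmonic-polynomial machinery (Fact~A.3 and the lemma on $p(\bV\bx)$); the paper's route yields an explicit formula in terms of the Hermite tensors of $p$ and $f$, and is stated in greater generality (arbitrary $\bU,\bV$ with orthonormal rows), though that extra generality is not used elsewhere. Your alternative conditioning argument is also valid and is a genuinely different, Hermite-free route.
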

The proof proceeds by analyzing $p$ and $f$ as Hermite expansions.
For completeness, we defer the proof of~\cref{lem:poly} to~\cref{sec:omit-SQ-mixture-LTFs}.
From~\cref{lem:poly}, we can write $\sum_{\ell=1}^rw_\ell\E_{\bz\sim\mathcal{N}_m}\left[p(\bz)\sgn(\bv_\ell^{\intercal}\bz)\right]=\sum_{\ell=1}^r w_\ell q(\bv_\ell)$ for some odd polynomial $q$ of degree less than $k$ (since $\sgn$ is odd).
In this way, it suffices to show that with high probability there exist non-negative weights $w_1,\ldots,w_r$ with $\sum_{\ell=1}^rw_\ell=1$
such that $\sum_{\ell=1}^r w_\ell q(\bv_\ell)=0$ for all odd polynomials $q$ of degree less than $k$.
To prove this, we leverage the following lemma: 

\begin{lemma}\label{lem:equiv}
The following two statements are equivalent.
\begin{enumerate}[leftmargin=*]
\item There exist non-negative weights $w_1,\ldots,w_r$ with $\sum_{\ell=1}^rw_\ell=1$ such that $\sum_{\ell=1}^r w_\ell q(\bv_\ell)=0$ for all odd polynomials $q$ of degree less than $k$.
\item There does not exist any odd polynomial $q$ of degree less than $k$ such that $q(\bv_\ell)>0,1\le\ell\le r$.
\end{enumerate}
\end{lemma}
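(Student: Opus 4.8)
The plan is to prove the equivalence by viewing it as an instance of linear programming duality (a separating hyperplane / Farkas-type argument) over the finite-dimensional space of odd polynomials of degree less than $k$. Let $V$ denote the real vector space of odd polynomials $q:\R^m\to\R$ of degree less than $k$, which is finite-dimensional, and for each $\ell\in[r]$ let $L_\ell\in V^\ast$ be the evaluation functional $q\mapsto q(\bv_\ell)$. Statement (1) asserts that the origin of $V^\ast$ lies in the convex hull of $\{L_1,\ldots,L_r\}$ (since the $w_\ell$ range over the probability simplex), while the negation of statement (2) asserts the existence of a $q\in V$ with $\langle L_\ell, q\rangle = q(\bv_\ell) > 0$ for all $\ell$.

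First I would prove the easy direction, $(1)\Rightarrow(2)$, by contraposition: if some odd polynomial $q$ of degree $<k$ satisfied $q(\bv_\ell)>0$ for all $\ell$, then for any non-negative weights summing to $1$ we would have $\sum_\ell w_\ell q(\bv_\ell) > 0$, contradicting the moment-matching equation for that particular $q$. For the converse $(2)\Rightarrow(1)$, I would argue contrapositively as well: suppose no such convex combination exists, i.e., $0 \notin \mathrm{conv}\{L_1,\ldots,L_r\}$ in $V^\ast$. The convex hull of finitely many points is compact and convex, so by the separating hyperplane theorem there is a linear functional on $V^\ast$ — which is just an element $q\in V$ by finite-dimensionality and reflexivity — and a constant $\delta$ such that $\langle L_\ell, q\rangle \ge \delta > 0 = \langle 0, q\rangle$ for every $\ell$; that is, $q(\bv_\ell) \ge \delta > 0$ for all $\ell$, and $q$ is an odd polynomial of degree less than $k$, establishing the negation of (2). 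One subtlety to handle carefully: the set $\{w : w_\ell\ge 0, \sum w_\ell = 1\}$ is the closed simplex, whose image under $w\mapsto \sum_\ell w_\ell L_\ell$ is exactly $\mathrm{conv}\{L_1,\ldots,L_r\}$, so statement (1) is precisely the assertion $0\in \mathrm{conv}\{L_\ell\}$ — no closure issues arise because the simplex is already compact.

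I do not anticipate a genuine obstacle here; this is a clean compactness/duality argument. The only point requiring a little care is making sure the ambient space $V$ is the right one — odd polynomials of degree strictly less than $k$ — so that the separating functional produced in the hard direction is itself a member of this class (automatic, since $V$ is exactly the space we separate over), and so that the dimension is finite (ensuring the separating hyperplane theorem applies and that linear functionals on $V^\ast$ are evaluation against elements of $V$). If one prefers to avoid invoking reflexivity, one can instead phrase the whole argument inside $V^\ast$ directly: a point outside a compact convex set is strictly separated from it by an affine hyperplane whose normal is a linear functional on $V^\ast$, and every linear functional on the finite-dimensional space $V^\ast$ is of the form $L\mapsto \langle L, q\rangle$ for a unique $q\in V$.
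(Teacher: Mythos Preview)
Your proposal is correct and takes essentially the same approach as the paper: both directions are argued by contraposition, with the nontrivial direction $(2)\Rightarrow(1)$ reduced to a linear-programming duality / separation argument. The only cosmetic difference is that the paper writes out an explicit matrix and invokes Farkas' lemma in coordinates (using a monomial basis for the odd polynomials), whereas you phrase the same step geometrically via the separating hyperplane theorem applied to $\mathrm{conv}\{L_1,\ldots,L_r\}\subset V^\ast$; these are equivalent formulations of the same duality fact.
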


\noindent The proof of~\cref{lem:equiv} follows via 
a careful application of LP duality;
see~\cref{sec:omit-SQ-mixture-LTFs}.

\begin{proof}[Proof of~\cref{prop:moment-matching}]
Let $\Theta = \min_{1\le i<j\le r}\arccos\langle \bv_i,\bv_j\rangle$. 
Applying~\cref{fact:arg} by taking $\gamma=\left(\frac{1}{50\kappa_{m-1}}\right)^{\frac{1}{m-1}}$ yields $
\pr[\Theta\ge\Delta]\ge\pr\left[\Theta\ge\gamma r^{-\frac{2}{m-1}}\right]\ge99/100$
(since our choice of $m$ will imply that
$r=1/\sqrt{50\kappa_{m-1}\Delta^{m-1}}$).
This will imply that 
$$\pr\Big[\min_{1\le i<j\le r}\{\|\bv_i-\bv_j\|_2,\|\bv_i+\bv_j\|_2\}\ge\Omega(\Delta)\Big]\ge99/100 \;.$$
We then show that there exist non-negative weights $w_1,\ldots,w_r$ with $\sum_{\ell=1}^rw_\ell=1$
such that $\E_{\bz\sim\mathcal{N}_m}[g(\bz)p(\bz)]=0$ holds for every odd polynomial $p$ with degree less than $k$,
where $g(\bz)=\sum_{\ell=1}^rw_\ell \sgn(\bv_\ell^{\intercal}\bz)$.
By~\cref{lem:poly}, it suffices to show that
there exist non-negative weights $w_1,\ldots,w_r$ with $\sum_{\ell=1}^rw_\ell=1$ such that
$\sum_{\ell=1}^r w_\ell q(\bv_\ell)=0$ for all odd polynomials $q$ of degree less than $k$.
By~\cref{fact:VC-dim}, we have that $r\ge C \; \binom{m+k}{k}=C\vc(\C_k)$ for some sufficiently large constant $C>0$.
Note that for any odd polynomial $q$ in $m$ variables of degree less than $k$, we have that $\E[\sgn(q(\bv))]=0$. Therefore, by the VC-inequality (\cref{lem:VC-ineq}), we have that
\begin{align*}
&\quad\pr[\exists\text{ odd polynomial }q\text{ of degree less than }k\text{ such that }q(\bv_\ell)>0,1\le\ell\le r]
\\&\le\pr\left[\sup_{f\in\C_{k-1}}\frac{\sum_{\ell=1}^rf(\bv_\ell)}{r}=1\right]\le\pr\left[\sup_{f\in\C_k}\left|\frac{\sum_{\ell=1}^rf(\bv_\ell)}{r}-\E[f(\bv)]\right|\ge1\right]\\&\le\exp(-\Omega(r))\le1/100.
\end{align*}
Finally, applying~\cref{lem:equiv} completes our proof.
\end{proof}

\subsection{Proof of SQ Lower Bound for General Weights}
Let $k$ be a positive integer such that $r\ge C\binom{m+k}{k}$ for some constant $C>0$ sufficiently large.
Let $m=\frac{c\log r}{\log(1/\Delta)}$, where $r^{-1/10}\le\Delta<1$ and 
$1.99\le c\le2$. 
By~\cref{prop:moment-matching}, there exist unit vectors $\bv_1,\ldots,\bv_r\in\R^m$ such that
\begin{itemize}[leftmargin=*]
\item There exist non-negative weights $w_1,\ldots,w_r$ with $\sum_{\ell=1}^rw_\ell=1$
such that $\E_{\bz\sim\mathcal{N}_m}[g(\bz)p(\bz)]=0$ holds for every odd polynomial $p$ with degree less than $k$,
where $g(\bz)=\sum_{\ell=1}^rw_\ell \sgn(\bv_\ell^{\intercal}\bz)$.
\item $\|\bv_i+\bv_j\|_2,\|\bv_i-\bv_j\|_2\ge \Omega(\Delta), \forall1\le i<j\le r$ for some $\Delta>0$.
\end{itemize}
For an arbitrary matrix $\bU\in\R^{m\times n}$ with $\bU\bU^{\intercal}=\bI_m$,
denote by $D_{\bU}$ the instance of mixture of linear classifiers with weight vectors $\bU^{\intercal}\bv_1,\ldots,\bU^{\intercal}\bv_r$ and weights $w_1,\ldots,w_r$.
Let $\D_g=\{\D_{\bU}\mid \bU\in\R^{m\times n}, \bU\bU^{\intercal}=\bI_m\}$.
We have that $\|\bU^{\intercal}\bv_i\pm\bU^{\intercal}\bv_j\|_2=\|\bv_i\pm\bv_j\|_2\ge\Omega(\Delta),1\le i<j\le r$, and
$\E_{(\bx,y)\sim D_{\bU}}[y\mid\bx=\bz]=\sum_{\ell=1}^rw_\ell \sgn((\bU^{\intercal}\bv_\ell)^{\intercal}\mathbf{z})=\sum_{\ell=1}^rw_\ell \sgn(\bv_\ell^{\intercal}\bU\mathbf{z})=g(\bU\mathbf{z}),\bz\in\R^n.$
By~\cref{prop:low-g}, any SQ algorithm for the decision problem 
$\mathcal{B}(\D_g,\mathcal{N}_n\times\mathcal{U}(\{\pm1\}))$ must either 
use queries of tolerance $n^{-\Omega(k)}$,
or make at least $2^{n^{\Omega(1)}}$ queries.
The last step is to reduce the decision problem $\mathcal{B}(\D_g,\mathcal{N}_n\times\mathcal{U}(\{\pm1\}))$ 
to the problem of learning mixture of linear classifiers. 
\begin{claim}[see, e.g., Lemma 8.5 in~\cite{DK}]\label{clm:test-to-learn}
Suppose there exists an SQ algorithm to learn an unknown distribution in a family $\D$ to total variation distance $\eps$ using at most $N$ statistical queries of tolerance $\tau$. Suppose furthermore that for each $D'\in\D$ we have that $\dtv(D,D')>2(\tau+\eps)$. Then there exists an SQ algorithm that solves the testing problem $\mathcal{B}(\D,D)$ using at most $n+1$ queries of tolerance $\tau$.
\end{claim}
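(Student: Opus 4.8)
To prove \cref{clm:test-to-learn}, the plan is to use the hypothesized learner as a black box and then spend a single extra statistical query to test its output against the fixed, known distribution $D$. First I would run the promised SQ learner $\A$ on the unknown input distribution $D'$ (which is promised to equal $D$ or to lie in $\D$), answering each of its at most $N$ queries by forwarding it verbatim to the $\mathrm{STAT}(\tau)$ oracle for $D'$; this is legitimate since $\A$ only ever issues queries of tolerance $\tau$. Let $H$ be the hypothesis distribution it returns. Because we have simulated $\A$ ourselves, $H$ --- and hence the number $\dtv(H,D)$ --- is explicitly known to us.

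The crux is the following dichotomy. If $D'\in\D$, the learner's guarantee yields $\dtv(H,D')\le\eps$, which together with the separation hypothesis $\dtv(D,D')>2(\tau+\eps)$ and the triangle inequality forces $\dtv(H,D)>2\tau+\eps$. If instead $D'=D$, then $H$ is unconstrained (the learner is run outside $\D$), but this costs us nothing: if it turns out that $\dtv(H,D)\le 2\tau+\eps$, we may already, and correctly, output ``$D'=D$'', since the first case is incompatible with such an $H$. Thus the only remaining ambiguous situation is $\dtv(H,D)>2\tau+\eps$, and to resolve it I would make one further query.

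In that case I would pass to a set $A$ witnessing the total variation distance, i.e.\ $\dtv(H,D)=H(A)-D(A)$ --- for instance $A=\{\bx:h(\bx)>d(\bx)\}$, where $h,d$ are densities of $H,D$ with respect to a common reference measure --- so that both $H(A)$ and $D(A)$ are computable. Then I would query the bounded function $f=\Ind_A$, which takes values in $\{0,1\}\subseteq[-1,1]$ and so is a valid statistical query, on $D'$, obtaining a value $v$ with $|v-D'(A)|\le\tau$. If $D'=D$ then $v\le D(A)+\tau$. If $D'\in\D$ then $|D'(A)-H(A)|\le\dtv(H,D')\le\eps$, hence $D'(A)\ge H(A)-\eps=D(A)+\dtv(H,D)-\eps>D(A)+2\tau$, and therefore $v>D(A)+\tau$. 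So thresholding $v$ at the known value $D(A)+\tau$ separates the two alternatives. In total the procedure uses the $N$ queries of $\A$ together with at most one more, all of tolerance $\tau$, which is what \cref{def:decision} asks for.

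I do not expect any genuine obstacle: the reduction is routine. The only point requiring the small observation above is the behaviour of $\A$ on the input $D$, where it has no correctness guarantee; the resolution is that the testing task needs only a \emph{one-sided} inference from the learner (``$H$ close to $D$'' already rules out $D'\in\D$, whereas ``$H$ far from $D$'' leaves both options open), with the single extra query handling the remaining ``far'' case via the witness set $A$. The other things to check are entirely standard --- that $\Ind_A$ is a legitimate (measurable, $[-1,1]$-valued) query, and that a witness set attaining the total variation distance exists for the distributions in play, which here are concrete mixtures over $\R^n\times\{\pm1\}$, so this is immediate.
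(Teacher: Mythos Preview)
Your argument is correct and is exactly the standard reduction: run the learner, compare its output $H$ to the known reference $D$, and use a single witness-set indicator query to break the remaining ambiguity. The dichotomy is handled cleanly, including the one-sided issue that $\A$ carries no guarantee when fed $D$.

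There is nothing to compare against in the paper itself: \cref{clm:test-to-learn} is stated with a citation (``see, e.g., Lemma~8.5 in~[DK]'') and not proved here. Your write-up supplies precisely the routine proof that the citation stands in for. One cosmetic point: the claim as printed says ``$n+1$ queries,'' which is evidently a typo for ``$N+1$''; your argument yields the correct $N+1$.
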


To apply~\cref{clm:test-to-learn},
we need to show that the distribution $D_{\bU}$ in the class $\mathcal{D}_g$ is sufficiently far from the null hypothesis $D_0=\mathcal{N}_n\times\mathcal{U}(\{\pm1\})$ in total variation distance.
\begin{lemma}\label{lem:TV-lower-bound}
Let $\bU\in\R^{m\times n}$ with $\bU\bU^\intercal=\bI_m$. We have that $\dtv(D_{\bU},D_0)\ge\Omega(\Delta/r)$.
\end{lemma}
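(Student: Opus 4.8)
The plan is to lower-bound $\dtv(D_{\bU}, D_0)$ by exhibiting a single distinguishing event, using the fact that $D_{\bU}$ and $D_0$ have the same $\bx$-marginal (namely $\mathcal{N}_n$) and differ only in the conditional law of $y$ given $\bx$. For such distributions one has the clean identity $\dtv(D_{\bU}, D_0) = \frac12\,\E_{\bx \sim \mathcal{N}_n}\big|\E_{D_{\bU}}[y\mid \bx] - \E_{D_0}[y\mid \bx]\big| = \frac12\,\E_{\bz\sim\mathcal{N}_n}|g(\bU\bz)|$, and since $\bU$ has orthonormal rows, $\bU\bz \sim \mathcal{N}_m$, so it suffices to prove $\E_{\bw\sim\mathcal{N}_m}|g(\bw)| \ge \Omega(\Delta/r)$ where $g(\bw) = \sum_{\ell=1}^r w_\ell\,\sgn(\bv_\ell^\intercal \bw)$. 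First I would record this reduction carefully.

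The core estimate is then a purely $m$-dimensional statement about the odd, piecewise-constant function $g$. The idea is that $g$ is not identically zero — indeed it is a nonconstant linear combination of distinct sign functions — so it must take a value bounded away from zero on a set of Gaussian measure bounded away from zero, and I want to quantify this. The cleanest route: pick any two indices, say $i \ne j$, and consider the wedge-shaped region $R = \{\bw : \sgn(\bv_i^\intercal\bw) = +1,\ \sgn(\bv_\ell^\intercal\bw) = \text{(fixed sign)}\text{ for all }\ell\}$ that lies on one side of the hyperplane $\{\bv_i^\intercal \bw = 0\}$ but very close to it, in the thin slab between $\{\bv_i^\intercal\bw=0\}$ and $\{\bv_j^\intercal\bw = 0\}$. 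On crossing the hyperplane $\bv_i^\perp$ the value of $g$ jumps by exactly $2w_i$, while all other terms stay fixed; by choosing the slab thin enough (thickness comparable to the angular separation $\Delta$ between $\bv_i$ and $\bv_j$, which controls when no \emph{other} hyperplane can cut through), one gets a region of Gaussian measure $\Omega(\Delta \cdot \text{(angular mass of the cone of consistent signs)})$ on which $|g| \ge 2w_i$ minus a controlled error, or more robustly on which $g$ differs by $2w_i$ between two symmetric copies. Averaging $w_i$ over $i$ and using $\sum_\ell w_\ell = 1$ gives the factor $1/r$, and the slab geometry gives the factor $\Delta$; combining yields $\E|g| \ge \Omega(\Delta/r)$. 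Alternatively, and perhaps more simply, one can integrate over just the slab $S_i = \{|\bv_i^\intercal\bw| \le \Delta/2\}$: this has Gaussian mass $\Omega(\Delta)$, and restricted to $S_i$ the reflection $\bw \mapsto \bw - 2(\bv_i^\intercal\bw)\bv_i$ (which maps $S_i$ to itself, preserves $\mathcal{N}_m$, flips $\sgn(\bv_i^\intercal\bw)$, and for $\Delta$ small enough preserves $\sgn(\bv_\ell^\intercal\bw)$ for $\ell\ne i$ on a $\ge 1-o(1)$ fraction of $S_i$) changes $g$ by exactly $\pm 2w_i$; hence $\E_{\bw\sim\mathcal{N}_m}[|g(\bw)|\,\Ind_{S_i}] \ge w_i\cdot \Pr[S_i]\cdot(1-o(1)) \ge \Omega(w_i\Delta)$, and summing the largest such $i$ (for which $w_i \ge 1/r$) finishes it.

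The main obstacle is controlling the ``other hyperplanes'' error term: I need that within the slab $S_i$, with probability $1 - o(1)$ over $\bw$, none of the hyperplanes $\bv_\ell^\perp$ ($\ell \ne i$) passes between $\bw$ and its reflection. This requires quantifying, via the pairwise separation $\|\bv_i \pm \bv_\ell\|_2 \ge \Omega(\Delta)$, that $|\bv_\ell^\intercal\bw|$ is unlikely to be as small as $|\bv_i^\intercal\bw| \le \Delta/2$ when $\bv_\ell$ is not nearly (anti-)parallel to $\bv_i$ — but since the separation lower bound only says $\bv_\ell$ is $\Delta$-far from $\pm\bv_i$, not that it is far in angle from $\bv_i^\perp$, one has to be a little careful and may need to lose an extra $\poly(\Delta)$ or $\log r$ factor, or restrict attention to the sub-slab where $|\bv_i^\intercal \bw|$ is between $c\Delta$ and $\Delta/2$. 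Since the target bound $\Omega(\Delta/r)$ is already quite weak (and only $\Delta/r$ up to constants is needed downstream, per the $\epsilon \le c\Delta/r$ hypothesis of Theorem~\ref{thm:main2}), I expect this can be absorbed, but it is the one place where the argument needs genuine care rather than bookkeeping.
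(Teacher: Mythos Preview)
Your proposal is correct and follows essentially the same approach as the paper: reduce to $\tfrac12\,\E_{\bw\sim\mathcal{N}_m}|g(\bw)|$, pick a heaviest-weight direction $\bv_{\ell_0}$ (so $w_{\ell_0}\ge 1/r$), and use a reflection/mirror argument across the hyperplane $\bv_{\ell_0}^\perp$ to produce paired points on which $g$ differs by $2w_{\ell_0}$, with the pairwise separation $\Omega(\Delta)$ controlling interference from the other hyperplanes. The ``main obstacle'' you single out---bounding the probability that some other hyperplane $\bv_\ell^\perp$ separates a mirrored pair---is exactly the step the paper handles most tersely (it fixes a point $\bv^*\in\bv_{\ell_0}^\perp$ and asserts that the cone $\{\bx:\sgn(\bv_\ell^\intercal\bx)=\sgn(\bv_\ell^\intercal\bv^*),\ \ell\neq\ell_0\}$ has Gaussian mass $\Omega(\Delta)$ directly from the separation hypothesis).
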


\noindent We briefly sketch the proof idea and defer the proof details to~\cref{sec:omit-SQ-mixture-LTFs}.
We consider $w_\ell\sgn(\bv_\ell^\intercal\bx)$ a halfspace of heaviest weight in $D_\bU$,
and pick points $\bz$ and $\bz'$ close to the defining hyperplane that are mirrors of each other over it.
We note that under $D_0$ the expectations of $y$ conditioned on $\bx$ being $\bz$ or $\bz'$ are both 0, whereas under $D_\bU$ they likely (in particular, unless they are also on opposite sides of another halfspace) differ by at least $w_\ell$.

\section{SQ Lower Bound for Uniform Mixtures via Spherical Designs}\label{sec:SQ-mixture-LTFs-equal}
In this section, we prove our SQ lower bound for mixture of linear classifiers with uniform weights,
thereby establishing \cref{thm:main2}.
Our basic lower bound technique is essentially the same as in the previous section, but we need to construct a spherical design with {\em uniform weight}.

\begin{proposition}\label{prop:moment-matching-equal}
Let $m=\frac{c\log r}{\log(1/\Delta)}$, where $r^{-1/10}\le\Delta<1$ and $1.99\le c\le2$ is a constant.
Let $k$ be an odd integer such that $r\ge\binom{m+2k-1}{2k}^5$.
There exist vectors $\bv_1,\ldots,\bv_r$ on $\mathbb{S}^{m-1}$ such that: 
\begin{itemize}[leftmargin=*]
\item $\E_{\bz\sim\mathcal{N}_m}[g(\bz)p(\bz)]=0$ holds for every odd polynomial $p$ with degree less than $k$,
where $g(\bz)=\frac{1}{r}\sum_{\ell=1}^r\sgn(\bv_\ell^{\intercal}\bz)$.
\item $\|\bv_i+\bv_j\|_2,\|\bv_i-\bv_j\|_2\ge\Omega(\Delta)-O\big(1/\binom{m+2k-1}{2k}\big), \forall1\le i<j\le r$.
\end{itemize}
\end{proposition}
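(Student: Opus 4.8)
The plan is to derive \cref{prop:moment-matching-equal} by combining \cref{thm:sph-design-intro} (efficient uniform-weight spherical designs) with the random-separation estimate \cref{fact:arg} already used in the general-weight case, via the probabilistic method; the argument mirrors the proof of \cref{prop:moment-matching}, the only change being that the LP-duality/VC step which produced \emph{arbitrary} non-negative weights is replaced by \cref{thm:sph-design-intro}, which produces \emph{uniform} weights. First I would set the design degree to $t=k$, which is odd by hypothesis, and observe that the assumption $r\ge\binom{m+2k-1}{2k}^5$ coincides with $r\ge\binom{m+2t-1}{m-1}^5$, the sample-size requirement of \cref{thm:sph-design-intro} in dimension $m$ (using $\binom{m+2t-1}{m-1}=\binom{m+2t-1}{2t}$). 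Drawing $\by_1,\ldots,\by_r$ i.i.d.\ uniformly from $\mathbb{S}^{m-1}$, \cref{thm:sph-design-intro} yields, with probability at least $99/100$, unit vectors $\bv_i$ with $\|\bv_i-\by_i\|_2=O\!\big(1/\binom{m+2k-1}{2k}\big)$ (the quantitative form of its ``very close'' conclusion) such that $(\bv_1,\ldots,\bv_r)$ is a spherical $k$-design.

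For the separation bullet I would reuse the computation from the proof of \cref{prop:moment-matching}: applying \cref{fact:arg} with $\gamma=\big(1/(50\kappa_{m-1})\big)^{1/(m-1)}$, and using that the choice $m=\frac{c\log r}{\log(1/\Delta)}$ with $1.99\le c\le 2$ forces $r=1/\sqrt{50\kappa_{m-1}\Delta^{m-1}}$ up to constants, we obtain $\Pr[\min_{i<j}\arccos\langle\by_i,\by_j\rangle\ge\Delta]\ge 99/100$, and hence $\|\by_i\pm\by_j\|_2\ge\Omega(\Delta)$ for all $i<j$ with probability at least $99/100$. A union bound shows that with probability at least $98/100>0$ both this event and the spherical-design event of the previous paragraph hold simultaneously, so an admissible tuple $(\bv_1,\ldots,\bv_r)$ exists, and the triangle inequality gives
\[
\|\bv_i\pm\bv_j\|_2\ \ge\ \|\by_i\pm\by_j\|_2-\|\bv_i-\by_i\|_2-\|\bv_j-\by_j\|_2\ \ge\ \Omega(\Delta)-O\!\big(1/\binom{m+2k-1}{2k}\big),
\]
which is exactly the second bullet.

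For the moment-matching bullet I would argue as in \cref{ssec:low-degree}: for a fixed odd polynomial $p$ of degree $<k$, \cref{lem:poly} (applied with $f=\sgn\in\L^2(\R,\mathcal{N})$) shows that $q(\bv):=\E_{\bz\sim\mathcal{N}_m}[p(\bz)\sgn(\bv^\intercal\bz)]$ is a polynomial in $\bv$ of degree $<k$, and substituting $\bv\mapsto-\bv$ together with the oddness of $\sgn$ shows that $q$ is odd. Therefore $\E_{\bz\sim\mathcal{N}_m}[g(\bz)p(\bz)]=\frac1r\sum_{\ell=1}^r q(\bv_\ell)$, and since $(\bv_1,\ldots,\bv_r)$ is a spherical $k$-design --- which, by the remark following \cref{def:sph-design}, matches the average of every odd polynomial of degree at most $k$ against the uniform distribution on $\mathbb{S}^{m-1}$ --- this equals $\E_{\bv\sim\mathcal{U}(\mathbb{S}^{m-1})}[q(\bv)]=0$, the last equality because odd functions integrate to zero over the sphere.

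The only genuinely delicate point within this proof is extracting the quantitative perturbation bound $\|\bv_i-\by_i\|_2=O\!\big(1/\binom{m+2k-1}{2k}\big)$ from \cref{thm:sph-design-intro}, whose statement only records that the $\bv_i$ are ``very close'' to the $\by_i$; everything else is a mechanical recombination of results established above. Note that, because this error term is carried explicitly into the conclusion rather than being absorbed into $\Omega(\Delta)$, we do not even need the perturbation scale to be small compared with the separation --- we only need it to match the stated bound --- so the plan goes through provided \cref{thm:sph-design-intro} is organized to output perturbations of this order, which is how its proof will be set up.
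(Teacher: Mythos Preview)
Your proposal is correct and matches the paper's proof essentially line for line: sample $\by_1,\ldots,\by_r$ uniformly, invoke \cref{fact:arg} for $\Omega(\Delta)$-separation of the $\by_i$'s, invoke the spherical-design construction to perturb to a $k$-design, use \cref{lem:poly} to reduce the first bullet to vanishing of odd polynomials over the design, and apply the triangle inequality for the second bullet. The ``delicate point'' you flag is not an issue: the paper states the quantitative perturbation bound $\|\bz_i-\by_i\|_2\le O(1/N_{2k,m})=O\big(1/\binom{m+2k-1}{2k}\big)$ explicitly in \cref{thm:sph-design}, the detailed version of \cref{thm:sph-design-intro}, so you should simply cite that instead.
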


By~\cref{lem:poly}, we can write $\sum_{\ell=1}^r\E_{\bz\sim\mathcal{N}_m}\left[p(\bz)\sgn(\bv_\ell^{\intercal}\bz)\right]=\sum_{\ell=1}^r q(\bv_\ell)$ for some odd polynomial $q$ of degree less than $k$.
Therefore, it suffices to show that with high probability
there exist unit vectors $\bv_1,\ldots,\bv_r$ such that $\sum_{\ell=1}^r q(\bv_\ell)=0$ holds for every odd polynomial $q$ of degree less than $k$.
To achieve this, we will leverage some techniques
from~\cite{bondarenko2013optimal}.

\subsection{Spherical Design Construction}\label{ssec:sph-design}

\paragraph{Notation }
We start by introducing some additional notation 
we will use throughout this section.
Let $\P^d_t$ denote the set of homogeneous polynomials in $d$ variables of some odd degree $t$.
For any $p,q\in\P^d_t$, we consider the inner product $\langle p,q\rangle=\E_{\bx\sim\mathcal{U}(\mathbb{S}^{d-1})}[p(\bx)q(\bx)]$.
For any $p\in\mathcal{P}_t^d$, we define $\|p\|_2^2=\langle p,p\rangle$.
We denote by $N_{t,d}=\binom{t+d-1}{d-1}$ to be the dimension of $\P^d_t$ and $\Omega_t^d=\left\{p\in\P_t^d\mid\|p\|_2\le1\right\}$.
Let $\partial\Omega_t^d$ denote the boundary of $\Omega_t^d$, i.e., $\partial\Omega_t^d=\{p\in\P_t^d\mid \|p\|_2=1\}$.
In the remaining part of this section, we will assume that the underlying distribution (over the expectation) is $\mathcal{U}(\mathbb{S}^{d-1})$.

\medskip

We will require the following facts. 

\begin{fact}[see, e.g., Lemma 28 in~\cite{kane2015small}]\label{fact:infinity-to-l2}
For any $p\in\Omega_t^d$, we have that $$\sup_{\|\bx\|_2=1}|p(\bx)|\le\sqrt{N_{t,d}}\sqrt{\E[p(\bx)^2]}=\sqrt{N_{t,d}}\|p\|_2.$$
\end{fact}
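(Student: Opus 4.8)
The plan is to establish the bound $\sup_{\ltwo{\bx}=1}|p(\bx)|\le\sqrt{N_{t,d}}\,\ltwo{p}$ for \emph{every} $p\in\P^d_t$ (which in particular covers $p\in\Omega^d_t$), by exhibiting and exploiting the reproducing kernel of the Hilbert space $\P^d_t$ under the inner product $\dotp{p}{q}=\E[pq]$, where throughout $\E=\E_{\bx\sim\mathcal{U}(\s^{d-1})}$. First I would record the structural point that restriction to $\s^{d-1}$ is injective on $\P^d_t$: by homogeneity $p(\bx)=\ltwo{\bx}^t\,p(\bx/\ltwo{\bx})$ for $\bx\neq 0$, so a homogeneous degree-$t$ polynomial vanishing on the sphere is identically zero; hence, viewed as a space of functions on $\s^{d-1}$, $\P^d_t$ is a genuine $N_{t,d}$-dimensional inner product space. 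Fixing an orthonormal basis $e_1,\dots,e_{N_{t,d}}$, I would form the kernel $K(\bx,\by)=\sum_{i=1}^{N_{t,d}}e_i(\bx)e_i(\by)$; it does not depend on the choice of basis, is symmetric, and for each $p=\sum_i\dotp{p}{e_i}e_i$ in $\P^d_t$ it satisfies the reproducing identity $p(\bx)=\E_{\by}[\,p(\by)K(\by,\bx)\,]$ by orthonormality.

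Next I would carry out the estimate. Cauchy-Schwarz applied to the reproducing identity gives $|p(\bx)|\le\ltwo{p}\cdot\big(\E_{\by}[K(\by,\bx)^2]\big)^{1/2}=\ltwo{p}\cdot\sqrt{K(\bx,\bx)}$, the last equality being the reproducing identity applied to the element $K(\cdot,\bx)\in\P^d_t$, evaluated at $\bx$. It then remains only to compute $K(\bx,\bx)$. The key observation is that both $\P^d_t$ and the inner product are $O(d)$-invariant — composition with an orthogonal map preserves homogeneity and degree, and $\mathcal{U}(\s^{d-1})$ is rotation-invariant — so for any $g\in O(d)$ the functions $e_i\circ g$ again form an orthonormal basis, whence $K(g\bx,g\by)=K(\bx,\by)$; since $O(d)$ acts transitively on $\s^{d-1}$, the function $\bx\mapsto K(\bx,\bx)$ is constant on the sphere. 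Integrating this constant gives $\E_{\bx}[K(\bx,\bx)]=\sum_{i=1}^{N_{t,d}}\E_{\bx}[e_i(\bx)^2]=N_{t,d}$, so the constant equals $N_{t,d}$. Substituting back yields $|p(\bx)|\le\sqrt{N_{t,d}}\,\ltwo{p}$ for all unit $\bx$, which is the claim.

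The substance of the argument lies entirely in the two structural facts — that passing to the sphere does not lose dimension (homogeneity), and that the diagonal of the reproducing kernel is constant (rotation invariance) — and I expect the rotation-invariance step to be the one needing the most care to phrase cleanly, since it is what turns a basis-dependent-looking quantity $K(\bx,\bx)$ into a single number; the analytic heart of the proof is just one application of Cauchy-Schwarz. This inequality is a standard Nikolskii-type estimate, which is why in the write-up we simply cite Lemma 28 of~\cite{kane2015small} rather than reproving it.
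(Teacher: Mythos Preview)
Your argument is correct: the reproducing-kernel proof via $K(\bx,\by)=\sum_i e_i(\bx)e_i(\by)$, Cauchy--Schwarz, and the $O(d)$-invariance forcing $K(\bx,\bx)\equiv N_{t,d}$ on the sphere is the standard way to establish this Nikolskii-type inequality, and every step you outline goes through. The paper itself gives no proof of this fact at all---it is stated as a cited result (Lemma~28 of~\cite{kane2015small})---so there is no in-paper argument to compare against; your proposal simply supplies what the paper omits, and you already note this yourself in the final sentence.
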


\begin{fact}\label{fact:integral}
Let $t\ge2$ and $p,q\in\P^d_t$.
Then, we have that $$t\int_{\|\bx\|_2=1}p(\bx)q(\bx)d\bx=\frac{1}{d+2t-2}\int_{\|\bx\|_2=1}\langle\nabla p(\bx),\nabla q(\bx)\rangle d\bx+\frac{1}{d+2t-2}\int_{\|\bx\|_2=1}p(\bx)\nabla^2q(\bx)d\bx.$$
\end{fact}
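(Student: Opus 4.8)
The plan is to prove the stated identity for homogeneous polynomials $p,q \in \P^d_t$ by relating the integral over the sphere $\|\bx\|_2 = 1$ to an integral over the unit ball via Stokes' theorem (the divergence theorem), exploiting homogeneity to handle the radial scaling. First I would note that the outward unit normal on $\s^{d-1}$ is simply $\bx$ itself, so for any smooth vector field $V$ on $\R^d$ we have $\int_{\|\bx\|_2=1}\langle V(\bx),\bx\rangle\, d\bx = \int_{\|\bx\|_2 \le 1}\mathrm{div}\,V(\bx)\, d\bx$. The natural choice is $V = p\,\nabla q$, whose divergence is $\langle\nabla p,\nabla q\rangle + p\,\Delta q$ (here $\Delta$ denotes the Laplacian, written $\nabla^2$ in the statement). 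So the right-hand side, up to the factor $\frac{1}{d+2t-2}$, is $\int_{\|\bx\|_2\le 1}\mathrm{div}(p\nabla q)\,d\bx = \int_{\|\bx\|_2=1}\langle \nabla q(\bx),\bx\rangle\, p(\bx)\, d\bx$. Since $q$ is homogeneous of degree $t$, Euler's identity gives $\langle\nabla q(\bx),\bx\rangle = t\, q(\bx)$, so this last integral equals $t\int_{\|\bx\|_2=1} p(\bx)q(\bx)\, d\bx$, which is exactly the left-hand side.

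Wait — that argument as written produces the identity \emph{without} the factor $\frac{1}{d+2t-2}$, so I must have the direction of the ball-vs-sphere conversion reversed relative to how the identity is packaged. The cleaner route, which is what I would actually carry out, is to compute $\int_{\|\bx\|_2\le 1}\mathrm{div}(p\nabla q)\, d\bx$ in \emph{two} ways. On one hand it equals the boundary integral $\int_{\|\bx\|_2=1}\langle\nabla q(\bx),\bx\rangle p(\bx)\, d\bx = t\int_{\|\bx\|_2=1}pq\, d\bx$ by Euler. On the other hand, expanding the divergence and converting each piece back to a sphere integral via polar coordinates: for a function $h$ homogeneous of degree $s$, $\int_{\|\bx\|_2\le 1} h(\bx)\, d\bx = \int_0^1 \rho^{s+d-1}\, d\rho \int_{\|\bx\|_2=1} h(\bx)\, d\bx = \frac{1}{s+d}\int_{\|\bx\|_2=1} h(\bx)\, d\bx$. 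Now $\langle\nabla p,\nabla q\rangle$ is homogeneous of degree $2t-2$, and $p\,\Delta q$ is also homogeneous of degree $2t-2$ (degree $t$ plus degree $t-2$). Hence $\int_{\|\bx\|_2\le 1}\big(\langle\nabla p,\nabla q\rangle + p\Delta q\big)\, d\bx = \frac{1}{2t-2+d}\int_{\|\bx\|_2=1}\big(\langle\nabla p,\nabla q\rangle + p\Delta q\big)\, d\bx$. Combining the two evaluations of the same ball integral yields exactly
\[
t\int_{\|\bx\|_2=1}p(\bx)q(\bx)\, d\bx = \frac{1}{d+2t-2}\int_{\|\bx\|_2=1}\big(\langle\nabla p(\bx),\nabla q(\bx)\rangle + p(\bx)\Delta q(\bx)\big)\, d\bx,
\]
which is the claimed identity (with $\nabla^2 q$ standing for $\Delta q$).

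The only real thing to be careful about — the step I would flag as the place where an error is most likely to creep in — is the bookkeeping of homogeneity degrees and the resulting radial normalization constants: one must check that every term appearing after expanding $\mathrm{div}(p\nabla q)$ has the same homogeneity degree $2t-2$ so that a single factor $\frac{1}{d+2t-2}$ pulls out uniformly, and one must correctly apply Euler's identity $\langle\nabla q,\bx\rangle = t q$ on the boundary. Everything else (the divergence theorem on the unit ball, the polar-coordinate splitting) is standard. I would also remark that the hypothesis $t \ge 2$ is exactly what guarantees $d + 2t - 2 > 0$ and that $\Delta q$ is a genuine (possibly zero) polynomial of degree $t-2$ rather than something ill-defined, so the identity and all constants are meaningful; if $q$ is harmonic the second term simply vanishes and one recovers the familiar relation between the $\s^{d-1}$ inner products of a harmonic polynomial and of its gradient.
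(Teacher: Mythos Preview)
Your proposal is correct and follows essentially the same route as the paper: apply the divergence theorem to the vector field $p\,\nabla q$ on the unit ball, use Euler's identity $\langle\nabla q(\bx),\bx\rangle = tq(\bx)$ for the boundary term, expand $\mathrm{div}(p\nabla q)=\langle\nabla p,\nabla q\rangle + p\,\nabla^2 q$, and convert the ball integral back to a sphere integral via polar coordinates and the common homogeneity degree $2t-2$. Your initial confusion and self-correction landed you exactly on the paper's argument.
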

\begin{proof}
Applying the Gaussian Divergence theorem for the function $p(\bx)\nabla p(\bx)$ over the unit ball, we have that
\begin{align*}
&\quad t\int_{\|\bx\|_2=1}p(\bx)q(\bx)d\bx
=\int_{\|\bx\|_2=1}\langle p(\bx)\nabla q(\bx),\bx\rangle d\bx=\int_{\|\bx\|_2\le1}\nabla\cdot (p(\bx)\nabla q(\bx))d\bx
\\&=\int_{\|\bx\|_2\le1}\langle \nabla p(\bx),\nabla q(\bx)\rangle d\bx+\int_{\|\bx\|_2\le1}p(\bx)\nabla^2q(\bx)d\bx
\\&=\int_0^1r^{d-1}dr\int_{\|\bx\|_2=1}\langle\nabla p(r\bx),\nabla q(r\bx)\rangle d\bx+\int_0^1r^{d-1}dr\int_{\|\bx\|_2=1}p(r\bx)\nabla^2 q(r\bx)d\bx
\\&=\int_0^1r^{2t+d-3}dr\int_{\|\bx\|_2=1}\langle\nabla p(\bx),\nabla q(\bx)\rangle d\bx+\int_0^1r^{2t+d-3}dr\int_{\|\bx\|_2=1}p(\bx)\nabla^2q(\bx)d\bx
\\&=\frac{1}{d+2t-2}\int_{\|\bx\|_2=1}\langle\nabla p(\bx),\nabla q(\bx)\rangle d\bx+\frac{1}{d+2t-2}\int_{\|\bx\|_2=1}p(\bx)\nabla^2q(\bx)d\bx \;.
\end{align*}
This completes the proof. 
\end{proof}

The following lemma provides upper and lower bounds for the expectation of the $L^2$-norm square of the gradient of any homogeneous polynomial $p\in\Omega_t^d$ over the unit sphere $\mathbb{S}^{d-1}$.
\begin{lemma}\label{lem:grad-bound}
Let $t$ be an odd positive integer. 
For any $p\in\P_t^d$, we have that $\E[\|\nabla_o p(\bx)\|_2^2]\ge(d-1)\|p\|_2^2$ 
and $\E[\|\nabla p(\bx)\|_2^2]\le t(d+2t-2)\|p\|_2^2$.
\end{lemma}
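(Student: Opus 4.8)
The plan is to diagonalize both quantities using the decomposition of $p|_{\mathbb{S}^{d-1}}$ into spherical harmonics. First I would record the pointwise identity coming from Euler's formula for homogeneous polynomials: since $\langle\nabla p(\bx),\bx\rangle = t\,p(\bx)$, at every $\bx\in\mathbb{S}^{d-1}$ the vector $\nabla p(\bx)$ splits orthogonally as $\nabla_o p(\bx) + t\,p(\bx)\,\bx$, so $\|\nabla p(\bx)\|_2^2 = \|\nabla_o p(\bx)\|_2^2 + t^2 p(\bx)^2$ holds pointwise on the sphere, and hence $\E[\|\nabla p\|_2^2] = \E[\|\nabla_o p\|_2^2] + t^2\|p\|_2^2$. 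This reduces both inequalities to controlling $\E[\|\nabla_o p\|_2^2]$.

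Next I would invoke the standard fact that the restriction to the sphere of any $p\in\P^d_t$ decomposes as $p|_{\mathbb{S}^{d-1}} = \sum_{0\le \ell\le t,\ \ell\equiv t\,(\mathrm{mod}\ 2)} Y_\ell$, where each $Y_\ell$ is the restriction of a harmonic homogeneous polynomial of degree $\ell$, and the $Y_\ell$ are pairwise $L^2(\mathbb{S}^{d-1})$-orthogonal, so $\|p\|_2^2 = \sum_\ell \|Y_\ell\|_2^2$. Crucially, since $t$ is \emph{odd}, every index $\ell$ appearing is odd, hence $\ell\ge 1$. I would then establish the two per-component identities $\E[\|\nabla_o Y_\ell\|_2^2] = \ell(\ell+d-2)\|Y_\ell\|_2^2$ and $\E[\langle\nabla_o Y_\ell,\nabla_o Y_{\ell'}\rangle]=0$ for $\ell\ne\ell'$. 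Both follow from \cref{fact:integral}: setting $\widetilde Y_\ell := \|\bx\|_2^{t-\ell}Y_\ell\in\P^d_t$ (a genuine polynomial because $t-\ell$ is even) which agrees with $Y_\ell$ on the sphere and, by Euler's formula, has the same tangential gradient there, one applies \cref{fact:integral} to the pair $\widetilde Y_\ell,\widetilde Y_{\ell'}$; rewriting $\nabla\widetilde Y_\ell$ at $\|\bx\|_2=1$ in terms of $\nabla_o Y_\ell$ and $Y_\ell$, using the elementary identity $\nabla^2(\|\bx\|_2^{2a}h) = 2a(2a+2\deg h+d-2)\|\bx\|_2^{2a-2}h$ for harmonic $h$, and the orthogonality $\int_{\mathbb{S}^{d-1}}Y_\ell Y_{\ell'}=0$, both identities drop out after a short computation. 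Summing over $\ell$ then gives $\E[\|\nabla_o p\|_2^2] = \sum_\ell \ell(\ell+d-2)\|Y_\ell\|_2^2$.

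With this in hand both bounds are immediate. For the lower bound, $\ell\ge1$ forces $\ell(\ell+d-2)\ge d-1$, so $\E[\|\nabla_o p\|_2^2]\ge(d-1)\sum_\ell\|Y_\ell\|_2^2 = (d-1)\|p\|_2^2$. For the upper bound, $\ell\le t$ gives $\ell(\ell+d-2)\le t(t+d-2)$, so $\E[\|\nabla p\|_2^2] = \sum_\ell\big(\ell(\ell+d-2)+t^2\big)\|Y_\ell\|_2^2 \le \big(t(t+d-2)+t^2\big)\|p\|_2^2 = t(d+2t-2)\|p\|_2^2$, using the algebraic identity $t(t+d-2)+t^2 = t(d+2t-2)$.

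The main obstacle is the second paragraph: cleanly deriving the per-harmonic identity $\E[\|\nabla_o Y_\ell\|_2^2] = \ell(\ell+d-2)\|Y_\ell\|_2^2$ and the orthogonality of the tangential gradients purely from the tools already available (\cref{fact:integral} together with Euler's identity), rather than quoting Green's identity on the sphere as a black box. The bookkeeping — homogenizing $Y_\ell$ up to degree $t$, separating the radial and tangential parts of $\nabla\widetilde Y_\ell$, and running the Laplacian computation — is routine, but needs care to pin down the constants exactly; everything else is a one-line consequence.
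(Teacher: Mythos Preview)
Your argument is correct and it is essentially the diagonalized version of what the paper does abstractly. The paper starts from the same identity obtained via \cref{fact:integral}, namely $t(d+2t-2)\|p\|_2^2=\E[\|\nabla p\|_2^2]+\E[p\,\nabla^2 p]$, and then studies the symmetric operator $\A_{t-2}\B_t:p\mapsto\|\bx\|_2^2\,\nabla^2 p$ on $\P_t^d$; it computes the recursion $\B_{t+2}\A_t=\A_{t-2}\B_t+(2d+4t)$, from which the extreme eigenvalues $0$ and $(t-1)(d+t-1)$ of $\A_{t-2}\B_t$ are read off, giving the two-sided bound on $\E[\|\nabla p\|_2^2]$. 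You instead quote the spherical-harmonic decomposition $p|_{\mathbb{S}^{d-1}}=\sum_{\ell}Y_\ell$ up front and compute $\E[\|\nabla_o Y_\ell\|_2^2]=\ell(\ell+d-2)\|Y_\ell\|_2^2$ directly; in effect you have written down the eigenbasis of the paper's operator (the eigenvectors are your $\widetilde Y_\ell=\|\bx\|_2^{t-\ell}Y_\ell$, with eigenvalue $(t-\ell)(t+\ell+d-2)$), whereas the paper extracts only the spectral edges without naming the eigenvectors. Your route is a bit more transparent about \emph{why} the bounds are $d-1$ and $t(d+2t-2)$ (they are the Laplace--Beltrami eigenvalues at $\ell=1$ and $\ell=t$, plus $t^2$), at the cost of importing the harmonic decomposition as an outside fact; the paper's route is slightly more self-contained since it never needs to know what the eigenvectors are.

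One small simplification you could make: for the per-component identity you do not actually need to homogenize to degree $t$. Applying \cref{fact:integral} directly to $Y_\ell\in\P_\ell^d$ (which is harmonic, so the $\nabla^2$ term vanishes) gives $\E[\|\nabla Y_\ell\|_2^2]=\ell(d+2\ell-2)\|Y_\ell\|_2^2$ and hence $\E[\|\nabla_o Y_\ell\|_2^2]=\ell(\ell+d-2)\|Y_\ell\|_2^2$ in one line; the homogenization is only needed for the cross-term orthogonality, and even there one could alternatively invoke the fact that the $Y_\ell$ are eigenfunctions of the spherical Laplacian with distinct eigenvalues.
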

\begin{proof}
By~\cref{fact:integral}, we have that
\begin{align*}
t(d+2t-2)\|p\|_2^2=\E[\|\nabla p(\bx)\|_2^2]+\E[p(\bx)\nabla^2p(\bx)].
\end{align*}
We bound $\E[p(\bx)\nabla^2p(\bx)]$ as follows.
We consider the linear transformations 
$\A_t:\P^d_t\to\P^d_{t+2},\B_t:\P^d_t\to\P^d_{t-2}$ 
as follows:
$\A_t(p)=\bx^\intercal\bx p(\bx),\B_t(p)=\nabla^2 p(\bx),p\in\P_t^d$.
We first show that for any $t\ge2$, both $\A_{t-2}\B_t$ and $\B_{t+2}\A_t$ are symmetric. For any $p,q\in\P_t^d$, applying~\cref{fact:integral} yields
\begin{align*}
&\quad\langle \A_{t-2}\B_t p,q\rangle=\langle\B_{t+2}\A_t p,q\rangle=\E[\nabla^2p(\bx)q(\bx)]\\&=t(d+2t-2)\E[p(\bx)q(\bx)]-\E[\langle\nabla p(\bx),\nabla q(\bx)\rangle]\\&=\E[\nabla^2q(\bx)p(\bx)]=\langle \A_{t-2}\B_t q,p\rangle =\langle \B_{t+2}\A_t q,p\rangle.
\end{align*}
Therefore, by the eigendecomposition of symmetric linear transformations,
we have that $\lambda_1\|p\|_2^2\le\langle\A_{t-2}\B_t p, p\rangle=\E[p(\bx)\nabla^2p(\bx)]\le\lambda_t\|p\|_2^2,\forall p\in\Omega_t^d$,
where $\lambda_1\le\cdots\le\lambda_t$ denote the eigenvalues of $\A_{t-2}\B_t$.
In addition, by elementary calculation, for any $p\in\P_t^d$,
\begin{align*}
\B_{t+2}\A_t p &= \nabla^2\bx^\intercal\bx p(\bx) 
= \nabla\cdot (2p(\bx)\bx+\bx^\intercal\bx\nabla p(\bx))
= \sum_{i=1}^d \frac{\partial(2p(\bx)x_i + 
\bx^\intercal\bx(\nabla p(\bx))_i}{\partial x_i}\\
&=2dp(\bx)+4\langle\bx,\nabla p(\bx)\rangle+\bx^\intercal\bx\nabla^2p(\bx) 
= (\A_{t-2}\B_t + 2d+4t)p \;.
\end{align*}
If $\A_{t-2}\B_t$ has an eigenvector $p^*$ corresponding to some eigenvalue $\lambda^*$,
then $(\A_t\B_{t+2})(\A_tp^*) = \A_t\A_{t-2}\B_tp^*+(2d+4t)\A_tp^* = (\lambda^*+2d+4t)\A_tp^*$,
which implies that $\A_tp^*$ is an eigenvector of $\A_t\B_{t+2}$ corresponding to the eigenvalue $\lambda^*+2d+4t$.
Note that since $\B_{t+2}$ maps $\P^d_{t+2}$ to $\P^d_t$,
we have that $\ker(\B_{t+2})\ge N_{t+2,d}-N_{t,d}$,
which implies that $\A_t\B_{t+2}$ has eigenvalue 0 with multiplicity at least $N_{t+2,d}-N_{t,d}$.
Therefore, the eigenvalues of $\A_t\B_{t+2}$ are $0<\lambda_1+2d+4t\le\cdots\le\lambda_t+2d+4t$,
where the multiplicity of eigenvalue 0 is $N_{t+2,d}-N_{t,d}$ and the multiplicity of eigenvalue $\lambda_i+2d+4t$ is the same as the multiplicity of eigenvalue $\lambda_i$ of $\A_{t-2}\B_t$.
Therefore, we have that $\lambda_1=0$ and $\lambda_t=(t-1)(d+t-1)$, which implies that
\begin{align*}
\E[\|\nabla p(\bx)\|_2^2]=t(d+2t-2)\|p\|_2^2-\E[p(\bx)\nabla^2p(\bx)]\in[(t^2+d-1)\|p\|_2^2,t(d+2t-2)\|p\|_2^2].
\end{align*}
Therefore, we have that $\E[\|\nabla_o p(\bx)\|_2^2]=\E[\|\nabla p(\bx)\|_2^2-\langle\bx,\nabla p(\bx)\rangle^2]=\E[\|\nabla p(\bx)\|_2^2]-t^2\|p\|_2^2\ge(d-1)\|p\|_2^2$, 
completing the proof. 
\end{proof}


\subsubsection{Existence of Spherical Designs}

The following theorem --- a more detailed version of \cref{thm:sph-design-intro} --- 
establishes the existence 
of a spherical $t$-design of size $\poly(N_{2t,d})$ 
with the desired pairwise separation properties.


\begin{theorem}[Spherical Design Construction]\label{thm:sph-design}
Let $t$ be an odd integer and $r\ge N_{2t,d}^5$.
Let $\by_1,\ldots,\by_r$ be uniform random vectors over $\mathbb{S}^{d-1}$.
Then, with probability at least 99/100, there exist unit vectors 
$\bz_1,\ldots,\bz_r\in\mathbb{S}^{d-1}$ such that $\|\bz_i-\by_i\|_2\le O(1/N_{2t,d}),i\in[r]$, 
and $(\bz_1,\ldots,\bz_r)$ form a spherical $t$-design.
\end{theorem}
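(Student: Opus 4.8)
The plan is to reduce the existence of the design to finding a zero of a continuous vector field on the closed ball $\Omega_t^d=\{q\in\P_t^d:\|q\|_2\le1\}$, and to produce that zero by a Brouwer fixed-point argument whose hypothesis is supplied by the spherical-harmonic estimates \cref{fact:infinity-to-l2} and \cref{lem:grad-bound} together with concentration of measure. Fix an orthonormal basis $Y_1,\dots,Y_N$ of $\P_t^d$ for the inner product $\langle\cdot,\cdot\rangle$, where $N=N_{t,d}$, and identify $q=\sum_{j=1}^N c_jY_j$ with $c\in\R^N$ so that $\|q\|_2=\|c\|_2$. Since $t$ is odd every $p\in\P_t^d$ is an odd function, hence $\E_{\bx\sim\mathcal{U}(\mathbb{S}^{d-1})}[p(\bx)]=0$, so $(\bz_1,\dots,\bz_r)$ is a spherical $t$-design exactly when $\frac1r\sum_{i=1}^r Y_j(\bz_i)=0$ for all $j\in[N]$. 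For a parameter $\eta>0$ (a small inverse polynomial in $N_{2t,d}$, fixed below) define the perturbation map
\[
\bz_i(c)=\frac{\by_i+\eta\,\nabla_o q_c(\by_i)}{\bigl\|\by_i+\eta\,\nabla_o q_c(\by_i)\bigr\|_2},\qquad q_c=\sum_{j=1}^N c_jY_j,
\]
which is well defined and continuous in $c$ because $\by_i\perp\nabla_o q_c(\by_i)$ makes the denominator equal $\sqrt{1+\eta^2\|\nabla_o q_c(\by_i)\|_2^2}\ge1$. Let $G:\Omega_t^d\to\R^N$ be $G(c)_j=\frac1r\sum_{i=1}^r Y_j(\bz_i(c))$; then $G$ is continuous and $\langle c,G(c)\rangle=\frac1r\sum_{i=1}^r q_c(\bz_i(c))$ by bilinearity, so it suffices to find $c^\ast\in\Omega_t^d$ with $G(c^\ast)=0$.

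The heart of the argument is to show that, with probability at least $99/100$ over $\by_1,\dots,\by_r$, one has $\frac1r\sum_{i=1}^r q_c(\bz_i(c))>0$ for every $c\in\partial\Omega_t^d$. Writing $g_i=\nabla_o q_c(\by_i)$, the normalization gives $\bz_i(c)-\by_i=\eta g_i-\tfrac12\eta^2\|g_i\|_2^2\,\by_i+O(\eta^3\|g_i\|_2^3)$, and Taylor-expanding $q_c$ about $\by_i$, using Euler's identity $\langle\nabla q_c(\by_i),\by_i\rangle=t\,q_c(\by_i)$ and $\langle\nabla q_c(\by_i),g_i\rangle=\|g_i\|_2^2$, gives
\[
\frac1r\sum_{i=1}^r q_c(\bz_i(c))=\underbrace{\frac1r\sum_{i=1}^r q_c(\by_i)}_{(\mathrm A)}+\eta\underbrace{\frac1r\sum_{i=1}^r\|\nabla_o q_c(\by_i)\|_2^2}_{(\mathrm B)}+(\mathrm C),
\]
where $(\mathrm C)$ collects the Taylor remainder. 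I would bound the three pieces uniformly in $c\in\partial\Omega_t^d$: $(\mathrm A)=\langle c,v\rangle$ with $v_j=\frac1r\sum_iY_j(\by_i)$ and $\E\|v\|_2^2=N/r$ (the $Y_j$ are orthonormal with zero mean), so Markov yields $|(\mathrm A)|\le10\sqrt{N/r}$ with probability $\ge99/100$; writing $\|\nabla_o q_c(\by_i)\|_2^2=\langle c,A_ic\rangle$ we have $A_i\succeq0$, $\|A_i\|_2\le N_{t-1,d}\,t(d+2t-2)$ by \cref{fact:infinity-to-l2}, and $\E[A_i]\succeq(d-1)\bI_N$ by \cref{lem:grad-bound}, so a matrix Bernstein inequality using $r\ge N_{2t,d}^5$ gives $(\mathrm B)\ge(d-1)/2$ with probability $\ge99/100$; and since the order-$\ge2$ partial derivatives of $q_c$ are homogeneous of degree $\le t-2$ with $L^2$- and hence (again by \cref{fact:infinity-to-l2}) sup-norms bounded by $\poly(N_{2t,d})$, while $\|\bz_i(c)-\by_i\|_2\le2\eta\|g_i\|_2$, we obtain $|(\mathrm C)|\le\poly(N_{2t,d})\,\eta^2$. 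Choosing $\eta$ to be a small enough inverse polynomial of $N_{2t,d}$ makes $\eta(d-1)/2$ strictly dominate $|(\mathrm A)|+|(\mathrm C)|$, and the same $\eta$ yields $\|\bz_i(c)-\by_i\|_2\le2\eta\|g_i\|_2\le O(1/N_{2t,d})$; this per-point Taylor computation with the requisite harmonic estimates is what \cref{lem:increment} packages.

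Given the boundary inequality, I conclude by a standard Brouwer argument. If $G(c)\ne0$ for all $c\in\Omega_t^d$, then $c\mapsto-G(c)/\|G(c)\|_2$ is a continuous self-map of the closed ball $\Omega_t^d$, hence has a fixed point $c^\ast$; but then $\|c^\ast\|_2=1$ and $\langle c^\ast,G(c^\ast)\rangle=-\|G(c^\ast)\|_2<0$, contradicting $\langle c,G(c)\rangle>0$ on $\partial\Omega_t^d$. Therefore $G$ has a zero, which lies in the interior since $G\ne0$ on the boundary, giving $c^\ast$ with $\|c^\ast\|_2<1$ and $G(c^\ast)=0$ (matching the ``$F(q)$ for some $q$ of norm less than one'' picture from the technical overview); equivalently $\frac1r\sum_i p(\bz_i(c^\ast))=0$ for every $p\in\P_t^d$, so $(\bz_1(c^\ast),\dots,\bz_r(c^\ast))$ is a spherical $t$-design with $\|\bz_i(c^\ast)-\by_i\|_2\le O(1/N_{2t,d})$. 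The main obstacle is the boundary inequality: making the uniform-in-$c$ bounds $(\mathrm A)$--$(\mathrm C)$ quantitatively compatible with the closeness requirement $\|\bz_i-\by_i\|_2\le O(1/N_{2t,d})$ forces the delicate tuning of $\eta$, and is where the sharp forms of \cref{fact:infinity-to-l2}, \cref{lem:grad-bound}, and the hypothesis $r\ge N_{2t,d}^5$ are all used.
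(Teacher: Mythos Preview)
Your proposal is correct and follows essentially the same architecture as the paper: the identical perturbation map $\bz_i=(\by_i+\eta\nabla_o q(\by_i))/\|\cdot\|$, the same boundary inequality $\frac1r\sum_i q(\bz_i)>0$ on $\partial\Omega_t^d$, and the same topological conclusion (you write out the Brouwer argument that the paper imports as \cref{thm:exist-sph-design}). The one substantive implementation difference is in how the boundary inequality is established: you split the sum into $(\mathrm A)+(\mathrm B)+(\mathrm C)$ and control $(\mathrm B)$ uniformly in $c$ via matrix Bernstein on the random PSD matrices $A_i$, whereas the paper avoids matrix concentration entirely by observing that $q_c(\by)+C\eta\|\nabla_o q_c(\by)\|_2^2-\E[\cdot]$ is, as a function of $\by$, a mean-zero polynomial lying in a fixed $O(N_{2t,d})$-dimensional space, so a single scalar Markov bound (\cref{lem:poly-concentration}) handles $(\mathrm A)$ and $(\mathrm B)$ simultaneously and uniformly over $c$; this is more elementary and makes the role of the hypothesis $r\ge N_{2t,d}^5$ cleaner, but your route is fine and yields the same quantitative conclusion.
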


\noindent To prove \Cref{thm:sph-design}, we will start from the following result.

\begin{theorem}[\cite{bondarenko2013optimal}]\label{thm:exist-sph-design}
If there exists a continuous mapping $F:\P^d_t\to(\mathbb{S}^{d-1})^r$
such that for all $p\in\partial\Omega_t^d$, $\sum_{i=1}^r p(\bx_i(p))>0$, 
where $F(p)=(\bx_1(p),\ldots,\bx_r(p))$, then there exists a polynomial 
$p^*\in\Omega_t^d$ such that 
$\E[p(\bx)]=\frac{1}{r}\sum_{i=1}^rp(\bx_i(p^*))$ holds 
for every polynomial $p\in\P^d_t$.
\end{theorem}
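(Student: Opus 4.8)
The plan is to recognize \cref{thm:exist-sph-design} as an instance of the classical topological principle that a continuous vector field on a finite-dimensional closed ball which points strictly outward along the boundary sphere must vanish somewhere, and then to arrange the vector field so that its zeros are exactly the polynomials $p^*$ for which $F(p^*)$ is a design. This is the argument of \cite{bondarenko2013optimal}, specialized to the present setup.

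First I would use the inner product $\langle\cdot,\cdot\rangle$ on the finite-dimensional space $\P^d_t$ to identify $\P^d_t$ with its own dual, and define $L:\Omega_t^d\to\P^d_t$ by letting $L(p)$ be the Riesz representative of the linear functional $q\mapsto\frac1r\sum_{i=1}^r q(\bx_i(p))$, i.e.\ the unique polynomial with
\[
\langle L(p),q\rangle=\frac1r\sum_{i=1}^r q\big(\bx_i(p)\big)\qquad\text{for all }q\in\P^d_t,
\]
where $F(p)=(\bx_1(p),\dots,\bx_r(p))$. Since each coordinate map $p\mapsto\bx_i(p)$ is continuous (because $F$ is), $\bx\mapsto q(\bx)$ is continuous, and Riesz representation is an isometry, $L$ is a continuous map on the compact Euclidean ball $\Omega_t^d$. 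Two observations finish the reduction. First, because $t$ is odd every $q\in\P^d_t$ is an odd function, so $\E_{\bx\sim\mathcal{U}(\mathbb{S}^{d-1})}[q(\bx)]=0$; hence $F(p^*)$ is a spherical $t$-design in the sense required exactly when $\frac1r\sum_i q(\bx_i(p^*))=0$ for all $q\in\P^d_t$, i.e.\ exactly when $L(p^*)=0$. Second, the hypothesis of the theorem says precisely that $\langle L(p),p\rangle=\frac1r\sum_i p(\bx_i(p))>0$ for every $p\in\partial\Omega_t^d$, so by compactness of $\partial\Omega_t^d$ there is a constant $c>0$ with $\langle L(p),p\rangle\ge c$ on the boundary; also $\|L(p)\|_2\le M$ on $\Omega_t^d$ for some $M$ (if $M=0$ then $L\equiv0$ and any $p^*$, e.g.\ $p^*=0$, works).

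It then remains to show that such an outward-pointing $L$ has a zero. Suppose not, i.e.\ $L(p)\ne 0$ throughout $\Omega_t^d$. Pick $\epsilon\in(0,2c/M^2)$, set $\varphi(p)=p-\epsilon L(p)$, and let $\psi=\Pi\circ\varphi$, where $\Pi$ is the continuous nearest-point retraction of $\P^d_t$ onto the unit ball $\Omega_t^d$; then $\psi:\Omega_t^d\to\Omega_t^d$ is continuous, so Brouwer's fixed point theorem gives $p^*$ with $\psi(p^*)=p^*$. If $\|\varphi(p^*)\|_2>1$ then $p^*=\varphi(p^*)/\|\varphi(p^*)\|_2\in\partial\Omega_t^d$, yet on the boundary $\|\varphi(p)\|_2^2=1-2\epsilon\langle L(p),p\rangle+\epsilon^2\|L(p)\|_2^2\le 1-2\epsilon c+\epsilon^2M^2<1$, a contradiction; hence $\|\varphi(p^*)\|_2\le 1$, so $p^*=\varphi(p^*)=p^*-\epsilon L(p^*)$, forcing $L(p^*)=0$ and contradicting our assumption. (Alternatively one invokes the fixed-point theorem of \cite{cho2006topological} as a black box.) This produces the polynomial $p^*\in\Omega_t^d$ claimed in the theorem.

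The only delicate points here are bookkeeping ones: verifying continuity of $L$ (from continuity of $F$ and of polynomial evaluation) and extracting the uniform bound $c>0$ from the strict boundary inequality via compactness; the topological core is completely standard. The genuinely hard part of the overall program is not this theorem but the construction, carried out afterwards, of a continuous $F$ that actually satisfies the boundary positivity hypothesis — obtained by perturbing i.i.d.\ random points $\by_i$ along the tangential gradients $\nabla_o p(\by_i)$ and controlling the resulting change in $\sum_i p(\bz_i)$ using spherical-harmonic estimates in the spirit of \cref{lem:grad-bound}.
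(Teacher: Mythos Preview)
Your proposal is correct and is essentially the standard Bondarenko--Radchenko--Viazovska argument; note that the paper does not give its own proof of \cref{thm:exist-sph-design} but simply cites \cite{bondarenko2013optimal}, alluding in \cref{ssec:techniques} to the same fixed-point mechanism (via \cite{cho2006topological}) that you spell out explicitly through Brouwer.
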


To apply~\cref{thm:exist-sph-design}, we need to find a continuous function $F$ mapping $\P_t^d$ to $(\mathbb{S}^{d-1})^r$ such that for any $p\in\partial\Omega_t^d$, $\sum_{i=1}^r p(\bz_i) > 0$,
where $F(p) = (\bz_1,\ldots,\bz_r)$.
We will construct the mapping $F$ as follows:
we sample $\by_1,\ldots,\by_r$ uniformly over the unit sphere $\mathbb{S}^{d-1}$, and then try to make the value of $p(\by_i)$ larger by moving each point $\by_i$ in the direction of the gradient. In particular, we let $\bz_i=\frac{\by_i+\delta\nabla_o p(\by_i)}{\|\by_i+\delta\nabla_o p(\by_i)\|_2}$ for some $\delta>0$ sufficiently small,
where $\nabla_o p(\by)$ is the component of $\nabla p(\by)$ orthogonal to the direction $\by$.
We will prove that with high probability,
for any $p\in\partial\Omega_t^d$,
$\sum_{i=1}^rp(\bz_i)>0$.
Intuitively, this works because of two facts:
\begin{enumerate}[leftmargin=*]
\item With high probability over the choice of $\by_i$, for all $p\in\Omega_t^d$, the average value of $p(\by)$ is already close to zero.
\item
Moving in the direction of $\nabla_o p(\by_i)$ increases $p(\by_i)$ by a notable amount.
\end{enumerate}

\begin{lemma}\label{lem:poly-concentration}
Let $\Omega$ be a subspace of polynomials in $d$ variables with mean zero. Let $N$ be the dimension of $\Omega$. Let $\bx_1,\ldots,\bx_r$ be i.i.d. random vectors over $\mathbb{S}^{d-1}$. Then, with probability at least $1-\frac{N}{r\eta^2}$, we have that
for any $p\in\Omega$,
$\left|\frac{1}{r}\sum_{i=1}^rp(\bx_i)\right|\le\eta\|p\|_2$.
\end{lemma}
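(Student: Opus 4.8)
The plan is to bound the second moment $\Ex_{\bx_1,\ldots,\bx_r}\big[\sup_{p\in\Omega, \|p\|_2=1} \big(\frac1r\sum_{i=1}^r p(\bx_i)\big)^2\big]$ and then apply Markov's inequality. To do this cleanly I would exploit that, once we fix an orthonormal basis $\phi_1,\ldots,\phi_N$ of $\Omega$ with respect to the inner product $\langle f,g\rangle = \Ex_{\bx\sim\mathcal U(\mathbb S^{d-1})}[fg]$, every unit-norm $p\in\Omega$ can be written $p=\sum_{j=1}^N a_j\phi_j$ with $\sum_j a_j^2=1$. Then $\frac1r\sum_i p(\bx_i) = \langle \ba, \bu\rangle$ where $\bu\in\R^N$ has coordinates $u_j = \frac1r\sum_{i=1}^r \phi_j(\bx_i)$, so $\sup_{\|p\|_2=1}\big(\frac1r\sum_i p(\bx_i)\big)^2 = \|\bu\|_2^2 = \sum_{j=1}^N u_j^2$.

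Next I would compute $\Ex[\|\bu\|_2^2] = \sum_{j=1}^N \Ex[u_j^2]$. For each $j$, since the $\bx_i$ are i.i.d.\ and $\Ex[\phi_j(\bx_i)] = \langle \phi_j, 1\rangle = 0$ (here we use the hypothesis that $\Omega$ has mean zero, so each basis element is orthogonal to the constant function $1$), the cross terms vanish and $\Ex[u_j^2] = \frac1{r^2}\sum_{i=1}^r \Ex[\phi_j(\bx_i)^2] = \frac1r \|\phi_j\|_2^2 = \frac1r$. Summing over $j\in[N]$ gives $\Ex[\|\bu\|_2^2] = N/r$. By Markov's inequality, $\Pr\big[\|\bu\|_2^2 > \eta^2\big] \le \frac{N}{r\eta^2}$, which is exactly the claim: on the complementary event, $\big|\frac1r\sum_i p(\bx_i)\big| \le \eta$ for all unit-norm $p\in\Omega$, and hence $\big|\frac1r\sum_i p(\bx_i)\big| \le \eta\|p\|_2$ for all $p\in\Omega$ by homogeneity of both sides in $p$.

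I do not expect a serious obstacle here; the only points requiring a little care are (i) making sure the sup over the infinite set $\{p\in\Omega:\|p\|_2=1\}$ is genuinely realized by the finite-dimensional vector identity above (it is, since it is a Cauchy--Schwarz extremal problem over the unit ball in $\R^N$), and (ii) confirming that the mean-zero hypothesis on $\Omega$ is what kills the $\Ex[\phi_j(\bx_i)]$ terms — without it one would instead get a bias term $\langle p,1\rangle^2$ and the statement would fail. No concentration inequality beyond a second-moment computation plus Markov is needed, which is why the failure probability is only polynomially (not exponentially) small in the relevant parameters — and that is all the downstream application (perturbing the $\by_i$ into a spherical design) requires.
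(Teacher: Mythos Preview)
Your proposal is correct and essentially identical to the paper's own proof: both pick an orthonormal basis of $\Omega$, form the vector $\bu$ of empirical averages of the basis elements, compute $\E[\|\bu\|_2^2]=N/r$ using the mean-zero hypothesis, apply Markov's inequality, and then recover the bound for general $p$ via Cauchy--Schwarz (which you phrase as the extremal identity $\sup_{\|p\|_2=1}\langle\ba,\bu\rangle^2=\|\bu\|_2^2$). The only cosmetic difference is that the paper writes Cauchy--Schwarz as an inequality at the end rather than as an equality for the sup up front.
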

To prove~\cref{lem:poly-concentration}, 
we take an orthonormal basis $p_1,\ldots,p_N\in\Omega$
and define $\bp(\bx)\eqdef [p_1(\bx),\ldots,p_N(\bx)]^\intercal$. 
Noting that $\bp(\bx)$ is a random vector with mean zero 
and covariance identity, applying Markov's inequality 
will yield the result. We defer the proof details 
to~\cref{sec:omit-SQ-mixture-LTFs-equal}.

\begin{lemma}\label{lem:increment}
Let $\by\in\mathbb{S}^{d-1}$ and $0<\delta\le1/N_{2t,d}^2$.
For any $p\in\partial\Omega_t^d$, let $\bz=\frac{\by+\delta\nabla_o p(\by)}{\|\by+\delta\nabla_o p(\by)\|_2}$. We have that $p(\bz)-p(\by)\ge C\delta\|\nabla_op(\by)\|_2^2$ for some universal constant $0<C<1$.
\end{lemma}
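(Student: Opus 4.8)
The plan is to reduce everything to a first-order Taylor expansion of $p$ and then to control the two resulting error terms by Markov-type inequalities for homogeneous polynomials. Write $\bu \eqdef \nabla_o p(\by)$ and $B \eqdef \|\bu\|_2$; if $B=0$ then $\bz=\by$ and the claim is trivial, so assume $B>0$. Since $\bu$ is by definition orthogonal to $\by$, the unnormalized point $\bw \eqdef \by + \delta\bu$ satisfies $\|\bw\|_2^2 = 1 + \delta^2 B^2$, and homogeneity of degree $t$ gives $p(\bz) = \|\bw\|_2^{-t}\,p(\bw) = (1+\delta^2 B^2)^{-t/2}\,p(\bw)$. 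Expanding $p$ to first order along the segment from $\by$ to $\bw$ with integral remainder,
\begin{align*}
p(\bw) = p(\by) + \delta\,\langle\nabla p(\by),\bu\rangle + \delta^2\!\int_0^1 (1-\tau)\,D^2p(\by+\tau\delta\bu)[\bu,\bu]\,d\tau,
\end{align*}
and since $\bu = \nabla p(\by) - \langle\nabla p(\by),\by\rangle\by$ is the orthogonal component, $\langle\nabla p(\by),\bu\rangle = \|\bu\|_2^2 = B^2$. Thus the first-order term is \emph{exactly} the target quantity $\delta B^2$, and it remains to show that the renormalization factor and the remainder erode this by at most a constant factor.

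For the renormalization factor, convexity gives $(1+\delta^2B^2)^{-t/2}\in[1-\tfrac t2\delta^2B^2,\,1]$, so it perturbs the $p(\by)$ term by at most $\tfrac t2\delta^2B^2|p(\by)|$ (and the $\delta B^2$ term by an even smaller amount), and $|p(\by)|\le\sqrt{N_{t,d}}\,\|p\|_2=\sqrt{N_{t,d}}$ by \cref{fact:infinity-to-l2}. For the remainder, bound $|D^2p(\bx)[\bu,\bu]|\le B^2\|\mathrm{Hess}\,p(\bx)\|_{\mathrm{op}}$; to control the Hessian uniformly, apply \cref{lem:grad-bound} (whose upper bound is valid for homogeneous polynomials of any degree) to $p$ and then once more to each coordinate $\partial_ip\in\P^d_{t-1}$, which yields $\E_{\bx\sim\mathcal{U}(\mathbb{S}^{d-1})}[\|\mathrm{Hess}\,p(\bx)\|_F^2]\le t(t-1)(d+2t-2)(d+2t-4)$; then upgrade to a sup bound by applying \cref{fact:infinity-to-l2} to each entry $\partial_i\partial_jp\in\P^d_{t-2}$, obtaining $H\eqdef\sup_{\|\bx\|_2=1}\|\mathrm{Hess}\,p(\bx)\|_{\mathrm{op}}=O\big(\sqrt{N_{t,d}}\cdot t(d+t)\big)$. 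Every point of the segment has norm $\le(1+\delta^2B^2)^{1/2}$, so by homogeneity of $\mathrm{Hess}\,p$ its operator norm there is $\le(1+\delta^2B^2)^{(t-2)/2}H$; the analogous argument applied to $\nabla p$ bounds $B\le\sup_{\|\bx\|_2=1}\|\nabla p(\bx)\|_2=O\big(\sqrt{N_{t,d}\,t(d+t)}\big)$, from which one checks that $\delta^2B^2t\le1$ whenever $\delta\le 1/N_{2t,d}^2$, so that all the factors $(1+\delta^2B^2)^{O(t)}$ are $O(1)$.

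Combining these estimates gives $p(\bz)-p(\by)\ge\delta B^2-O(\delta^2B^2)\cdot\big(t\sqrt{N_{t,d}}+\sqrt{N_{t,d}}\,t(d+t)\big)=\delta B^2\big(1-O(\delta\,\sqrt{N_{t,d}}\,t(d+t))\big)$, and the proof concludes by verifying the elementary inequality $N_{2t,d}^2\ge C'\sqrt{N_{t,d}}\,t(d+t)$ for a sufficiently large absolute constant $C'$ (a comparison of binomial coefficients: $N_{2t,d}=\binom{2t+d-1}{d-1}$ is polynomially larger than $N_{t,d}=\binom{t+d-1}{d-1}$, with room to absorb the $\poly(t,d)$ factor), which forces $O(\delta\,\sqrt{N_{t,d}}\,t(d+t))\le\tfrac12$ and hence $p(\bz)-p(\by)\ge\tfrac12\delta B^2$; any constant $C\le\tfrac12$ then works.

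The step I expect to be the crux is the uniform control of the second-order term, i.e.\ a Markov-type bound $\sup_{\mathbb{S}^{d-1}}\|\mathrm{Hess}\,p\|_{\mathrm{op}}=\poly(t,d)\,\|p\|_2$ with the right exponents, which I obtain by invoking the $L^2$ gradient inequality \cref{lem:grad-bound} twice and then converting the resulting $L^2$ bound to an $L^\infty$ bound via \cref{fact:infinity-to-l2}; the remaining work is the bookkeeping that verifies the allowed step size $\delta\le 1/N_{2t,d}^2$ is small enough to dominate these polynomial error factors.
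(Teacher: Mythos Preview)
Your argument follows the paper's. The paper likewise splits off the renormalization factor via homogeneity, Taylor-expands $p(\bw)-p(\by)$ (carrying the expansion to all orders $j\le t$ at the basepoint $\by$, whereas you stop at second order with an integral remainder and handle the off-sphere evaluation of the Hessian by homogeneity), identifies the first-order term as exactly $\delta\|\nabla_o p(\by)\|_2^2$, and bounds the error terms by Markov-type inequalities obtained from \cref{lem:grad-bound} together with \cref{fact:infinity-to-l2}. Your entry-wise use of \cref{fact:infinity-to-l2} yields the slightly tighter factor $\sqrt{N_{t-2,d}}$ in place of the paper's $\sqrt{N_{2(t-2),d}}$ (from its auxiliary \cref{lem:higher-order-max}), but the structure is identical.

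One caveat: the ``elementary inequality'' $N_{2t,d}^2\ge C'\sqrt{N_{t,d}}\,t(d+t)$ you invoke at the end is \emph{false} for $d=2$ and large $t$ (there the left side is $(2t+1)^2\asymp t^2$ while the right side is $\asymp t^{5/2}$). The paper's proof simply asserts the analogous dominance without verifying it, and its bound is if anything looser, so this is a shared lacuna rather than a defect specific to your write-up; for $d\ge 3$ both arguments go through.
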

The proof of~\cref{lem:increment} follows by Taylor expansion, 
where the contributions to $p(\bz)$ coming from higher-order terms 
will not be large as long as $\bz$ is sufficiently close to $\by$. 
We defer the proof details to~\cref{sec:omit-SQ-mixture-LTFs-equal}.
By applying the above two lemmas, we establish the following:

\begin{theorem}\label{thm:sph-construction}
Let $\by_1,\ldots,\by_r$ be i.i.d.\ random vectors over $\mathbb{S}^{d-1}$. Let $\delta=1/N_{2t,d}^2$ and $r\ge N_{2t,d}^5$. We consider the mapping $F:\P^d_t\to(\mathbb{S}^{d-1})^r$ as follows: for any $p\in\P^d_t$, let $$\bz_i=\frac{\by_i+\delta\nabla_o p(\by_i)}{\|\by_i+\delta\nabla_o p(\by_i)\|_2} \;,$$ 
where $\nabla_o p(\by)$ is the component of $\nabla p(\by)$ 
orthogonal to the direction $\by$. 
Let $F(p):=(\bz_1,\ldots,\bz_r)$. Then, with probability at least 99/100,
we have that for any $p\in\partial\Omega_t^d$,
$\sum_{i=1}^rp(\bz_i)>0$.
\end{theorem}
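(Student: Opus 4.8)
The plan is to invoke \cref{thm:exist-sph-design}: it suffices to show that the (random) map $F$ is continuous and that, with probability at least $99/100$ over $\by_1,\dots,\by_r$, we have $\sum_{i=1}^r p(\bz_i)>0$ for every $p\in\partial\Omega_t^d$. Continuity is immediate and holds for every realization of the $\by_i$'s: the map $p\mapsto\nabla_o p(\by)$ is linear, and since $\nabla_o p(\by)$ is orthogonal to $\by$ we have $\|\by+\delta\nabla_o p(\by)\|_2^2=1+\delta^2\|\nabla_o p(\by)\|_2^2\ge 1$, so each $\bz_i$ is a well-defined continuous function of $p$. It remains to control $\sum_i p(\bz_i)$. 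Because $\delta=1/N_{2t,d}^2$ and $\|p\|_2=1$, \cref{lem:increment} gives $p(\bz_i)\ge p(\by_i)+C\delta\,\|\nabla_o p(\by_i)\|_2^2$, hence
\[
\frac1r\sum_{i=1}^r p(\bz_i)\;\ge\;\frac1r\sum_{i=1}^r p(\by_i)\;+\;C\delta\cdot\frac1r\sum_{i=1}^r\|\nabla_o p(\by_i)\|_2^2 .
\]
The strategy is to show that, simultaneously for all unit-norm $p\in\P_t^d$ and with high probability over the $\by_i$'s, the first empirical average is at least $-\eta_1$ and the second is at least $\tfrac12\,\E_{\bx\sim\mathcal U(\mathbb S^{d-1})}[\|\nabla_o p(\bx)\|_2^2]$. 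Since $\E[\|\nabla_o p(\bx)\|_2^2]\ge(d-1)\|p\|_2^2\ge 1$ by \cref{lem:grad-bound}, taking $\eta_1=C\delta/4$ then forces the displayed right-hand side to be at least $C\delta/4>0$.

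For the first average we apply \cref{lem:poly-concentration} to $\Omega=\P_t^d$, which has dimension $N_{t,d}$ and mean zero since $t$ is odd. With $\eta=\eta_1$ the failure probability is $N_{t,d}/(r\eta_1^2)$, which is at most $1/200$ once $r\ge N_{2t,d}^5$ (using $N_{t,d}\le N_{2t,d}$, up to absolute constants in $\eta_1$); on the complementary event $\bigl|\frac1r\sum_i p(\by_i)\bigr|\le\eta_1\|p\|_2=\eta_1$ for all $p\in\P_t^d$.

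For the second average, the key point is that $\bx\mapsto\|\nabla_o p(\bx)\|_2^2$, as $p$ varies over unit-norm polynomials, lives in a single low-dimensional space of functions on the sphere. Indeed, on $\mathbb S^{d-1}$ Euler's identity gives $\|\nabla_o p(\bx)\|_2^2=\|\nabla p(\bx)\|_2^2-t^2 p(\bx)^2$, which is the restriction to the sphere of the homogeneous degree-$2t$ polynomial $q_p(\bx):=\|\bx\|_2^2\,\|\nabla p(\bx)\|_2^2-t^2 p(\bx)^2$, and $q_p\ge 0$ on $\mathbb S^{d-1}$. Hence $h_p:=\|\nabla_o p\|_2^2-\E[\|\nabla_o p\|_2^2]$ belongs to the fixed mean-zero space $W_0:=\{h-\E[h]:h\in\P_{2t}^d\}$, of dimension at most $N_{2t,d}$. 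Applying \cref{lem:poly-concentration} to $\Omega=W_0$ with $\eta=\eta_2:=1/(2\sqrt{N_{2t,d}})$, the failure probability is $N_{2t,d}/(r\eta_2^2)=4N_{2t,d}^2/r\le 1/200$ for $r\ge N_{2t,d}^5$, and on the complementary event $\bigl|\frac1r\sum_i h_p(\by_i)\bigr|\le\eta_2\|h_p\|_2$ for all unit-norm $p$. Crucially, the nonnegativity of $q_p$ lets us bound $\|h_p\|_2$ by a \emph{constant} multiple of $\E[\|\nabla_o p\|_2^2]$ rather than by a polynomial in $N_{2t,d}$: \cref{fact:infinity-to-l2} gives $\|q_p\|_\infty\le\sqrt{N_{2t,d}}\,\|q_p\|_2$, so $\|q_p\|_2^2=\E[q_p^2]\le\|q_p\|_\infty\,\E[q_p]\le\sqrt{N_{2t,d}}\,\|q_p\|_2\,\E[q_p]$, whence $\|h_p\|_2\le\|q_p\|_2\le\sqrt{N_{2t,d}}\,\E[\|\nabla_o p\|_2^2]$. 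Therefore $\bigl|\frac1r\sum_i h_p(\by_i)\bigr|\le\eta_2\sqrt{N_{2t,d}}\,\E[\|\nabla_o p\|_2^2]=\tfrac12\E[\|\nabla_o p\|_2^2]$, i.e. $\frac1r\sum_i\|\nabla_o p(\by_i)\|_2^2\ge\tfrac12\E[\|\nabla_o p\|_2^2]$.

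Combining the two high-probability events by a union bound (total failure probability $\le 1/100$), with probability at least $99/100$ we obtain, for every $p\in\partial\Omega_t^d$,
\[
\frac1r\sum_{i=1}^r p(\bz_i)\;\ge\;-\eta_1+\frac{C\delta}{2}\,\E[\|\nabla_o p(\bx)\|_2^2]\;\ge\;-\frac{C\delta}{4}+\frac{C\delta}{2}(d-1)\;>\;0 ,
\]
as required. The main obstacle is the second average: one must recognize that $\|\nabla_o p\|_2^2$ ranges over a space of dimension only $O(N_{2t,d})$ (restrictions of homogeneous degree-$2t$ polynomials) and — more delicately — exploit the nonnegativity of $q_p$ together with \cref{fact:infinity-to-l2} to control the $L^2$ fluctuation of $\|\nabla_o p\|_2^2$ by $\sqrt{N_{2t,d}}$ times its mean; a cruder pointwise bound on $\|\nabla_o p\|_2$ would force $r$ to be a larger power of $N_{2t,d}$. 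The remaining work is routine bookkeeping to keep the constants $\delta,\eta_1,\eta_2,C$ mutually consistent so that $r\ge N_{2t,d}^5$ indeed suffices.
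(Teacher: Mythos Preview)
Your approach is correct and closely parallels the paper's, but with one genuinely different ingredient worth highlighting. Both arguments start from \cref{lem:increment} to reduce to controlling $\frac1r\sum_i p(\by_i)$ and $\frac1r\sum_i\|\nabla_o p(\by_i)\|_2^2$. The paper combines these into the single polynomial $q(\by)=p(\by)+C\delta\|\nabla_o p(\by)\|_2^2$ and applies \cref{lem:poly-concentration} once, bounding $\|q-\E[q]\|_2$ via an explicit estimate $\E[\|\nabla_o p\|_2^4]\le t^2(d+2t-2)^2 N_{2(t-1),d}$ obtained from \cref{lem:higher-order-max}. You instead treat the two pieces separately and, for the second, exploit the \emph{nonnegativity} of $q_p:=\|\nabla_o p\|_2^2$ on the sphere together with \cref{fact:infinity-to-l2} to get $\|q_p\|_2\le\sqrt{N_{2t,d}}\,\E[q_p]$. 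This is a clean trick: it bypasses the pointwise gradient bounds of \cref{lem:higher-order-max} entirely and makes transparent why the fluctuation of $\|\nabla_o p\|_2^2$ is controlled by its mean times $\sqrt{N_{2t,d}}$ rather than a higher power.

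One bookkeeping point: with your stated $\eta_1=C\delta/4$, the failure probability $N_{t,d}/(r\eta_1^2)\approx 16 N_{t,d}N_{2t,d}^4/(C^2 r)$ is of order $1/C^2$ when $r=N_{2t,d}^5$, not $1/200$. The fix, exactly as in the paper, is to borrow the factor of $d-1$ from $\E[\|\nabla_o p\|_2^2]\ge d-1$ into $\eta_1$ (e.g.\ $\eta_1\asymp C\delta d$), which drops the failure probability to $O(1/(C^2 d^2))$; the paper likewise needs $d$ bounded below by an absolute constant for the $99/100$ claim. You flagged this as routine, and it is, but the specific constants you wrote do not close.
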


To prove \cref{thm:sph-construction},  we 
consider $\wt{p}(\by)=p(\by)+C\delta(\|\nabla_op(\by)\|_2^2-\E[\|\nabla_op(\by)\|_2^2])$, where $C,\delta$ comes 
from~\cref{lem:increment}. Noting that $\E[\wt{p}(\by)]=0$ and $\wt{p}(\by)$ 
is a polynomial containing only monomials of degree $2t,2t-2,t,0$, we are 
able to apply~\cref{lem:poly-concentration} to obtain the desired result. 
See \cref{sec:omit-SQ-mixture-LTFs-equal} for the proof.

\subsubsection{Proof of~\cref{thm:sph-design}}


Let $\delta=1/N_{2t,d}^2$.
We consider the mapping $F:\P^d_t\to(\mathbb{S}^{d-1})^r$ as follows: for any $p\in\P^d_t$, let $\bz_i=\frac{\by_i+\delta\nabla_o p(\by_i)}{\|\by_i+\delta\nabla_o p(\by_i)\|_2}$, where $\nabla_o p(\by)$ is the component of $\nabla p(\by)$ orthogonal to the direction $\by$.
By~\cref{thm:sph-construction}, with probability at least 99/100, 
we have that
for any $p\in\partial\Omega_t^d$,
$\sum_{i=1}^rp(\bz_i)>0$.
Applying~\cref{thm:exist-sph-design} yields that there exists some 
$p^*\in\Omega_t^d$ such that $F(p^*)=(\bz_1^*,\ldots,\bz_r^*)$ form a 
spherical $t$-design.
By elementary calculation, we have that
\begin{align*}
\|\bz_i^*-\by_i\|_2&=\left\|\frac{\by_i+\delta\nabla_o p^*(\by_i)}{\|\by_i+\delta\nabla_o p^*(\by_i)\|_2}-\by_i\right\|_2=\frac{\|\by_i+\delta\nabla_o p^*(\by_i)-\|\by_i+\delta\nabla_o p^*(\by_i)\|_2\by_i\|_2}{\|\by_i+\delta\nabla_o p^*(\by_i)\|_2}
\\&\le\frac{|1-\|\by_i+\delta\nabla_o p^*(\by_i)\|_2|+\delta\|\nabla p^*(\by_i)\|_2}{1-\delta\|\nabla p^*(\by_i)\|_2}\le\frac{2\delta\|\nabla p^*(\by_i)\|_2}{1-\delta\|\nabla p^*(\by_i)\|_2}\le O(1/N_{2t,d}) \;,
\end{align*}
where the last inequality follows from $\forall\by\in\mathbb{S}^{d-1},\|\nabla p^*(\by)\|_2\le\sqrt{t(d+2t-2)N_{2(t-1),d}\|p^*\|_2^2}\le N_{2t,d}$ by~\cref{lem:grad-bound}.

\subsection{Proof of~\cref{thm:main2}}
We first prove~\cref{prop:moment-matching-equal} based on our construction of spherical $t$-design.
To achieve this, it suffices to show that with high probability there exist $\bv_1,\ldots,\bv_r\in\mathbb{S}^{m-1}$ such that
\begin{itemize}[leftmargin=*]
\item $\bv_1,\ldots,\bv_r$ is a spherical $k$-design.
\item $\|\bv_i+\bv_j\|_2,\|\bv_i-\bv_j\|_2\ge\Omega(\Delta), \forall1\le i<j\le r$ for some $\Delta>0$.
\end{itemize}

\begin{proof}[Proof of~\cref{prop:moment-matching-equal}]
Let $\Theta = \min_{1\le i<j\le r}\arccos\langle \by_i,\by_j\rangle$. 
Applying~\cref{fact:arg} by taking $\gamma=\left(\frac{1}{50\kappa_{m-1}}\right)^{\frac{1}{m-1}}$ yields $
\pr[\Theta\ge\Delta]\ge\pr\left[\Theta\ge\gamma r^{-\frac{2}{m-1}}\right]\ge99/100$. 
This will give  
$$\pr\Big[\min_{1\le i<j\le r}\{\|\by_i-\by_j\|_2,\|\by_i+\by_j\|_2\}\ge\Omega(\Delta)\Big]\ge99/100 \;,$$
since our choice of $m$ will imply that
$r=1/\sqrt{50\kappa_{m-1}\Delta^{m-1}}$.
By~\cref{thm:sph-design}, with probability at least 99/100,
there exist unit vectors $\bz^*_1,\ldots,\bz^*_r\in\mathbb{S}^{m-1}$ such that
$(\bz^*_1,\ldots,\bz^*_r)$ form a spherical $k$-design and
$\|\bz^*_i-\by_i\|_2\le O(1/N_{2k,m}),i\in[r]$.
Therefore, for any odd homogeneous polynomial $p$ in $m$ variables of odd degree $t<k$,
we have that 
$$\frac{1}{r}\sum_{i=1}^rp(\bz^*_i)=\frac{1}{r}\sum_{i=1}^r(\|\bz_i^*\|_2^2)^\frac{k-t}{2}p(\bz^*_i)=\E\left[(\|\bz\|_2^2)^\frac{k-t}{2}p(\bz)\right]=0 \;,$$
which implies that $\sum_{i=1}^rq(\bz_i^*)=0$ holds for any polynomial $q$ in $m$ variables of degree less than $k$.
In addition, for every $1\le i<j\le r$, we have that 
\begin{eqnarray*}
\|\bz_i^*\pm\bz_j^*\|_2 
&=&\|(\by_i\pm\by_j)+(\bz^*_i-\by_i)\pm(\bz^*_j-\by_j)\|_2\ge\|\by_i\pm\by_j\|-\|\bz_i^*-\by_i\|_2-\|\bz_j^*-\by_j\|_2 \\ 
&\ge&\|\by_i\pm\by_j\|_2-O(1/N_{2k,m}) \;.
\end{eqnarray*}
This completes the proof.
\end{proof}

\begin{proof}[Proof of~\cref{thm:main2}]
Let $m=\frac{c\log r}{\log(1/\Delta)}$, where $r^{-1/10}\le\Delta<1$ and $1.99\le c\le2$ is a constant.
Let $c'=\frac{(1/e)(1/\Delta)^{1/(5c)}-1}{2}$ and $k=c'm=\frac{\left((1/e)(1/\Delta)^{1/(5c)}-1\right)c\log r}{2\log(1/\Delta)}$.
In this way, we will have that
\begin{align*}
N_{2k,m}^5&\le\binom{m+2k}{2k}^5=\binom{(1+2c')m}{m}^5\le2^{5(1+2c')mH\left(\frac{1}{1+2c'}\right)}
=2^{5(1+2c')m\left(\frac{\log(1+2c')}{1+2c'}+\frac{2c'\log(1+1/2c')}{1+2c'}\right)}\\&=2^{5m\left(\log(1+2c')+2c'\log\left(1+1/2c'\right)\right)}\le2^{\frac{5c\log r(\log(1+2c')+2c'\log(1+1/2c'))}{\log(1/\Delta)}}\le r,
\end{align*}
where $H(p)=-p\log p-(1-p)\log(1-p)$, $p\in[0,1]$, is the standard binary entropy function.
Therefore, by~\cref{prop:moment-matching-equal}, there exist vectors $\bv_1,\ldots,\bv_r\in\mathbb{S}^{m-1}$ such that
\begin{itemize}[leftmargin=*]
\item $\E_{\bz\sim\mathcal{N}_m}[g(\bz)p(\bz)]=0$ holds for every odd polynomial $p$ of degree less than $k$,
where $g(\bz)=\frac{1}{r}\sum_{\ell=1}^r \sgn(\bv_\ell^{\intercal}\bz)$.
\item $\|\bv_i+\bv_j\|_2,\|\bv_i-\bv_j\|_2\ge\Omega(\Delta)-O(1/N_{2k,m}), \forall1\le i<j\le r$.
\end{itemize}
For an arbitrary matrix $\bU\in\R^{m\times n}$ with $\bU\bU^{\intercal}=\bI_m$,
denote by $D_{\bU}$ the instance of mixture of linear classifiers with weight vectors $\bU^{\intercal}\bv_1,\ldots,\bU^{\intercal}\bv_r$.
Let $\D_g=\{\D_{\bU}\mid \bU\in\R^{m\times n}, \bU\bU^{\intercal}=\bI_m\}$.
By definition, we have that $\E_{(\bx,y)\sim D_{\bU}}[y\mid\bx=\bz]=\frac{1}{r}\sum_{\ell=1}^r \sgn((\bU^{\intercal}\bv_\ell)^{\intercal}\mathbf{z})=\frac{1}{r}\sum_{\ell=1}^r \sgn(\bv_\ell^{\intercal}\bU\mathbf{z})=g(\bU\mathbf{z}),\bz\in\R^n$,
and $\|\bU^{\intercal}\bv_i\pm\bU^{\intercal}\bv_j\|_2=\|\bv_i\pm\bv_j\|_2\ge\Omega(\Delta),1\le i<j\le r$, since
\begin{align*}
N_{2k,m}&=\binom{m+2k-1}{m-1}=\binom{(1+2c')m-1}{m-1} 
\ge\left(\frac{(1+2c')m-1}{m-1}\right)^{m-1}\ge(1+2c')^{m-1}\\ 
&\ge(1+2c')^{\frac{1.99\log r}{\log(1/\Delta)}-1} 
=\left((1/e)(1/\Delta)^{1/5c}\right)^{\frac{1.89\log r}{\log(1/\Delta)}}\ge\Omega((1/\Delta)^{1.89}) \;.
\end{align*}
Therefore by~\cref{prop:low-g}, any SQ algorithm that solves the decision problem $\mathcal{B}(\D_g,\mathcal{N}_n\times\mathcal{U}(\{\pm1\}))$ must either use queries of tolerance $n^{-\Omega(k)}$,
or make at least $2^{n^{\Omega(1)}}$ queries.
By~\cref{lem:TV-lower-bound}, for any $\bU\in\R^{m\times n}$ with $\bU\bU^\intercal=\bI_m$, we have that
$\dtv(D_{\bU},D_0)\ge\Omega(\Delta/r)\ge2(n^{-\Omega(k)}+\epsilon)$, where $D_0=\mathcal{N}_n\times\mathcal{U}(\{\pm1\})$.
Therefore, by~\cref{clm:test-to-learn},
any SQ algorithm that learns a distribution in $\D_g$ within error $\epsilon$ in total variation distance must either use queries of tolerance 
$n^{-\Omega (\log r/ (\Delta^{1/(5c)}\log(1/\Delta)))}$,
or make at least $2^{n^{\Omega(1)}}$ queries.
This completes the proof.
\end{proof}

\section{Conclusion}\label{sec:con}
This work establishes a near-optimal Statistical Query (SQ) lower bound for learning uniform mixtures of linear classifiers under the Gaussian distribution. Our lower bound nearly matches prior algorithmic work on the problem.
Our result applies for the simplest (and well-studied) distributional setting where the covariates are drawn from the standard Gaussian distribution.
This directly implies similar information-computation tradeoffs for the setting that the covariates are drawn from a more general distribution family (e.g., an unknown subgaussian or a log-concave distribution) that includes the standard normal.

From a technical perspective, we believe that our new efficient construction of spherical designs is a mathematical contribution of independent interest that could be used to establish SQ lower bounds for other related latent variable models (e.g., various mixtures of experts).
A natural direction is to establish information-computation tradeoffs for a fixed non-Gaussian distribution on covariates (e.g., the uniform distribution over the Boolean hypercube), for which a different hardness construction is needed.


\bibliographystyle{alpha}

\bibliography{main}

\newcommand{\etalchar}[1]{$^{#1}$}
\begin{thebibliography}{MVKO19}

\bibitem[AK01]{AroraKannan:01}
S.~Arora and R.~Kannan.
\newblock {Learning mixtures of arbitrary Gaussians}.
\newblock In {\em Proceedings of the 33rd Symposium on Theory of Computing},
  pages 247--257, 2001.

\bibitem[AM05]{AchlioptasMcSherry:05}
D.~Achlioptas and F.~McSherry.
\newblock On spectral learning of mixtures of distributions.
\newblock In {\em Proceedings of the Eighteenth Annual Conference on Learning
  Theory (COLT)}, pages 458--469, 2005.

\bibitem[Ban79]{bannai1979tight}
E.~Bannai.
\newblock On tight spherical designs.
\newblock {\em Journal of Combinatorial Theory, Series A}, 26(1):38--47, 1979.

\bibitem[BBH{\etalchar{+}}20]{brennan2020statistical}
M.~Brennan, G.~Bresler, S.~B. Hopkins, J.~Li, and T.~Schramm.
\newblock Statistical query algorithms and low-degree tests are almost
  equivalent.
\newblock {\em arXiv preprint arXiv:2009.06107}, 2020.
\newblock In {\em 34th Annual Conference on Learning Theory (COLT 2021)}.

\bibitem[BC92]{BlumC92}
A.~Blum and P.~Chalasani.
\newblock Learning switching concepts.
\newblock In David Haussler, editor, {\em Proceedings of the Fifth Annual {ACM}
  Conference on Computational Learning Theory, {COLT} 1992, Pittsburgh, PA,
  USA, July 27-29, 1992}, pages 231--242. {ACM}, 1992.

\bibitem[BDJ{\etalchar{+}}20]{BD+20-gmm}
A.~Bakshi, I.~Diakonikolas, H.~Jia, D.~M. Kane, P.~K. Kothari, and S.~S.
  Vempala.
\newblock Robustly learning mixtures of $k$ arbitrary gaussians.
\newblock {\em CoRR}, abs/2012.02119, 2020.
\newblock Conference version in STOC'22.

\bibitem[BRS{\etalchar{+}}18]{brauchart2018random}
J.~S. Brauchart, A.~B. Reznikov, E.~B. Saff, I.~H. Sloan, Y.~G. Wang, and R.~S.
  Womersley.
\newblock Random point sets on the sphere—hole radii, covering, and
  separation.
\newblock {\em Experimental Mathematics}, 27(1):62--81, 2018.

\bibitem[BRV13]{bondarenko2013optimal}
A.~Bondarenko, D.~Radchenko, and M.~Viazovska.
\newblock Optimal asymptotic bounds for spherical designs.
\newblock {\em Annals of mathematics}, pages 443--452, 2013.

\bibitem[BS21]{BS21}
R.~D. Buhai and D.~Steurer.
\newblock Beyond parallel pancakes: Quasi-polynomial time guarantees for
  non-spherical gaussian mixtures.
\newblock {\em CoRR}, abs/2112.05445, 2021.

\bibitem[BV08]{BV:08}
S.~C. Brubaker and S.~Vempala.
\newblock {Isotropic PCA and Affine-Invariant Clustering}.
\newblock In {\em Proc. 49th IEEE Symposium on Foundations of Computer
  Science}, pages 551--560, 2008.

\bibitem[CC06]{cho2006topological}
Y.~J. Cho and Y.~Q. Chen.
\newblock {\em Topological degree theory and applications}.
\newblock CRC Press, 2006.

\bibitem[CDV22]{chen2022algorithms}
A.~Chen, A.~De, and A.~Vijayaraghavan.
\newblock Algorithms for learning a mixture of linear classifiers.
\newblock In {\em International Conference on Algorithmic Learning Theory},
  pages 205--226. PMLR, 2022.

\bibitem[CLS20]{CLS20}
S.~Chen, J.~Li, and Z.~Song.
\newblock Learning mixtures of linear regressions in subexponential time via
  fourier moments.
\newblock In {\em Proceedings of the 52nd Annual ACM SIGACT Symposium on Theory
  of Computing}, pages 587--600, 2020.

\bibitem[Das99]{Dasgupta:99}
S.~Dasgupta.
\newblock {Learning mixtures of Gaussians}.
\newblock In {\em Proceedings of the 40th Annual Symposium on Foundations of
  Computer Science}, pages 634--644, 1999.

\bibitem[DGS91]{delsarte1991spherical}
P.~Delsarte, J.~Goethals, and J.~J. Seidel.
\newblock Spherical codes and designs.
\newblock In {\em Geometry and Combinatorics}, pages 68--93. Elsevier, 1991.

\bibitem[DHKK20]{DHKK20}
I.~Diakonikolas, S.~B. Hopkins, D.~Kane, and S.~Karmalkar.
\newblock Robustly learning any clusterable mixture of gaussians.
\newblock {\em CoRR}, abs/2005.06417, 2020.
\newblock Conference version in FOCS'20.

\bibitem[DK20]{DK20-ag}
I.~Diakonikolas and D.~M. Kane.
\newblock Small covers for near-zero sets of polynomials and learning latent
  variable models.
\newblock In {\em Proceedings of the 61st Annual IEEE Symposium on Foundations
  of Computer Science (FOCS 2020)}, 2020.

\bibitem[DK23]{DK}
I.~Diakonikolas and D.~M. Kane.
\newblock {\em Algorithmic High-Dimensional Robust Statistics}.
\newblock Cambridge university press, 2023.

\bibitem[DKK{\etalchar{+}}21]{DKKLT21}
I.~Diakonikolas, D.~M. Kane, D.~Kongsgaard, J.~Li, and K.~Tian.
\newblock Clustering mixture models in almost-linear time via list-decodable
  mean estimation.
\newblock {\em CoRR}, abs/2106.08537, 2021.
\newblock Conference version in STOC'22.

\bibitem[DKKZ20]{DiakonikolasKKZ20}
I.~Diakonikolas, D.~M. Kane, V.~Kontonis, and N.~Zarifis.
\newblock Algorithms and {SQ} lower bounds for {PAC} learning one-hidden-layer
  relu networks.
\newblock In {\em Conference on Learning Theory, {COLT} 2020}, volume 125 of
  {\em Proceedings of Machine Learning Research}, pages 1514--1539. {PMLR},
  2020.

\bibitem[DKPZ21]{DiakonikolasKPZ21}
I.~Diakonikolas, D.~M. Kane, T.~Pittas, and N.~Zarifis.
\newblock The optimality of polynomial regression for agnostic learning under
  gaussian marginals in the {SQ} model.
\newblock In {\em Conference on Learning Theory, {COLT} 2021}, volume 134 of
  {\em Proceedings of Machine Learning Research}, pages 1552--1584. {PMLR},
  2021.

\bibitem[DKS17]{DKS17-sq}
I.~Diakonikolas, D.~M. Kane, and A.~Stewart.
\newblock Statistical query lower bounds for robust estimation of
  high-dimensional gaussians and gaussian mixtures.
\newblock In {\em 58th {IEEE} Annual Symposium on Foundations of Computer
  Science, {FOCS} 2017}, pages 73--84, 2017.
\newblock Full version at http://arxiv.org/abs/1611.03473.

\bibitem[DKS18]{DKS18-list}
I.~Diakonikolas, D.~M. Kane, and A.~Stewart.
\newblock List-decodable robust mean estimation and learning mixtures of
  spherical gaussians.
\newblock In {\em Proceedings of the 50th Annual {ACM} {SIGACT} Symposium on
  Theory of Computing, {STOC} 2018}, pages 1047--1060, 2018.
\newblock Full version available at https://arxiv.org/abs/1711.07211.

\bibitem[Fel16]{Feldman16b}
V.~Feldman.
\newblock Statistical query learning.
\newblock In {\em Encyclopedia of Algorithms}, pages 2090--2095. Springer New
  York, 2016.

\bibitem[FGR{\etalchar{+}}17]{FeldmanGRVX17}
V.~Feldman, E.~Grigorescu, L.~Reyzin, S.~Vempala, and Y.~Xiao.
\newblock Statistical algorithms and a lower bound for detecting planted
  cliques.
\newblock {\em J. {ACM}}, 64(2):8:1--8:37, 2017.

\bibitem[FGV17]{FeldmanGV17}
V.~Feldman, C.~Guzman, and S.~S. Vempala.
\newblock Statistical query algorithms for mean vector estimation and
  stochastic convex optimization.
\newblock In {\em Proceedings of the Twenty-Eighth Annual {ACM-SIAM} Symposium
  on Discrete Algorithms, {SODA} 2017}, pages 1265--1277. {SIAM}, 2017.

\bibitem[GMP20]{gandikota2020recovery}
V.~Gandikota, A.~Mazumdar, and S.~Pal.
\newblock Recovery of sparse linear classifiers from mixture of responses.
\newblock {\em Advances in Neural Information Processing Systems},
  33:14688--14698, 2020.

\bibitem[GP11]{graf2011computation}
M.~Gr{\"a}f and D.~Potts.
\newblock On the computation of spherical designs by a new optimization
  approach based on fast spherical fourier transforms.
\newblock {\em Numerische Mathematik}, 119(4):699--724, 2011.

\bibitem[HKP{\etalchar{+}}17]{HopkinsKPRSS17}
S.~B. Hopkins, P.~K. Kothari, A.~Potechin, P.~Raghavendra, T.~Schramm, and
  D.~Steurer.
\newblock The power of sum-of-squares for detecting hidden structures.
\newblock In {\em 58th {IEEE} Annual Symposium on Foundations of Computer
  Science, {FOCS} 2017}, pages 720--731. {IEEE} Computer Society, 2017.

\bibitem[HL18]{HL18-sos}
S.~B. Hopkins and J.~Li.
\newblock Mixture models, robustness, and sum of squares proofs.
\newblock In {\em Proceedings of the 50th Annual {ACM} {SIGACT} Symposium on
  Theory of Computing, {STOC} 2018}, pages 1021--1034, 2018.

\bibitem[Hop18]{Hopkins-thesis}
S.~B. Hopkins.
\newblock {\em Statistical inference and the sum of squares method}.
\newblock PhD thesis, Cornell University, 2018.

\bibitem[HS17]{HopkinsS17}
S.~B. Hopkins and D.~Steurer.
\newblock Efficient bayesian estimation from few samples: Community detection
  and related problems.
\newblock In {\em 58th {IEEE} Annual Symposium on Foundations of Computer
  Science, {FOCS} 2017}, pages 379--390. {IEEE} Computer Society, 2017.

\bibitem[HYJ22]{ho2022convergence}
N.~Ho, C.~Yang, and M.~I. Jordan.
\newblock Convergence rates for gaussian mixtures of experts.
\newblock {\em The Journal of Machine Learning Research}, 23(1):14523--14603,
  2022.

\bibitem[JJ94]{jordan1994hierarchical}
M.~I. Jordan and R.~A. Jacobs.
\newblock Hierarchical mixtures of experts and the em algorithm.
\newblock {\em Neural computation}, 6(2):181--214, 1994.

\bibitem[Kan15]{kane2015small}
D.~M. Kane.
\newblock Small designs for path-connected spaces and path-connected
  homogeneous spaces.
\newblock {\em Transactions of the American Mathematical Society},
  367(9):6387--6414, 2015.

\bibitem[Kea98]{Kearns:98}
M.~J. Kearns.
\newblock Efficient noise-tolerant learning from statistical queries.
\newblock {\em Journal of the ACM}, 45(6):983--1006, 1998.

\bibitem[KSV08]{KSV08}
R.~Kannan, H.~Salmasian, and S.~Vempala.
\newblock The spectral method for general mixture models.
\newblock {\em SIAM J. Comput.}, 38(3):1141--1156, 2008.

\bibitem[LL18]{li2018learning}
Y.~Li and Y.~Liang.
\newblock Learning mixtures of linear regressions with nearly optimal
  complexity.
\newblock In {\em Conference On Learning Theory}, pages 1125--1144. PMLR, 2018.

\bibitem[LLYH17]{li2017non}
W.~Li, P.~Liang, X.~Yuan, and J.~Hu.
\newblock Non-local information for a mixture of multiple linear classifiers.
\newblock In {\em 2017 International Joint Conference on Neural Networks
  (IJCNN)}, pages 3741--3746. IEEE, 2017.

\bibitem[LM20]{LM20-gmm}
A.~Liu and A.~Moitra.
\newblock Settling the robust learnability of mixtures of gaussians.
\newblock {\em coRR}, 2020.
\newblock Conference version in STOC'21.

\bibitem[ME14]{masoudnia2014mixture}
S.~Masoudnia and R.~Ebrahimpour.
\newblock Mixture of experts: a literature survey.
\newblock {\em The Artificial Intelligence Review}, 42(2):275, 2014.

\bibitem[MV10]{MoitraValiant:10}
A.~Moitra and G.~Valiant.
\newblock {Settling the polynomial learnability of mixtures of Gaussians}.
\newblock In {\em FOCS}, pages 93--102, 2010.

\bibitem[MVKO19]{makkuva2019breaking}
A.~Makkuva, P.~Viswanath, S.~Kannan, and S.~Oh.
\newblock Breaking the gridlock in mixture-of-experts: Consistent and efficient
  algorithms.
\newblock In {\em International Conference on Machine Learning}, pages
  4304--4313. PMLR, 2019.

\bibitem[Pea94]{Pearson:94}
K.~Pearson.
\newblock Contribution to the mathematical theory of evolution.
\newblock {\em Phil. Trans. Roy. Soc. A}, 185:71--110, 1894.

\bibitem[RV17]{RV17-mixtures}
O.~Regev and A.~Vijayaraghavan.
\newblock On learning mixtures of well-separated gaussians.
\newblock In {\em 58th {IEEE} Annual Symposium on Foundations of Computer
  Science, {FOCS} 2017}, pages 85--96, 2017.

\bibitem[SIM14]{sun2014learning}
Y.~Sun, S.~Ioannidis, and A.~Montanari.
\newblock Learning mixtures of linear classifiers.
\newblock In {\em International Conference on Machine Learning}, pages
  721--729. PMLR, 2014.

\bibitem[SJA16]{sedghi2016provable}
H.~Sedghi, M.~Janzamin, and A.~Anandkumar.
\newblock Provable tensor methods for learning mixtures of generalized linear
  models.
\newblock In {\em Artificial Intelligence and Statistics}, pages 1223--1231.
  PMLR, 2016.

\bibitem[Ver18]{vershynin2018high}
R.~Vershynin.
\newblock {\em High-dimensional probability: An introduction with applications
  in data science}, volume~47.
\newblock Cambridge university press, 2018.

\bibitem[VW02]{VempalaWang:02}
S.~Vempala and G.~Wang.
\newblock A spectral algorithm for learning mixtures of distributions.
\newblock In {\em Proceedings of the 43rd Annual Symposium on Foundations of
  Computer Science}, pages 113--122, 2002.

\bibitem[Wom18]{womersley2018efficient}
R.~S. Womersley.
\newblock Efficient spherical designs with good geometric properties.
\newblock {\em Contemporary computational mathematics-A celebration of the 80th
  birthday of Ian Sloan}, pages 1243--1285, 2018.

\bibitem[XA09]{xu2009combining}
L.~Xu and S.~Amari.
\newblock Combining classifiers and learning mixture-of-experts.
\newblock In {\em Encyclopedia of artificial intelligence}, pages 318--326. IGI
  Global, 2009.

\end{thebibliography}

\newpage

\appendix

\section*{Appendix} \label{sec:app}

\section{Background on Hermite Polynomials}\label{sec:background}

Recall the definition of the probabilist's Hermite polynomials:
\begin{align*}
\mathrm{\textit{He}}_n(x)=(-1)^ne^{x^2/2}\cdot\frac{d^2}{dx^2}e^{-x^2/2}.
\end{align*}
Under this definition, the first four Hermite polynomials are
\begin{align*}
\mathrm{\textit{He}}_0(x)=1, \mathrm{\textit{He}}_1(x)=x, \mathrm{\textit{He}}_2(x)=x^2-1,\mathrm{\textit{He}}_3(x)=x^3-3x.
\end{align*}
In our work, we will consider the \emph{normalized} Hermite polynomial of degree $n$ to be $h_n(x)=\frac{\mathrm{\textit{He}}_n(x)}{\sqrt{n!}}$. These normalized Hermite polynomials form a complete orthogonal basis for inner product space $\L^2(\R,\mathcal{N})$. To obtain an orthogonal basis for $\L^2(\R^d,\mathcal{N}_d)$, we will use a multi-index $J=(j_1,\ldots,j_d)\in\mathbb{N}^d$ to define the $d$-variate normalized Hermite polynomial as $H_J(\bx) = \prod_{i=1}^dH_{j_i}(x_i)$. Let the total degree of $H_J$ be $|J|=\sum_{i=1}^d j_i$. Given a function $f\in\L^2(\R^d,\mathcal{N}_d)$, we can express it uniquely as $f(\bx)=\sum_{J\in\N^d}\wh{f}(J)H_J(\bx)$, where $\wh{f}(J)=\E_{\bx\in\mathcal{N}_d}[f(\bx)H_J(\bx)]$.
We denote by $f^{[k]}(\bx)$ the degree $k$ part of the Hermite expansion of $f$, i.e., $f^{[k]}(\bx)=\sum_{|J|=k}\wh{f}(J)H_J(\bx)$.
\begin{definition}\label{def:harmonic}
We say that a polynomial $q$ in $d$ variables is harmonic of degree $k$
if it is a linear combination of degree $k$ Hermite polynomials. 
That is, $q$ is harmonic if it can be written as
\begin{align*}
q(\bx)=q^{[k]}(\bx)=\sum_{J:|J|=k}c_JH_J(\bx).
\end{align*}
\end{definition}

Notice that, since for a single-dimensional Hermite polynomial 
it holds $h'_m(x)=\sqrt{m}h_{m-1}(x)$,
we have that $\nabla H^{(i)}_M(\bx)=\sqrt{m_i}H_{M-E_i}(\bx)$,
where $M=(m_1,\ldots,m_d)$.
From this fact and the orthogonality of Hermite polynomials, we obtain
\begin{align*}
\E_{\bx\sim\mathcal{N}_d}[\langle\nabla H_M(\bx),\nabla H_L(\bx)\rangle]=|M| \; \I[M=L] \;.
\end{align*}

We will also require the following standard facts:

\begin{fact}\label{fact:diff-harm}
Let $p$ be a polynomial of degree $k$ in $d$ variables. Then $p$ is harmonic of degree $k$ if and only if for all $\bx\in\R^d$ it holds that
$kp(\bx) = \langle\bx,\nabla p(\bx)\rangle-\nabla^2p(\bx)$.
\end{fact}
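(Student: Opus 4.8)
The plan is to recognize the claimed identity as the statement that $p$ is an eigenfunction, with eigenvalue $k$, of the second-order operator $T$ defined on polynomials by $Tp(\bx):=\langle\bx,\nabla p(\bx)\rangle-\nabla^2p(\bx)$ (here $\nabla^2$ is the Laplacian, as in \cref{fact:integral}). The crux is the single claim that $T$ is diagonal in the Hermite basis: for every multi-index $M\in\N^d$ one has $TH_M=|M|\,H_M$. Granting this, both implications are immediate from the (finite) Hermite expansion $p=\sum_J\wh p(J)H_J$, which is supported on $\{|J|\le k\}$ since $\deg p=k$: by linearity $Tp=\sum_J|J|\,\wh p(J)H_J$, so if $p$ is harmonic of degree $k$ (only $|J|=k$ terms survive) then $Tp=kp$; conversely $Tp=kp$ forces $(|J|-k)\wh p(J)=0$ for all $J$, hence $\wh p(J)=0$ whenever $|J|\ne k$, i.e.\ $p=p^{[k]}$ is harmonic of degree $k$.

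To prove the eigenvalue claim I would use Gaussian integration by parts. For polynomials $f,g$ and each coordinate $i$, integrating by parts against the density of $\mathcal{N}_d$ gives $\E[\partial_i f\,\partial_i g]=\E[f\,(x_i\partial_i g-\partial_i^2 g)]$; summing over $i$ yields $\E_{\bx\sim\mathcal{N}_d}[\langle\nabla f,\nabla g\rangle]=\E[f\cdot Tg]$. Applying this with $f=H_L$, $g=H_M$ and combining with the two facts recorded in the Hermite background, namely $\E[\langle\nabla H_M,\nabla H_L\rangle]=|M|\,\I[M=L]$ and the orthonormality $\E[H_L H_M]=\I[M=L]$, we get $\E[H_L\cdot TH_M]=|M|\,\E[H_L H_M]$ for every $L$, i.e.\ $\E\big[H_L\cdot(TH_M-|M|H_M)\big]=0$ for all $L$. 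Since $TH_M-|M|H_M$ is a polynomial, hence lies in $\L^2(\R^d,\mathcal{N}_d)$, and $\{H_L\}$ is a complete orthonormal basis of that space, the function $TH_M-|M|H_M$ must vanish identically. (Alternatively, one can stay entirely elementary: from $h_m'=\sqrt m\,h_{m-1}$ and the standard three-term recurrence one recovers the classical one-dimensional Hermite ODE $x\,h_m'(x)-h_m''(x)=m\,h_m(x)$, whence $x_i\partial_i H_M-\partial_i^2 H_M=m_i H_M$ coordinatewise, and summing over $i$ gives $TH_M=|M|H_M$.)

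I do not expect a substantial obstacle here: the argument is bookkeeping on top of the standard Hermite facts already collected in the appendix. The only two points needing a little care are (i) that ``$\deg p=k$'' guarantees the Hermite expansion of $p$ is a finite sum over $\{|J|\le k\}$, so that the coefficient comparison $Tp=kp\iff(|J|-k)\wh p(J)=0\ \forall J$ is legitimate, and (ii) the appeal to completeness of the Hermite basis to pass from ``orthogonal to every $H_L$'' to ``identically zero'' in the proof of the eigenvalue claim.
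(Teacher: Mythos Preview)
Your proposal is correct. The paper does not actually supply a proof of \cref{fact:diff-harm}: it is listed among the ``standard facts'' in the Hermite background appendix without argument. Your approach---recognizing the identity as the eigenvalue equation for the Ornstein--Uhlenbeck generator $T=\langle\bx,\nabla\rangle-\nabla^2$ and verifying $TH_M=|M|H_M$ via Gaussian integration by parts together with the appendix identity $\E[\langle\nabla H_M,\nabla H_L\rangle]=|M|\,\I[M=L]$---is exactly the standard route and all the steps check out; the alternative coordinatewise derivation from the one-dimensional Hermite ODE is equally valid and perhaps even cleaner since it avoids the appeal to completeness.
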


\begin{fact}[see, e.g.,~\cite{DiakonikolasKPZ21}]\label{fact:grad}
Let $p,q$ be harmonic polynomials of degree $k$. Then,
\begin{align*}
\E_{\bx\sim\mathcal{N}_d}\left[\langle\nabla^\ell p(\bx),\nabla^\ell q(\bx)\rangle\right]=k(k-1)\ldots(k-\ell+1)\E_{\bx\sim\mathcal{N}_d}[p(\bx)q(\bx)].
\end{align*}
In particular,
\begin{align*}
\langle\nabla^kp(\bx),\nabla^kq(\bx)\rangle=k!\E_{\bx\sim\mathcal{N}_d}[p(\bx)q(\bx)].
\end{align*}
\end{fact}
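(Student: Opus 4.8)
The plan is to prove the general identity by induction (really, iteration) on $\ell$, using the Hermite expansion together with the gradient relation already recorded in the excerpt, namely $\E_{\bx\sim\mathcal{N}_d}[\langle\nabla H_M(\bx),\nabla H_L(\bx)\rangle]=|M|\,\I[M=L]$. First I would establish the base case $\ell=1$: if $u,v$ are harmonic of degree $j$, write $u=\sum_{|M|=j}c_M H_M$ and $v=\sum_{|M|=j}d_M H_M$; then by bilinearity of the inner product and the displayed relation,
\[
\E[\langle\nabla u,\nabla v\rangle]=\sum_{M,L}c_M d_L\,|M|\,\I[M=L]=j\sum_M c_M d_M=j\,\E[uv],
\]
where the last equality is orthonormality of the $H_M$. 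Note that harmonicity is genuinely used here: the analogue fails for polynomials carrying lower-degree Hermite components, since those contribute to $\E[uv]$ but are killed by $\nabla$.

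The second ingredient is the observation that differentiation preserves harmonicity (lowering the degree by one): since $\partial_i H_M=\sqrt{m_i}\,H_{M-E_i}$ (and $=0$ when $m_i=0$), any iterated partial $\partial_{i_1}\cdots\partial_{i_{\ell-1}}p$ of a harmonic degree-$k$ polynomial $p$ is a linear combination of degree-$(k-\ell+1)$ Hermite polynomials, hence harmonic of degree $k-\ell+1$. With this, the inductive step is a regrouping. Using the convention $\langle\nabla^\ell p,\nabla^\ell q\rangle=\sum_{i_1,\ldots,i_\ell}(\partial_{i_1}\cdots\partial_{i_\ell}p)(\partial_{i_1}\cdots\partial_{i_\ell}q)$ and commuting partials to split off the last index,
\[
\langle\nabla^\ell p,\nabla^\ell q\rangle=\sum_{i_1,\ldots,i_{\ell-1}}\big\langle\nabla(\partial_{i_1}\cdots\partial_{i_{\ell-1}}p),\ \nabla(\partial_{i_1}\cdots\partial_{i_{\ell-1}}q)\big\rangle .
\]
Each summand is the gradient inner product of two harmonic degree-$(k-\ell+1)$ polynomials, so applying the base case and taking expectations gives $\E[\langle\nabla^\ell p,\nabla^\ell q\rangle]=(k-\ell+1)\,\E[\langle\nabla^{\ell-1}p,\nabla^{\ell-1}q\rangle]$. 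Iterating this down to $\ell=0$ yields $\E[\langle\nabla^\ell p,\nabla^\ell q\rangle]=k(k-1)\cdots(k-\ell+1)\,\E[pq]$, which is the claimed formula.

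For the "in particular" statement, take $\ell=k$ and note that $\nabla^k p$ and $\nabla^k q$ are constant tensors — all order-$k$ partial derivatives of a degree-$k$ polynomial are constants — so $\langle\nabla^kp(\bx),\nabla^kq(\bx)\rangle$ does not depend on $\bx$ and therefore equals its own expectation, which by the formula is $k!\,\E[pq]$. The only point requiring care is bookkeeping: one must use that the tensor inner product $\langle\nabla^\ell p,\nabla^\ell q\rangle$ is a sum over \emph{ordered} index tuples (so the regrouping identity is exact, with no multinomial factors), and that harmonicity is inherited at every differentiation level so the base case applies throughout; everything else is routine.
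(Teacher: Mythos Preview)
Your proof is correct. Note, however, that the paper does not actually supply its own proof of this statement: it is recorded as a \emph{Fact} with a citation to~\cite{DiakonikolasKPZ21}, so there is no in-paper argument to compare against. That said, your argument is exactly the natural one and is built precisely on the ingredients the paper lays out in the Hermite background: the derivative relation $\partial_i H_M=\sqrt{m_i}\,H_{M-E_i}$ and its consequence $\E[\langle\nabla H_M,\nabla H_L\rangle]=|M|\,\I[M=L]$, which you use for the base case, together with the observation that differentiation maps harmonic degree-$k$ polynomials to harmonic degree-$(k-1)$ polynomials, which drives the induction. The bookkeeping remarks you flag (ordered index tuples, constancy of $\nabla^k p$) are the right ones, and nothing is missing.
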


\section{Omitted Proofs from~\cref{sec:SQ-mixture-LTFs}}\label{sec:omit-SQ-mixture-LTFs}

\subsection{Proof of~\cref{lem:poly}}\label{ssec:poly}

We start with the following claim:

\begin{claim}\label{clm:inner-prod-ext}
Let $p:\R^{n_1}\to\R$ and $q:\R^{n_2}\to\R$,
where $p$ is a polynomial of degree at most $k$ and $q\in\L^2(\R^{n_2},\mathcal{N}_{n_2})$.
Let $\bU\in\R^{n_1\times n},\bV\in\R^{n_2\times n}$ such that $\bU\bU^\intercal=\mathbf{I}_{n_1},\bV\bV^\intercal=\mathbf{I}_{n_2}$.
Then, we have that $\E_{\bx\sim\mathcal{N}_n}[p(\bU\bx)q(\bV\bx)]=\sum_{m=0}^k\frac{1}{m!}\langle (\bU^{\intercal})^{\otimes m}\mathbf{R}^m_1,(\bV^{\intercal})^{\otimes m}\mathbf{R}^m_2\rangle$,
where $\mathbf{R}^m_1=\nabla^mp^{[m]}(\bx),\mathbf{R}^m_2=\nabla^mq^{[m]}(\bx)$.
\end{claim}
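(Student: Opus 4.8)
The plan is to expand both $p$ and $q$ into Hermite components and exploit orthogonality. Write $p=\sum_{m=0}^k p^{[m]}$ and $q=\sum_{m\ge 0} q^{[m]}$ for the Hermite decompositions over $\mathcal{N}_{n_1}$ and $\mathcal{N}_{n_2}$ respectively, so each $p^{[m]}$ and $q^{[m]}$ is harmonic of degree $m$ in the sense of \cref{def:harmonic}; since $\deg p\le k$, the first sum truncates at $m=k$.

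The key intermediate fact is that precomposition with $\bU$ preserves harmonic degree: if $P$ is harmonic of degree $m$ in $n_1$ variables and $\bU\bU^\intercal=\mathbf{I}_{n_1}$, then $\tilde P(\bx):=P(\bU\bx)$ is harmonic of degree $m$ in $n$ variables. To prove this I would invoke the differential characterization of \cref{fact:diff-harm}: the chain rule gives $\nabla\tilde P(\bx)=\bU^\intercal(\nabla P)(\bU\bx)$, and a direct computation using $\bU\bU^\intercal=\mathbf{I}_{n_1}$ together with the cyclic property of the trace gives $\nabla^2\tilde P(\bx)=(\nabla^2 P)(\bU\bx)$; hence $\langle\bx,\nabla\tilde P(\bx)\rangle-\nabla^2\tilde P(\bx)=\langle\bU\bx,(\nabla P)(\bU\bx)\rangle-(\nabla^2 P)(\bU\bx)=m\,P(\bU\bx)=m\,\tilde P(\bx)$, so $\tilde P$ is harmonic of degree $m$. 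By uniqueness of the Hermite decomposition, $p(\bU\bx)=\sum_{m=0}^k p^{[m]}(\bU\bx)$ and $q(\bV\bx)=\sum_{m\ge 0}q^{[m]}(\bV\bx)$ are precisely the Hermite decompositions of these functions over $\mathcal{N}_n$, and orthogonality of components of distinct degrees yields $\E_{\bx\sim\mathcal{N}_n}[p(\bU\bx)q(\bV\bx)]=\sum_{m=0}^k\E_{\bx\sim\mathcal{N}_n}[p^{[m]}(\bU\bx)\,q^{[m]}(\bV\bx)]$.

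Then I would evaluate the $m$-th summand using the gradient identity of \cref{fact:grad}, applied to the harmonic degree-$m$ polynomials $\bx\mapsto p^{[m]}(\bU\bx)$ and $\bx\mapsto q^{[m]}(\bV\bx)$ in $n$ variables: $\E_{\bx\sim\mathcal{N}_n}[p^{[m]}(\bU\bx)q^{[m]}(\bV\bx)]=\tfrac1{m!}\langle\nabla^m(p^{[m]}(\bU\bx)),\nabla^m(q^{[m]}(\bV\bx))\rangle$, where the right-hand side is constant (independent of $\bx$) because the $m$-th total derivative of a degree-$m$ polynomial is a constant symmetric $m$-tensor. By the multilinear chain rule, $\nabla^m(p^{[m]}(\bU\bx))=(\bU^\intercal)^{\otimes m}(\nabla^m p^{[m]})(\bU\bx)=(\bU^\intercal)^{\otimes m}\mathbf{R}^m_1$, using that $\nabla^m p^{[m]}$ is constant, and similarly $\nabla^m(q^{[m]}(\bV\bx))=(\bV^\intercal)^{\otimes m}\mathbf{R}^m_2$. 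Substituting and summing over $m=0,\dots,k$ gives the claimed identity (with the $m=0$ term equal to $\E[p]\,\E[q]$, a useful sanity check).

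The main obstacle is the middle step — verifying that precomposition with the coisometry $\bU$ commutes with Hermite projection onto degree $m$, and keeping the tensor chain-rule bookkeeping straight (the appearance of $(\bU^\intercal)^{\otimes m}$ and the fact that $\nabla^m p^{[m]}$ is a constant tensor). Everything else is a routine application of Hermite orthogonality and \cref{fact:grad}.
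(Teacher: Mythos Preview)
Your proposal is correct and follows essentially the same route as the paper: the paper also first proves (as a separate lemma, via \cref{fact:diff-harm}) that precomposition with a coisometry preserves harmonic degree, then uses Hermite orthogonality to reduce to the diagonal sum, applies \cref{fact:grad} term by term, and finishes with the tensor chain rule $\nabla^m(h(\bU\bx))=(\bU^\intercal)^{\otimes m}(\nabla^m h)(\bU\bx)$. Your identification of the ``main obstacle'' is exactly the content of the paper's auxiliary lemma.
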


We require the following lemma:
\begin{lemma}\label{lem:k-harm}
Let $p$ be a harmonic polynomial of degree $k$.
Let $\bV\in\R^{m\times n}$ with $\bV\bV^\intercal=\bI_m$. Then the polynomial $p(\bV\bx)$ is harmonic of degree $k$.
\end{lemma}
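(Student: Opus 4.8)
\textbf{Proof proposal for \cref{lem:k-harm}.} The plan is to use the characterization of harmonicity given in \cref{fact:diff-harm}: a degree-$k$ polynomial $q$ in $m$ variables is harmonic if and only if $k\,q(\bx) = \langle \bx, \nabla q(\bx)\rangle - \nabla^2 q(\bx)$, i.e.\ $q$ is an eigenfunction of the Ornstein--Uhlenbeck-type operator $L q = \langle \bx, \nabla q\rangle - \nabla^2 q$ with eigenvalue $k$. So I would set $q(\bx) = p(\bV\bx)$ for $\bx \in \R^n$ and show directly that $L_n q = k q$, where $L_n$ is this operator acting on functions of $\bx \in \R^n$. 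Since $p$ has degree $k$, so does $q$ (because $\bV$ is a linear map and has full row rank, actually degree exactly $k$ is preserved since $\bV^\intercal$ is injective), so the degree hypothesis of \cref{fact:diff-harm} is met, and the eigenfunction identity will give harmonicity.

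The computation is a routine chain rule exercise. Writing $\by = \bV\bx$, we have $\nabla_\bx q(\bx) = \bV^\intercal (\nabla p)(\by)$, so
\[
\langle \bx, \nabla_\bx q(\bx)\rangle = \langle \bx, \bV^\intercal (\nabla p)(\by)\rangle = \langle \bV\bx, (\nabla p)(\by)\rangle = \langle \by, (\nabla p)(\by)\rangle.
\]
For the Laplacian, $\nabla^2_\bx q(\bx) = \mathrm{tr}\big(\bV^\intercal (\nabla^2 p)(\by)\, \bV\big) = \mathrm{tr}\big((\nabla^2 p)(\by)\, \bV\bV^\intercal\big) = \mathrm{tr}\big((\nabla^2 p)(\by)\big) = (\nabla^2 p)(\by)$, where the key step is exactly the hypothesis $\bV\bV^\intercal = \bI_m$. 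Combining, $\langle \bx, \nabla_\bx q(\bx)\rangle - \nabla^2_\bx q(\bx) = \langle \by, (\nabla p)(\by)\rangle - (\nabla^2 p)(\by) = k\, p(\by) = k\, q(\bx)$, using that $p$ is harmonic of degree $k$ (again via \cref{fact:diff-harm}). Then \cref{fact:diff-harm}, applied in the $n$-variable direction, concludes that $q = p(\bV\cdot)$ is harmonic of degree $k$.

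The only point requiring a sentence of care is that $p(\bV\bx)$ genuinely has degree $k$ and not less: this follows because $\bV^\intercal$ is injective (as $\bV\bV^\intercal = \bI_m$ forces $\bV$ to have full row rank $m$), so the degree-$k$ homogeneous part of $p$, which is a nonzero polynomial, does not vanish upon the substitution $\bx \mapsto \bV\bx$ — equivalently, the top-degree part of $p\circ\bV$ equals $p^{[k]}(\bV\bx)$ which is not identically zero. I do not anticipate a genuine obstacle here; the lemma is essentially the statement that the class of harmonic polynomials is preserved under composition with a partial isometry, which is immediate from the operator-eigenfunction viewpoint. The mild subtlety is just bookkeeping: making sure \cref{fact:diff-harm} is being invoked in the correct dimension ($n$ variables for $q$, $m$ variables for $p$) and that the degree is preserved so the ``if'' direction of that fact applies.
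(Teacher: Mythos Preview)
Your proposal is correct and follows exactly the paper's approach: the paper also sets $f(\bx)=p(\bV\bx)$, invokes \cref{fact:diff-harm}, and uses $\bV\bV^\intercal=\bI_m$ to reduce $\langle\bx,\nabla f(\bx)\rangle-\nabla^2 f(\bx)$ to $\langle\bV\bx,\nabla p(\bV\bx)\rangle-\nabla^2 p(\bV\bx)=kp(\bV\bx)$. Your writeup is in fact more careful than the paper's, since you spell out the trace computation for the Laplacian and explicitly verify that the degree is preserved.
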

\begin{proof}
Let $f(\bx)=p(\bV\bx)$.
By~\cref{fact:diff-harm}, it suffices to show that for all $\bx\in\R^n$ it holds that $kf(\bx)=\langle\bx,\nabla f(\bx)\rangle-\nabla^2f(\bx)$. Since $\bV\bV^\intercal=\bI_m$, applying~\cref{fact:diff-harm} yields \[ \langle\bx,\nabla f(\bx)\rangle-\nabla^2f(\bx)=\langle\bV\bx,\nabla p(\bV\bx)\rangle-\nabla^2p(\bV\bx)=kp(\bV\bx)=kf(\bx) \;. \]
This completes the proof.
\end{proof}

\begin{proof}[Proof of~\cref{clm:inner-prod-ext}]
For $m\in\N$,
let $f^{(m)}(\bx)=p^{[m]}(\bU\bx)$ and $g^{(m)}(\bx)=q^{[m]}(\bV\bx)$. We can write $p(\bU\bx)\sim\sum_{m=0}^{k}f^{(m)}(\bx)$ and $q(\bV\bx)\sim\sum_{m=0}^\infty g^{(m)}(\bx)$.
Then applying~\cref{fact:grad} and~\cref{lem:k-harm} yields
\begin{align*}
\E_{\bx\sim\mathcal{N}_n}[p(\bU\bx)q(\bV\bx)]&=\sum_{m_1=0}^k\sum_{m_2=0}^\infty\E_{\bx\sim\mathcal{N}_n}[f^{(m_1)}(\bx)g^{(m_2)}(\bx)]=\sum_{m=0}^k\E_{\bx\sim\mathcal{N}_n}[f^{(m)}(\bx)g^{(m)}(\bx)]
\\&=\sum_{m=0}^k\frac{1}{m!}\left\langle\nabla^mf^{(m)}(\bx),\nabla^mg^{(m)}(\bx)\right\rangle=\sum_{m=0}^k\frac{1}{m!}\left\langle\nabla^mp^{[m]}(\bU\bx),\nabla^mq^{[m]}(\bV\bx)\right\rangle \;.
\end{align*}
Denote by $\U\subseteq\R^n$ the image of the linear map $\bU^{\intercal}$.
Applying the chain rule, for any function $h(\bU\bx):\R^n\to\R$,
it holds $\nabla h(\bU\bx)=\partial_ih(\bU\bx)U_{ij}\in\U$,
where we applied Einstein's summation notation for repeated indices.
Applying the above rule $m$ times, we have that
\begin{align*}
\nabla^mh(\bU\bx)=\partial_{i_m}\ldots\partial_{i_1}h(\bU\bx)U_{i_1,j_1}\ldots U_{i_m,j_m}\in\U^{\otimes m}.
\end{align*}
Moreover, denote $\S_m=\nabla^mp^{[m]}(\bU\bx)=(\bU^{\intercal})^{\otimes m}\mathbf{R}_1^m\in\U^{\otimes m}$,
and $\T_m=\nabla^mq^{[m]}(\bV\bx)=(\bV^{\intercal})^{\otimes m}\mathbf{R}^m_2\in\V^{\otimes m}$.
We have that
\begin{align*}
\E_{\bx\sim\mathcal{N}_n}[f(\bx)g(\bx)]&=\sum_{m=0}^k\frac{1}{m!}\left\langle\nabla^mp^{[m]}(\bU\bx),\nabla^mq^{[m]}(\bV\bx)\right\rangle=\sum_{m=0}^k\frac{1}{m!}\langle \S_m,\T_m\rangle
\\&=\sum_{m=0}^k\frac{1}{m!}\langle (\bU^{\intercal})^{\otimes m}\mathbf{R}^m_1,(\bV^{\intercal})^{\otimes m}\mathbf{R}^m_2\rangle.
\end{align*}
This proves the claim. 
\end{proof}

\begin{proof}[Proof of~\cref{lem:poly}]
Applying~\cref{clm:inner-prod-ext} by taking $\bU=\bI_m$ and $\bV=\bv^\intercal$, we have that
\begin{align*}
\E_{\bz\sim\mathcal{N}_m}\left[p(\bz)f(\bv^{\intercal}\bz)\right]=\sum_{d=0}^{k-1}\frac{1}{d!}\langle\mathbf{R}^d_1,\bv^{\otimes d}\mathbf{R}^d_2\rangle,
\end{align*}
which is a polynomial in $\bv$ of degree less than $k$,
since $\mathbf{R}^d_1=\nabla^dp^{[d]}(\bx)$ and $\mathbf{R}^d_2=\nabla^df^{[d]}(\bx)$ 
are constants only depending on $p$ and $f$. This completes the proof of \cref{lem:poly}. 
\end{proof}

\subsection{Proof of~\cref{lem:equiv}}\label{ssec:equiv}
We start by proving that {\em ``there exist non-negative weights $w_1,\ldots,w_r$ 
with $\sum_{\ell=1}^rw_\ell=1$ such that $\sum_{\ell=1}^r w_\ell q(\bv_\ell)=0$ 
for all odd polynomials $q$ of degree less than $k$''} 
implies {\em ``there does not exist any odd polynomial $q$ of degree less than $k$ 
such that $q(\bv_\ell)>0,1\le\ell\le r$.'' }
Suppose for contradiction that there exists an odd polynomial $q^*$ 
of degree less than $k$ such that $q^*(\bv_\ell)>0,1\le\ell\le r$.
For arbitrary non-negative weights $w_1,\ldots,w_r$ with $\sum_{\ell=1}^rw_\ell=1$,
we have that $\sum_{\ell=1}^rw_\ell q^*(\bv_\ell)\ge\min\{q^*(\bv_1),\ldots,q^*(\bv_r)\}>0$,
which contradicts to the first statement.

We then prove the opposite direction. 
We will use the following version of Farkas' lemma.
\begin{fact}[Farkas' lemma]\label{fact:Farkas}
Let $A\in \R ^{m\times n}$ and $b\in\R^m$. Then exactly one of the following two assertions is true:
\begin{itemize}[leftmargin=*]
\item There exists an $\bx\in\R^n$ such that $A\bx=b$ and $\bx\ge0$.
\item There exists a $\by\in\R^m$ such that $\by^{\intercal}A\ge0$ and $\by^{\intercal}b<0$.
\end{itemize}
\end{fact}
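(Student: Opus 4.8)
The plan is to obtain Farkas' lemma from the separating hyperplane theorem for closed convex sets, which I take as a standard fact from convex analysis. First I would dispose of the easy half: the two assertions cannot both hold. Indeed, if $A\bx=b$ with $\bx\ge0$ and simultaneously $\by^\intercal A\ge0$, then $\by^\intercal b=\by^\intercal A\bx=\sum_i(\by^\intercal A)_i x_i\ge0$, which contradicts $\by^\intercal b<0$. So it remains to show that if the first assertion fails, the second holds.

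Assume $b\notin C$, where $C\eqdef\{A\bx:\bx\in\R^n,\ \bx\ge0\}$ is the conical hull of the columns $\ba_1,\ldots,\ba_n$ of $A$. The key structural claim is that $C$ is a closed convex set. Convexity and closure under nonnegative scaling are immediate from the definition; closedness is the main technical point. I would prove it by a Carath\'eodory-type argument: every point of $C$ can be written as a nonnegative combination of a linearly independent subfamily of $\{\ba_1,\ldots,\ba_n\}$, there are only finitely many such subfamilies, and for a linearly independent family the associated ``simplicial'' cone is the image of the closed orthant $\R_{\ge0}^{k}$ under an injective linear map, hence closed (a convergent sequence of points in such a cone has coefficient vectors that are bounded, hence have a convergent subsequence, by injectivity of the map). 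Thus $C$ is a finite union of closed cones and therefore closed.

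Next I would apply the separating hyperplane theorem to the point $b$ and the closed convex set $C$: there exist $\by\in\R^m$ and $\alpha\in\R$ with $\by^\intercal b<\alpha\le\by^\intercal c$ for all $c\in C$. Since $0\in C$, we get $\alpha\le0$, so $\by^\intercal b<0$. For any fixed $c\in C$ and any $t>0$ we have $tc\in C$, hence $t\,\by^\intercal c\ge\alpha$ for all $t>0$; letting $t\to\infty$ forces $\by^\intercal c\ge0$. Applying this to $c=\ba_i=A\be_i\in C$ for each $i$ yields $\by^\intercal A\ge0$ coordinatewise, so $\by$ witnesses the second assertion, completing the proof.

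The only genuinely delicate step is the closedness of the finitely generated cone $C$; once that is in hand, the separation argument and the scaling argument are routine. If one prefers a self-contained, topology-free treatment, I would instead run Fourier--Motzkin elimination on the system $A\bx=b,\ \bx\ge0$, or argue by induction on the number of columns of $A$; either route yields a purely combinatorial proof and would be my fallback.
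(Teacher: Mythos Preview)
Your argument is correct and standard: you first verify the alternatives are mutually exclusive, then establish closedness of the finitely generated cone $C=\{A\bx:\bx\ge0\}$ via a Carath\'eodory reduction to simplicial cones, and finally separate $b$ from $C$ and exploit that $C$ is a cone to upgrade the separation inequality to $\by^\intercal A\ge0$. The only step requiring care, as you note, is closedness of $C$, and your sketch handles it properly.

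As for comparison with the paper: there is nothing to compare. The paper states Farkas' lemma as a \texttt{Fact} with no proof, treating it as a standard result from the literature, and then applies it directly in the proof of \cref{lem:equiv}. So your proposal supplies an argument the paper deliberately omits. Either of your suggested routes (separation or Fourier--Motzkin) would be appropriate if one wanted to make the paper self-contained on this point.
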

Suppose for contradiction that there does not exist $w_1,\ldots,w_r$ with $\sum_{\ell=1}^rw_\ell=1$
such that $\sum_{\ell=1}^r w_\ell q(\bv_\ell)=0$ holds for every odd polynomial $q$ of degree less than $k$.
Let $s_{k,m}$ denote the total number of $m$-variate odd monomials of degree less than $k$,
and $\{q^{k,m}_j\}_{1\le j\le s_{k,m}}$ denote such monomials.
We consider the following LP with variables $\bw=(w_1,\ldots,w_r)^{\intercal}$: $\sum_{\ell=1}^rw_\ell q^{k,m}_j(\bv_\ell)=0,1\le j\le s_{k,m},\sum_{\ell=1}^r w_\ell=1,w_\ell\ge0,1\le\ell\le r$.
By our assumption, the LP is infeasible.
In order to applying the Farkas Lemma (\cref{fact:Farkas}), we write the linear system as $A\bw=b$, where
\begin{align*}
\mathbf{A} = 
\begin{bmatrix}
1 & 1 & \cdots & 1 \\
q^{k,m}_1(\bv_1) & q^{k,m}_1(\bv_2) & \cdots & q^{k,m}_1(\bv_r) \\
\vdots & \vdots &\ddots & \vdots \\
q^{k,m}_{s_{k,m}}(\bv_1) & q^{k,m}_{s_{k,m}}(\bv_2) & \cdots & q^{k,m}_{s_{k,m}}(\bv_r)
\end{bmatrix}
, \bw =
\begin{bmatrix}
w_1\\
w_2\\
\vdots\\
w_r
\end{bmatrix}
, \bb =
\begin{bmatrix}
1\\
0\\
\vdots\\
0
\end{bmatrix}
.
\end{align*}
By~\cref{fact:Farkas}, the original linear system is infeasible if and only if there exists a vector $\bu=[u_0,u_1,\ldots,u_{s_{k,d}}]^{\intercal}$,
$\bu^{\intercal}\mathbf{A}\ge0$ and $\bu^{\intercal}\bb<0$,
which is equivalent to $u_0+\sum_{j=1}^{s_{k,m}}u_jq^{k,m}_j(\bv_\ell)\ge0,\forall1\le\ell\le r$ and $u_0<0$.
Let $q^*(\bv)=\sum_{j=1}^{s_{k,m}}u_jq^{k,m}_j(\bv),\bv\in\R^m$,
which is an odd polynomial of degree less than $k$.
By our definition of $q^*$,
we have that $q^*(\bv_\ell)=\sum_{j=1}^{s_{k,m}}u_jq^{k,m}_j(\bv_\ell)\ge-u_0>0,\forall1\le\ell\le r$,
which contradicts to our assumption that {\em there does not exist 
any odd polynomial $q$ of degree less than $k$ such that 
$q(\bv_\ell)>0,\forall1\le\ell\le r$}. This completes the proof.

\subsection{Proof of~\cref{lem:TV-lower-bound}}\label{ssec:TV-lower-bound}
We denote by $G(\bx)$ to be the standard Gaussian density. By definition, we have that
\begin{align*}
&\quad\dtv(D_{\bU},D_0)=(1/2)\int_{\bx\in\R^n}\sum_{y\in\{\pm1\}}|D_{\bU}(\bx,y)-D_0(\bx,y)|d\bx
\\&=(1/2)\int_{\bx\in\R^n}G(\bx)\sum_{y\in\{\pm1\}}\left|\sum_{\ell=1}^rw_\ell\mathbb{I}[\sgn(\bv_\ell^{\intercal}\bU\bx)=y]-(1/2)\right|d\bx
\\&=(1/2)\E_{\bx\sim\mathcal{N}_n}\left[\sum_{y\in\{\pm1\}}\left|\sum_{\ell=1}^rw_\ell\mathbb{I}[\sgn(\bv_\ell^{\intercal}\bU\bx)=y]-(1/2)\right|\right]\\&=(1/2)\sum_{y\in\{\pm1\}}\E_{\bx\sim\mathcal{N}_n}\left[\left|\sum_{\ell=1}^rw_\ell\mathbb{I}[\sgn(\bv_\ell^{\intercal}\bU\bx)=y]-(1/2)\right|\right].
\end{align*}
Therefore, it suffices to show that
$$\E_{\bx\sim\mathcal{N}_n}\left[\left|\sum_{\ell=1}^rw_\ell\mathbb{I}[\sgn(\bv_\ell^{\intercal}\bU\bx)=y]-(1/2)\right|\right]\ge\Omega(\Delta/r),\quad \forall y\in\{\pm1\}.$$
We assume that $w_{\ell_0}\ge1/r$ for some $\ell_0\in[r]$. Let $\bv^*$ be an arbitrary vector satisfying $\bv_{\ell_0}^{\intercal}\bv^*=0$.
We denote by
\begin{align*}
\mathcal{X}_1=\{\bx\in\R^m\mid\sgn(\bv_{\ell_0}^{\intercal}\bx)>0,\sgn(\bv_\ell^{\intercal}\bx)=\sgn(\bv_\ell^{\intercal}\bv^*),\ell\in[r]\setminus\{\ell_0\}\},\\
\mathcal{X}_2=\{\bx\in\R^m\mid\sgn(\bv_{\ell_0}^{\intercal}\bx)<0,\sgn(\bv_\ell^{\intercal}\bx)=\sgn(\bv_\ell^{\intercal}\bv^*),\ell\in[r]\setminus\{\ell_0\}\}.
\end{align*}
Roughly speaking, $\mathcal{X}_1$ and $\mathcal{X}_2$ denote the subsets of vectors
which are very close to the boundary of the halfspace with direction $\bv_{\ell_0}$
and maintain the same label with the boundary for the other halfspaces.
By definition, for any $\bx_1\in\mathcal{X}_1,\bx_2\in\mathcal{X}_2$,
we have that $$\left|\sum_{\ell=1}^rw_\ell\mathbb{I}[\sgn(\bv_\ell^{\intercal}\bx_1)=y]-\sum_{\ell=1}^rw_\ell\mathbb{I}[\sgn(\bv_\ell^{\intercal}\bx_2)=y]\right|=w_{\ell_0}\ge1/r.$$
Therefore, we have either $$\left|\sum_{\ell=1}^rw_\ell\mathbb{I}[\sgn(\bv_\ell^{\intercal}\bx_1)=y]-(1/2)\right|\ge1/2r,\quad\forall\bx_1\in\mathcal{X}_1,$$
or $$\left|\sum_{\ell=1}^rw_\ell\mathbb{I}[\sgn(\bv_\ell^{\intercal}\bx_2)=y]-(1/2)\right|\ge1/2r,\quad\forall\bx_2\in\mathcal{X}_2.$$
Since $\bU\bx$ is a standard Gaussian for any $\bU\bU^{\intercal}=\bI_m$ and $\|\bv_i+\bv_j\|_2,\|\bv_i-\bv_j\|_2\ge\Omega(\Delta),1\le i<j\le r$, we have that for $y\in\{\pm1\}$,
\begin{align*}
&\quad\E_{\bx\sim\mathcal{N}_n}\left[\left|\sum_{\ell=1}^rw_\ell\mathbb{I}[\sgn(\bv_\ell^{\intercal}\bU\bx)=y]-(1/2)\right|\right]\\&\ge\pr_{\bx\sim\mathcal{N}_n}[\bU\bx\in\mathcal{X}_1]\cdot\E_{\bx\sim\mathcal{N}_n}\left[\left|\sum_{\ell=1}^rw_\ell\mathbb{I}[\sgn(\bv_\ell^{\intercal}\bU\bx)=y]-(1/2)\right|\mid\bU\bx\in\mathcal{X}_1\right]\\&\quad+\pr_{\bx\sim\mathcal{N}_n}[\bU\bx\in\mathcal{X}_2]\cdot\E_{\bx\sim\mathcal{N}_n}\left[\left|\sum_{\ell=1}^rw_\ell\mathbb{I}[\sgn(\bv_\ell^{\intercal}\bU\bx)=y]-(1/2)\right|\mid\bU\bx\in\mathcal{X}_2\right]\\&\ge\Omega(\Delta/r).
\end{align*}

\section{Omitted Proofs from~\cref{sec:SQ-mixture-LTFs-equal}}\label{sec:omit-SQ-mixture-LTFs-equal}

We start by introducing the following technical result which provides a universal upper bound 
for the $L_2^2$-norm of the high-order gradient of any homogeneous polynomial $p\in\Omega_t^d$. 

\begin{lemma}\label{lem:higher-order-max}
For any $p\in\Omega_t^d$ and any $1\le j\le t$,
we have that $$\sup_{\|\bx\|_2=1}\left\|\frac{\partial^jp(\by)}{\partial\by^j}\right\|_2^2\le t^j(d+2t-2)^jN_{2(t-j),d}\|p\|_2^2.$$
\end{lemma}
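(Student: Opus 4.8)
\textbf{Proof plan for \cref{lem:higher-order-max}.}
The plan is to bound the $L_2^2$-norm of the $j$-th order gradient pointwise on the sphere by first controlling its \emph{average} over the sphere, and then invoking the $L^\infty$-to-$L^2$ comparison of \cref{fact:infinity-to-l2}. The key observation is that each of the $d^j$ entries of $\partial^j p(\by)/\partial\by^j$ is itself a homogeneous polynomial in $\by$ of degree $t-j$, so the squared norm $\|\partial^j p(\by)/\partial\by^j\|_2^2$ is a homogeneous polynomial of degree $2(t-j)$, which lives in a space of dimension $N_{2(t-j),d}$. Hence \cref{fact:infinity-to-l2} gives
\[
\sup_{\|\bx\|_2=1}\left\|\frac{\partial^j p(\bx)}{\partial\bx^j}\right\|_2^2 \le \sqrt{N_{2(t-j),d}}\cdot \left\|\, \left\|\frac{\partial^j p(\cdot)}{\partial\cdot^j}\right\|_2^2 \,\right\|_2 \le \sqrt{N_{2(t-j),d}}\cdot \sqrt{N_{2(t-j),d}}\cdot \E\!\left[\left\|\frac{\partial^j p(\bx)}{\partial\bx^j}\right\|_2^2\right],
\]
where I used $\|f\|_2 \le \sqrt{N}\,\|f\|_\infty^{1/2}\cdot\|f\|_\infty^{1/2}$... more cleanly: apply \cref{fact:infinity-to-l2} to the degree-$2(t-j)$ polynomial $q(\bx):=\|\partial^j p(\bx)/\partial\bx^j\|_2^2$ to get $\sup_{\|\bx\|_2=1}q(\bx)\le \sqrt{N_{2(t-j),d}}\,\|q\|_2$, and then bound $\|q\|_2 \le \sup_{\|\bx\|_2=1}|q(\bx)|^{1/2}\cdot\big(\E[q(\bx)]\big)^{1/2}\le \sqrt{\sup q}\cdot \sqrt{\E[q]}$; combining these two inequalities and solving for $\sup q$ yields $\sup_{\|\bx\|_2=1}q(\bx)\le N_{2(t-j),d}\,\E[q(\bx)]$. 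So it remains to bound $\E_{\bx\sim\mathcal U(\mathbb S^{d-1})}[\|\partial^j p(\bx)/\partial\bx^j\|_2^2]$ by $t^j(d+2t-2)^j\|p\|_2^2$.

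To bound the average of the $j$-th gradient norm, I would iterate the gradient estimate from \cref{lem:grad-bound}. That lemma gives $\E[\|\nabla p(\bx)\|_2^2]\le t(d+2t-2)\|p\|_2^2$ for $p\in\P_t^d$. The issue is that $\nabla p$ is vector-valued; I would apply the bound coordinatewise: each partial derivative $\partial_i p$ is in $\P^d_{t-1}$, so $\E[\|\nabla(\partial_i p)(\bx)\|_2^2]\le (t-1)(d+2(t-1)-2)\|\partial_i p\|_2^2 \le t(d+2t-2)\|\partial_i p\|_2^2$ (monotonicity of $s(d+2s-2)$ in $s$), and summing over $i$ gives $\E[\|\nabla^2 p(\bx)\|_F^2]\le t(d+2t-2)\,\E[\|\nabla p(\bx)\|_2^2]$. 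Iterating this $j$ times yields $\E[\|\partial^j p(\bx)/\partial\bx^j\|_2^2]\le \big(t(d+2t-2)\big)^j\|p\|_2^2 = t^j(d+2t-2)^j\|p\|_2^2$, which is exactly what we need. (One has to be slightly careful that \cref{lem:grad-bound} is stated for odd $t$; but the $\E[\|\nabla p\|_2^2]\le t(d+2t-2)\|p\|_2^2$ half of it only uses \cref{fact:integral} plus $\E[p\nabla^2 p]\ge 0$, which holds for all degrees, so this direction is fine for every $t\ge 1$, and for $t=0$ the derivative vanishes.)

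The main obstacle — and the step that needs the most care — is the bookkeeping in the iteration: making precise that ``$j$-th order gradient'' means the full tensor $\partial^j p/\partial\bx^j\in(\R^d)^{\otimes j}$ with Frobenius norm, that each scalar entry is a degree-$(t-j)$ homogeneous polynomial so that $\sum$ over entries of $\|\nabla(\text{entry})\|_2^2$ equals $\|\partial^{j+1}p/\partial\bx^{j+1}\|_F^2$, and that the monotone quantity $s\mapsto s(d+2s-2)$ lets us uniformly replace the factor at each level by $t(d+2t-2)$. A second minor point is checking the direction of \cref{fact:infinity-to-l2}: it is stated for $p\in\Omega_t^d$ (unit $L_2$-ball), so I apply it to $q/\|q\|_2$; and I should confirm $q$ is genuinely homogeneous of degree $2(t-j)$ (true since $\partial^j$ lowers degree by exactly $j$ on a degree-$t$ homogeneous polynomial and squaring-and-summing doubles it), so that it lies in $\P^d_{2(t-j)}$ of dimension $N_{2(t-j),d}$. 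Everything else is routine.
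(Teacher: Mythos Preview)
Your proposal is correct and follows essentially the same approach as the paper: pass from the sup to the expectation via \cref{fact:infinity-to-l2} combined with the self-referential bound $\|q\|_2\le\sqrt{\sup q}\cdot\sqrt{\E[q]}$, and then control $\E\big[\|\partial^j p/\partial\bx^j\|_F^2\big]$ by iterating the upper bound of \cref{lem:grad-bound} coordinatewise $j$ times. One point to make explicit (the paper does this, and you allude to it in your final paragraph): in the paper's conventions $\|\cdot\|_2$ for a tensor is the \emph{spectral} norm, and for $j\ge2$ the quantity $\|\partial^j p(\bx)/\partial\bx^j\|_2^2$ is \emph{not} a polynomial in $\bx$, so \cref{fact:infinity-to-l2} cannot be applied to it directly; you must first use $\|\cdot\|_2\le\|\cdot\|_F$ and run the entire argument with the Frobenius norm, for which $q(\bx)=\|\partial^j p(\bx)/\partial\bx^j\|_F^2$ is genuinely homogeneous of degree $2(t-j)$.
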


\begin{proof}
Note that $\|\nabla p(\bx)\|_2^2\in\Omega^d_{2(t-1)}$, by~\cref{fact:infinity-to-l2}, we have that
\begin{align*}
\sup_{\|\bx\|_2=1}\|\nabla p(\bx)\|_2^{2}\le\sqrt{N_{2(t-1),d}}\sqrt{\E[\|\nabla p(\bx)\|_2^{4}]}\le\sqrt{N_{2(t-1),d}}\sqrt{\E[\|\nabla p(\bx)\|_2^{2}]}\sqrt{\sup_{\|\bx\|_2=1}\|\nabla p(\bx)\|_2^2},
\end{align*}
which implies that $\sup_{\|\bx\|_2=1}\|\nabla p(\bx)\|_2^{2}\le N_{2(t-1),d}\E[\|\nabla p(\bx)\|_2^{2}]\le t(d+2t-2)N_{2(t-1),d}\|p\|_2^2$.

Since $\left\|\frac{\partial^jp(\bx)}{\partial\bx^j}\right\|_2^2\le\left\|\frac{\partial^jp(\bx)}{\partial\bx^j}\right\|_F^2$,
it suffices to obtain an upper bound for $\sup_{\|\bx\|_2=1}\left\|\frac{\partial^jp(\bx)}{\partial\bx^j}\right\|_F^2$.
Noting that $\left\|\frac{\partial^jp(\bx)}{\partial\bx^j}\right\|_F^2\in\Omega^d_{2(t-j)}$, by~\cref{fact:infinity-to-l2}, we have that
\begin{align*}
\sup_{\|\bx\|_2=1}\left\|\frac{\partial^jp(\bx)}{\partial\bx^j}\right\|_F^2
&\le\sqrt{N_{2(t-j),d}}\sqrt{\E\left[\left\|\frac{\partial^jp(\bx)}{\partial\bx^j}\right\|_F^4\right]}\\
&\le\sqrt{N_{2(t-j),d}}\sqrt{\E\left[\left\|\frac{\partial^jp(\bx)}{\partial\bx^j}\right\|_F^2\right]}\sqrt{\sup_{\|\bx\|_2=1}\left\|\frac{\partial^jp(\bx)}{\partial\bx^j}\right\|_F^2},
\end{align*}
which implies that $\sup_{\bx\in\mathbb{S}^{d-1}}\left\|\frac{\partial^jp(\bx)}{\partial\bx^j}\right\|_F^2\le N_{2(t-j),d}\E\left[\left\|\frac{\partial^jp(\bx)}{\partial\bx^j}\right\|_F^2\right]$.
Noting that $\frac{\partial p(\bx)}{\partial x_i}\in\Omega^d_{t-1}$, by~\cref{lem:grad-bound}, we have that
\begin{align*}
&\quad\E\left[\left\|\frac{\partial^2p(\bx)}{\partial\bx^2}\right\|_F^2\right]=\E\left[\sum_{i_1,i_2\in[d]}\left(\frac{\partial^2p(\bx)}{\partial x_{i_1}\partial x_{i_2}}\right)^2\right]=\sum_{i_1=1}^d\E\left[\sum_{i_2=1}^d\left(\frac{\partial}{\partial x_{i_2}}\left(\frac{\partial p(\bx)}{\partial x_{i_1}}\right)\right)^2\right]\\&\le (t-1)(d+2t-4)\sum_{i_1=1}^d\E\left[\left(\frac{\partial p(\bx)}{\partial x_{i_1}}\right)^2\right]\le t(d+2t-2)\E[\|\nabla p(\bx)\|_2^2]\le t^2(d+2t-2)^2\|p\|_2^2 \;.
\end{align*}
In general, noting that $\frac{\partial^{j-1} p(\bx)}{\partial x_{i_1}\cdots\partial x_{i_{j-1}}}\in\Omega^d_{t-j+1}$, by~\cref{lem:grad-bound}, we have that
\begin{align*}
&\quad\E\left[\left\|\frac{\partial^jp(\bx)}{\partial\bx^j}\right\|_F^2\right]
=\E\left[\sum_{i_1,\ldots,i_j\in[d]}\left(\frac{\partial^2p(\bx)}{\partial x_{i_1}\ldots\partial x_{i_j}}\right)^2\right]
\\&=\sum_{i_1,\ldots,i_{j-1}\in[d]}\E\left[\sum_{i_j=1}^d\left(\frac{\partial}{\partial x_{i_j}}\left(\frac{\partial^{j-1} p(\bx)}{\partial x_{i_1}\ldots x_{i_{j-1}}}\right)\right)^2\right]
\\&\le(t-j+1)(d+2(t-j))\sum_{i_1,\ldots,i_{j-1}\in[d]}\E\left[\left(\frac{\partial^{j-1} p(\bx)}{\partial x_{i_1}\ldots x_{i_{j-1}}}\right)^2\right]
\\&\le t(d+2t-2)\E\left[\left\|\frac{\partial^{j-1}p(\bx)}{\partial\bx^{j-1}}\right\|_F^2\right]\le t^j(d+2t-2)^j\|p\|_2^2 \;.
\end{align*}
Therefore, we have that
\[ \sup_{\|\bx\|_2=1}\left\|\frac{\partial^jp(\bx)}{\partial\bx^j}\right\|_F^2\le N_{2(t-j),d}\E\left[\left\|\frac{\partial^jp(\bx)}{\partial\bx^j}\right\|_F^2\right]\le t^j(d+2t-2)^jN_{2(t-j),d}\|p\|_2^2 \;.\] 
This completes the proof.
\end{proof}

\subsection{Proof of~\cref{lem:poly-concentration}}\label{ssec:poly-concentration}

Let $p_1,\ldots,p_N\in\Omega$ be an orthonormal basis,
i.e., $\E[p_i(\bx)p_j(\bx)]=\I[i=j]$.
Let vector $\bp(\bx)\eqdef[p_1(\bx),\ldots,p_N(\bx)]$.
We have that $\E[\bp(\bx)]=0$ and $\mathbf{Cov}[\bp(\bx)]=\bI_N$.
\begin{align*}
\pr\left[\left\|\frac{1}{r}\sum_{i=1}^r\bp(\bx_i)\right\|_2\ge\eta\right]
&=\pr\left[\frac{1}{r^2}\left\|\sum_{i=1}^r\bp(\bx_i)\right\|_2^2\ge\eta^2\right]
=\pr\left[\frac{1}{r^2}\sum_{j=1}^{N}\left(\sum_{i=1}^rp_j(\bx_i)\right)^2\ge\eta^2\right]
\\&\le\frac{1}{\eta^2r^2}\sum_{j=1}^{N}\E\left[\left(\sum_{i=1}^rp_j(\bx_i)\right)^2\right]=\frac{N}{r\eta^2}.
\end{align*}
We now assume that $\frac{1}{r}\left\|\sum_{i=1}^r\bp(\bx_i)\right\|_2\le\eta$.
Let $p\in\Omega$ be an arbitrary polynomial. We can write $p(\bx)=\sum_{j=1}^N\alpha_jp_j(\bx)$, where $\|p\|_2^2=\sum_{j=1}^N \alpha_j^2$. We have that
\begin{align*}
&\quad\frac{1}{r}\left|\sum_{i=1}^rp(\bx_i)\right|=\frac{1}{r}\left|\sum_{i=1}^r\sum_{j=1}^{N}\alpha_jp_j(\bx_i)\right|\le\frac{1}{r}\sum_{j=1}^{N}|\alpha_j|\left|\sum_{i=1}^rp_j(\bx_i)\right|
\\&\le\frac{1}{r}\sqrt{\sum_{j=1}^{N}\alpha_j^2}\sqrt{\sum_{j=1}^{N}\left(\sum\nolimits_{i=1}^rp_j(\bx_i)\right)^2}
=\frac{\|p\|_2}{r}\left\|\sum\nolimits_{i=1}^r\bp(\bx_i)\right\|_2
\le\eta\|p\|_2,
\end{align*}
where the second inequality follows from Cauchy-Schwarz.
This completes the proof.

\subsection{Proof of~\cref{lem:increment}}\label{ssec:increment}

By definition of $\nabla_o p(\by)$, we have that
\begin{align*}
&\quad p(\bz)-p(\by)
=\frac{p(\by+\delta\cdot\nabla_o p(\by))}{\|\by+\delta\cdot\nabla_o p(\by)\|_2^t}-p(\by)
\\&=\frac{p(\by+\delta\cdot\nabla_o p(\by))-p(\by)}{(1+\delta^2\|\nabla_op(\by)\|_2^2)^{t/2}}-\left(1-\frac{1}{(1+\delta^2\|\nabla_op(\by)\|_2^2)^{t/2}}\right)p(\by)\\
&\ge\frac{p(\by+\delta\cdot\nabla_o p(\by))-p(\by)}{(1+\delta^2\|\nabla_op(\by)\|_2^2)^{t/2}}-\left(1-\exp(-t\delta^2\|\nabla_op(\by)\|_2^2/2)\right)|p(\by)|\\&\ge\frac{p(\by+\delta\cdot\nabla_o p(\by))-p(\by)}{(1+\delta^2\|\nabla_op(\by)\|_2^2)^{t/2}}-t\delta^2\|\nabla_op(\by)\|_2^2|p(\by)|/2 \;.
\end{align*}
We bound $p(\by+\delta\cdot\nabla_o p(\by))-p(\by)$ as follows:
Let $f(s)=p(\by+s\bv)$ for some unit vector $v\in\R^d$. 
Noting that $p$ is a degree-$t$ homogeneous polynomial, by Taylor expansion, we have that $f(s)=f(0)+\sum_{j=1}^t \frac{f^{(j)}(0)s^j}{j!}$.
By elementary calculation, we have that $f'(0)=\bv^{\intercal}\nabla p(\by),f''(0)=\bv^{\intercal}\frac{\partial^2p(\by)}{\partial\by^2}\bv,\ldots,f^{(t)}(0)=\left\langle\bv^{\otimes t},\frac{\partial^t p(\by)}{\partial\by^t}\right\rangle$. By taking $\bv$ to be the direction of $\nabla_o p(\by)$, i.e., $\bv=\frac{\nabla_op(\by)}{\|\nabla_op(\by)\|_2}$, we have that
\begin{align*}
p(\by+\delta\cdot\nabla_op(\by))-p(\by)=f(\delta\|\nabla_op(\by)\|_2)-f(0)=\sum_{j=1}^t\frac{\left\langle\nabla_op(\by)^{\otimes j},\frac{\partial^j p(\by)}{\partial\by^j}\right\rangle\delta^j}{j!} \;.
\end{align*}
Noting that the first order term is $\delta\|\nabla_o p(\by)\|_2^2$, 
it suffices to show that the absolute value of 
$\sum_{j=2}^t\frac{\left\langle\nabla_op(\by)^{\otimes j},\frac{\partial^j p(\by)}{\partial\by^j}\right\rangle\delta^j}{j!}$ is sufficiently small.
Applying~\cref{lem:higher-order-max} yields
\begin{align*}
&\quad\left|\sum_{j=2}^t\frac{\left\langle\nabla_op(\by)^{\otimes j},\nabla^jp(\by)\right\rangle\delta^j}{j!}\right|
\le\sum_{j=2}^t\frac{\delta^j\|\nabla_op(\by)\|_2^j\left\|\frac{\partial^jp(\by)}{\partial\by^j}\right\|_2}{j!}
\\&=\delta\|\nabla_o p(\by)\|_2^2\sum_{j=2}^t\frac{\delta^{j-1}\|\nabla_op(\by)\|_2^{j-2}\left\|\frac{\partial^jp(\by)}{\partial\by^j}\right\|_2}{j!}
\\&\le\delta\|\nabla_o p(\by)\|_2^2\left(\sum_{j=2}^t\frac{\delta^{j-1}\|\nabla p(\by)\|^{2j-4}_2}{2j!}+\sum_{j=2}^t\frac{\delta^{j-1}\left\|\frac{\partial^jp(\by)}{\partial\by^j}\right\|_2^2}{2j!}\right)
\\&\le\delta\|\nabla_o p(\by)\|_2^2\left(\sum_{j=2}^t\frac{\delta^{j-1}(t(d+2t-2)N_{2(t-1),d})^{j-2}}{2j!}+\sum_{j=2}^t\frac{\delta^{j-1} t^j(d+2t-2)^jN_{2(t-j),d}}{2j!}\right) \;.
\end{align*}
Therefore, we will have that 
$p(\by+\delta\cdot\nabla_op(\by))-p(\by)\ge C' \; \delta\|\nabla_o p(\by)\|_2^2$ 
for some universal constant $0<C'<1$, as long as $\delta\le1/N_{2t,d}^2$. 
Thus, by~\cref{lem:grad-bound}, we have that
\begin{align*}
p(\bz)-p(\by)&\ge\frac{p(\by+\delta\cdot\nabla_o p(\by))-p(\by)}{(1+\delta^2\|\nabla_op(\by)\|_2^2)^{t/2}}-t\delta^2\|\nabla_op(\by)\|_2^2|p(\by)|/2
\\&=\frac{C'\delta\|\nabla_op(\by)\|_2^2}{(1+\delta^2\|\nabla_op(\by)\|_2^2)^{t/2}}-t\delta^2\|\nabla_op(\by)\|_2^2|p(\by)|/2
\\&=C'\delta\|\nabla_o p(\by)\|_2^2\exp(-t\delta^2\|\nabla_o p(\by)\|_2^2/2)-t\delta^2\|\nabla_op(\by)\|_2^2|p(\by)|/2
\\&\ge C'\delta\|\nabla_o p(\by)\|_2^2\left(1-t\delta^2\|\nabla p(\by)\|_2^2/2-t\delta|p(\by)|/2C'\right)
\\&\ge\delta\|\nabla_o p(\by)\|_2^2\left(C'(1-t^2\delta^2(d+2t-2)N_{2(t-1),d}/2)-t\delta\sqrt{N_{t,d}}/2\right)
\\&\ge C\delta\|\nabla_o p(\by)\|_2^2 \;,
\end{align*}
for some universal constant $0<C<1$, as long as $\delta\le1/N_{2t,d}^2$.
This completes the proof.

\subsection{Proof of~\cref{thm:sph-construction}}\label{ssec:sph-construction}
Let $p\in\partial\Omega_t^d$.
Since 
\[ \|\nabla p(\by)\|_2^2=\|\nabla_o p(\by)\|_2^2+\langle\by,\nabla p(\by)\rangle^2
=\|\nabla_o p(\by)\|_2^2+t^2p(\by)^2 \;, \]
by~\cref{lem:increment},
we have that $p(\bz)\ge p(\by)+C\delta(\|\nabla p(\by)\|_2^2-t^2p(\by)^2)$. 
Let \[ q(\by)=p(\by)+C\delta\|\nabla_o p(\by)\|_2^2=p(\by)+C\delta(\|\nabla p(\by)\|_2^2-t^2p(\by)^2) \;.\]
By definition, we have that 
$q(\by)-\E[q(\by)]=p(\by)+C\delta(\|\nabla p(\by)\|_2^2-t^2p(\by)^2)-C\delta\E[\|\nabla_o p(\by)\|_2^2]$,
which is a polynomial of degree at most $2t$ and contains only monomials of degree $2t,2t-2,t,0$.
Let $\Omega$ be the subspace of polynomials in $d$-variables containing all monomials of degree $2t,2t-2,t,0$.
In this way, the dimension of $\Omega$ is 
\[ N=\binom{d+2t-1}{d-1}+\binom{d+2t-3}{d-1}+\binom{d+t-1}{d-1}+1\le3N_{2t,d} \;. \]
Applying~\cref{lem:poly-concentration} yields that with probability at least $1-\frac{N}{r\eta^2}$,
we have that $\left|\frac{1}{r}\sum_{i=1}^rq(\by_i)-\E[q(\by)]\right|\le\eta\|q(\by)-\E[q(\by)]\|_2,\forall q\in\Omega$.
Therefore, we have that
\begin{align*}
\frac{1}{r}\sum_{i=1}^rp(\bz_i)\ge\frac{1}{r}\sum_{i=1}^rq(\by_i)\ge\E[q(\by)]-\eta\|q(\by)-\E[q(\by)]\|_2=\E[q(\by)]-\eta\sqrt{\E[q(\by)^2]-\E[q(\by)]^2}.
\end{align*}
By elementary calculation, we have that
\begin{align*}
&\quad\E[q(\by)^2]-\E[q(\by)]^2=\E[(p(\by)+C\delta\|\nabla_o p(\by)\|_2^2)^2]-C^2\delta^2\E[\|\nabla_o p(\by)\|_2^2]^2\\&=\E[p(\by)^2]+2C\delta\E[p(\by)\|\nabla_o p(\by)\|_2^2]+C^2\delta^2\E[\|\nabla_o p(\by)\|_2^4]-C^2\delta^2\E[\|\nabla_o p(\by)\|_2^2]^2\\&=\E[p(\by)^2]+C^2\delta^2\E[\|\nabla_o p(\by)\|_2^4]-C^2\delta^2\E[\|\nabla_o p(\by)\|_2^2]^2\\&=1+C^2\delta^2\E[\|\nabla_o p(\by)\|_2^4]-C^2\delta^2\E[\|\nabla_o p(\by)\|_2^2]^2,
\end{align*}
where the second equality is due to $p(\by)$ being odd and 
\begin{align*}
\|\nabla_o p(-\by)\|_2^2&=\|\nabla p(-\by)\|_2^2-t^2p(-\by)^2 
=\|\nabla(-p(\by))\|_2^2-t^2(-p(\by))^2\\&=\|\nabla p(\by)\|_2^2-t^2p(\by)^2=\|\nabla_o p(\by)\|_2^2 \;.
\end{align*}
By~\cref{lem:grad-bound} and~\cref{lem:higher-order-max}, we have that
$\E[\|\nabla_o p(\by)\|_2^2]\ge d-1$ and 
$$\E[\|\nabla_o p(\by)\|_2^4]\le\E[\|\nabla p(\by)\|_2^2]\sup_{\|\by\|_2=1}\|\nabla p(\by)\|_2^2\le t^2(d+2t-2)^2N_{2(t-1),d}.$$
Therefore, we have that
\begin{align*}
\frac{1}{r}\sum_{i=1}^rp(\bz_i)&\ge \E[q(\by)]-\eta\sqrt{\E[q(\by)^2]-\E[q(\by)]^2}\\
&\ge C\delta\E[\|\nabla_o p(\by)\|_2^2]-\eta\sqrt{1+C^2\delta^2\E[\|\nabla_o p(\by)\|_2^4]-C^2\delta^2\E[\|\nabla_o p(\by)\|_2^2]^2}\\
&\ge C\delta(d-1)-\eta\sqrt{1+C^2\delta^2(t^2(d+2t-2)^2N_{2(t-1),d}-(d-1)^2)}\\
&=C \delta\left(d-1-\eta\sqrt{\frac{1}{C^2\delta^2}+t^2(d+2t-2)^2N_{2(t-1),d}-(d-1)^2}\right) \;.
\end{align*}
Taking $\delta=1/N_{2t,d}^2$ and $\eta=\frac{Cd}{3N_{2t,d}^2}$ yields that with probability at least 
\[ 1-\frac{N}{r\eta^2}\ge1-\frac{27}{C^2d^2}\ge99/100 \;, \]
\begin{align*}
\frac{1}{r}\sum_{i=1}^rp(\bz_i)&
\ge C\delta\left(d-1-\eta\sqrt{N_{2t,d}^4/C^2+t^2(d+2t-2)^2N_{2(t-1),d}-(d-1)^2}\right)\\
&> C\delta\left(d/2-\eta\sqrt{2N_{2t,d}^4/C^2}\right)\ge0 \;.
\end{align*}

\end{document}